\documentclass[sigconf]{acmart}

\usepackage{hyperref}
\usepackage{cleveref}

\usepackage{microtype}
\usepackage{graphicx}
\usepackage{enumitem}
\usepackage{subfigure}
\usepackage{booktabs}

\usepackage{thm-restate}

\usepackage[algo2e,ruled,vlined,lined,linesnumbered]{algorithm2e}

\usepackage{amsmath}

\usepackage{amssymb}
\usepackage{mathtools}
\usepackage{amsthm}
\usepackage{bbm}

\theoremstyle{plain}
\newtheorem{theorem}{Theorem}[section]

\newtheorem{corollary}{Corollary}[theorem]
\theoremstyle{definition}

\newtheorem{assumption}{Assumption}

\newtheorem{remark}{Remark}[theorem]

\definecolor{babyblueeyes}{rgb}{0.63, 0.79, 0.95}
\definecolor{celadon}{rgb}{0.67, 0.88, 0.69}

\usepackage{tcolorbox}

\usepackage{hyperref}
\usepackage{diagbox}
\usepackage{makecell}

\usepackage[textsize=tiny]{todonotes}

\AtBeginDocument{%
  \providecommand\BibTeX{{%
    \normalfont B\kern-0.5em{\scshape i\kern-0.25em b}\kern-0.8em\TeX}}}

\setcopyright{acmcopyright}
\copyrightyear{2024}
\acmYear{2024}
\acmDOI{XXXXXXX.XXXXXXX}

%
%
\acmPrice{15.00}
\acmISBN{978-1-4503-XXXX-X/18/06}

\begin{document}

\title{Client-Centric Federated Adaptive Optimization}

\author{Jianhui Sun}
\affiliation{
\institution{Department of Computer Scienc}
  \institution{University of Virginia}
  \city{Charlottesville}
  \state{Virginia}
  \country{USA}
}
\email{js9gu@virginia.edu}

\author{Xidong Wu}
\affiliation{
\institution{Department of Electrical and Computer Engineering}
  \institution{University of Pittsburgh}
  \city{Pittsburgh}
  \state{Pennsylvania}
  \country{USA}
}
\email{xidong_wu@outlook.com}

\author{Heng Huang}
\affiliation{
\institution{Department of Computer Science}
  \institution{University of Maryland}
  \city{College Park}
  \state{Maryland}
  \country{USA}
}
\email{henghuanghh@gmail.com}

\author{Aidong Zhang}
\affiliation{
\institution{Department of Computer Scienc}
  \institution{University of Virginia}
  \city{Charlottesville}
  \state{Virginia}
  \country{USA}
}
\email{aidong@virginia.edu}

\begin{abstract}
Federated Learning (FL) is a distributed learning paradigm where clients collaboratively train a model while keeping their own data private. With an increasing scale of clients and models, FL encounters two key challenges, client drift due to high degree of statistical/system heterogeneity, and lack of adaptivity. However, most existing FL research is based on unrealistic assumptions that virtually ignore system heterogeneity. In this paper, we propose Client-Centric Federated Adaptive Optimization, which is a class of novel federated adaptive optimization approaches. We enable several features in this framework such as arbitrary client participation, asynchronous server aggregation, and heterogeneous local computing, which are ubiquitous in real-world FL systems but are missed in most existing works. We provide a rigorous convergence analysis of our proposed framework for general nonconvex objectives, which is shown to converge with the best known rate. Extensive experiments show that our approaches consistently outperform the baseline by a large margin across benchmarks.
\end{abstract}

\keywords{Federated Learning, Federated Averaging, System Heterogeneity, Asynchronous Distributed Computing, Client-Centric Federated Adaptive Optimization}

\maketitle
\section{Introduction}
\label{sec:intro}

Federated learning (FL) is a distributed learning setting where many clients collaboratively train a machine learning model under the coordination of a central server, while keeping the training data decentralized and private \cite{li2021survey}. In cross-device FL setting, clients are usually an enormous number of edge devices, which own data that are critical to train a large-scale model on server but are not allowed to share the private data due to privacy concerns or regulatory requirements \cite{european_commission_regulation_2016,California_Consumer_Privacy_Act_CCPA}. 

Federated Averaging (FedAvg) \cite{McMahan2017FedAvg} solves this problem by taking a ``computation then aggregation'' approach, i.e., having each client train locally with its own data, and the server aggregates the local models every once in a while. FedAvg and its variants \cite{Wang20FedNova,karimireddy2020scaffold,Li20FedProx,reddi2020adaptive} enjoy communication efficiency as well as an appealing ``linear speedup'', i.e., the convergence accelerates with increasing number of clients and local steps, even with only a small fraction of clients participating in each round \cite{li2020federated,yang2021achieving}, and have thus become the most popular FL algorithms.

In spite of the empirical success, FedAvg and its variants face the following two key challenges that may severely destabilize the training and deteriorate its performances,

\begin{itemize}[leftmargin=*]
    \item \textbf{High degrees of heterogeneity}. The \textit{statistical heterogeneity} (i.e., the local data distributions of clients are non \textit{i.i.d.}), and \textit{system heterogeneity} (i.e., the completely different levels of system characteristics such as battery level, computational/memory capacity, network connection), are both ubiquitous in large-scale edge networks. Such high degrees of heterogeneity may result in \textit{client drift}, where local client models move away from globally optimal models \cite{karimireddy2020scaffold,reddi2020adaptive,liu2024robust}. FedAvg converges slowly or even diverges in presence of client drift.
    \item \textbf{Lack of adaptivity}. Despite its simplicity, there is a disadvantage that FedAvg scales its gradient uniformly and applies an identical learning rate to all features throughout the entire training process, which is similar to its non-federated counterpart Stochastic Gradient Descent (SGD). Such isotropic training leads to inferior performances when the feature space is sparse, or there exists a few dominant feature directions, which is quite common in training large-scale overparameterized deep models such as \cite{He2016DeepResNet,dosovitskiy2021VIT,devlin2018bert,brown2020GPT}.
\end{itemize}

Though various attempts have been made to tackle part of the above challenges, most of them belong to a regime which we refer to as \textbf{Server-Centric FL}. In server-centric FL, the following unrealistic assumptions on server-client coordination have been implicitly made: (1) each client is available upon the server's request, and the server determines the sampling scheme of clients; and (2) all clients conduct a highly synchronized and homogeneous local computing, i.e., all clients synchronize with server at each round and then run an identical number of local epochs. Server-centric FL is an idealized system in which the server orchestrates clients who have minimal independence. 

In spite of the wide adoption of server-centric FL in existing literature \cite{McMahan2017FedAvg,stich2018localSGD,li2020federated,yang2021achieving,Hsu2019MeasuringTE,Wang2020SlowMo,Li20FedProx,bao2022doubly,karimireddy2020scaffold,Wang20FedNova,reddi2020adaptive,wang22adaptive,zhang2020fedpd,acar2021feddyn,huang2024overcoming,wu2024solving,zhou2022federated,wu23fcso,sun2024role,che2023multimodal, mei2024using,wang2022fedkc}, it is nevertheless an over-simplification of real-world FL deployments. First, clients mostly determine whether to participate subject only to their own conditions and may be unavailable due to reasons that are completely unpredictable by the server e.g., low battery or no Wi-Fi connection. It is therefore impossible for server to dictate the participation scheme as in server-centric FL. Second, synchronization and homogeneity of local computing is inefficient and unnecessary as clients all have various levels of computing capability. Enforcement of synchrony results in ``straggler effect'', i.e., slower clients lock the entire training process. 

Most existing works which attempt to mitigate \textit{statistical heterogeneity} \cite{li2020federated,yang2021achieving,Hsu2019MeasuringTE,Li20FedProx, qi2022stochastic, zhou2023every} or add adaptivity \cite{reddi2020adaptive,wang22adaptive} are in the server-centric FL regime, which virtually ignore the ubiquitous \textit{system heterogeneity} in large-scale FL deployments.

In light of the limitations of server-centric FL, we propose an FL regime which we refer to as \textbf{Client-Centric Federated Learning} (CC-FL) to have a more precise characterization of real-world settings. And to further address the two key challenges FL algorithms encounter, i.e., client drift and lack of adaptivity, we propose \textbf{Client-Centric Federated Adaptive Optimization}, which is a general framework that consists of a class of novel federated optimization approaches, such as CC-FedAdam/CC-FedAdagrad/CC-FedAMS, which are formalized in Algorithm \ref{alg:cc_fed_adaptive_algs}.

Specifically, in CC-Federated Adaptive Optimization, we enable several unique features which are missed by most existing works: (1) \textit{arbitrarily heterogeneous local computation}, client participates only when it intends to, and each client can self-determine a time-varying and device-dependent number of local epochs; (2) \textit{asynchronous aggregation}, each participating client can work with an asynchronous view of the global model from an outdated timestamp; and (3) \textit{concurrent server-client optimization}, global optimization on server happens concurrently with local update by client, which avoids stragglers from stopping the entire training process. Each client takes an independent role in deciding the participation, and how much computation it carries out in this regime, which is why we refer to it as Client-Centric FL (CC-FL).

One key ingredient we incorporate is the server-side adaptive optimization, which is extremely helpful in mitigating client drift and adding adaptivity when training large-scale overparameterized models. Another critical advantage is that as the design of adaptive optimizers leverages the average of historical gradient information, the global update no longer relies only on current model update as in FedAvg baseline. Note that historical information would effectively regularize the global update from going wild when there is unexpected client distributional drift. Data-driven model training often leads to poor out-of-distribution performance \cite{zheng2022causally}. 

We then provide the convergence analysis for CC-Federated Adaptive Optimization. Our theory reveals key factors that affect the performances, and shows that our proposed approach obtains the best known convergence rate. We are unaware of any existing analysis that studies federated adaptive optimization with asynchrony and heterogeneous local computing.

Finally, extensive experiments show that any of the proposed CC-FedAdam/CC-FedAdagrad/CC-FedAMS can outperform FedAvg and its variants by a large margin across benchmarks. The improvement is consistent across various levels of statistical/system heterogeneity. We also carry out an exhaustive hyperparameter sensitivity analysis and our approach turns out to be quite easy to tune to obtain a much better performance than FedAvg.

Our main contributions can be summarized as follow,

\begin{itemize}[leftmargin=*]
    \item We propose a class of novel FL optimization approaches, which is effective in mitigating the client drift and lack of adaptivity issues in presence of system heterogeneity.
    \item We show our approach matches a best known convergence rate $\mathcal{O}\left(\sqrt{\frac{1}{mKT}}\right)$ for general nonconvex objectives. To our best knowledge, this is the first convergence analysis on federated adaptive optimization with both statistical and system heterogeneity \footnote{$\mathcal{O}\left(\sqrt{\frac{1}{mKT}}\right)$ is w.r.t total rounds $T$, number of local epochs $K$, and number of responsive clients $m$. Please refer to Section \ref{sec:convergence} for details.}.
    \item Empirical results demonstrate that our approaches consistently outperform widely used baseline by a large margin, across benchmarks and levels of client heterogeneity.
\end{itemize}

\textbf{Organization}. The rest of the paper is organized as follows. In Section \ref{sec:background}, we introduce background that is pertinent to our proposed algorithm. In Section \ref{sec:client-centric}, we first discuss the limitations of existing server-centric FL algorithms and then introduce our proposed \textbf{Client-Centric Federated Adaptive Optimization}. In Section \ref{sec:convergence}, we provide the convergence analysis of our proposed approach, followed by Section \ref{sec:exp}, where we provide experimental results that validate the effectiveness of our proposed algorithm. We discuss the related works in Section \ref{sec:related_work}, and conclude our works in Section \ref{sec:conclusion}. We defer the proof of our convergence analysis and extra related works/experiments to Appendix due to space limit.

\section{Background}
\label{sec:background}

\subsection{Federated Averaging (FedAvg)}
\label{subsec:background_fedavg}

Most Federated Learning problems can be formalized as,
 
\begin{equation}
\begin{gathered}
\min_{x\in\mathbb{R}^d}f(x)\triangleq\frac{1}{n}\sum_{i=1}^n f_i(x), \quad
\text{where} \quad f_i(x)=\mathbb{E}_{\xi\sim \mathcal{D}_i} f_i(x,\xi).
\end{gathered}
\label{fed_min_objective}
\end{equation}
where $n$ is the total number of clients and $x$ is the model parameter with $d$ as its dimension. Each client $i$ is associated with a local data distribution $\mathcal{D}_i$ and a local objective function $f_i(x)=\mathbb{E}_{\xi\sim \mathcal{D}_i} f_i(x,\xi)$. The global objective function is the averaged objective among all clients. We consider the general \textit{non i.i.d.} setting, i.e., $\mathcal{D}_i$ can be completely different from $\mathcal{D}_j$ when $i\neq j$.

Federated Averaging (FedAvg) \cite{McMahan2017FedAvg} is the first and probably most popular algorithm to optimize \eqref{fed_min_objective}, formalized in Algorithm \ref{fed_adaptive_alg}. Suppose the total number of communication rounds is $T$, at each round $t\in\{1,\dots,T\}$, FedAvg is composed of \textbf{client-level optimization} and \textbf{server-side optimization}. Specifically, at round $t$, server randomly samples a subset of clients $\mathcal{S}_t$ and sends the global model $x_t$ to each participating client,

\begin{itemize}[leftmargin=*]
    \item \textbf{Client-level Optimization}. Each participating client $i\in\mathcal{S}_t$ initializes the global model at $x_{t,0}^i\gets x_t$, where $x_{t,k}^i$ denotes the $i$-th client's local model at $k$-th local step. Each client $i$ would then conduct $K$ steps of local SGD, and updates $x_{t,k+1}^i=x_{t,k}^i-\eta_l g_{t,k}^i$, where $\eta_l$ is the local learning rate. Client then computes the model difference $\Delta_t^i=x_t-x_{t,K}^i$.
    \item \textbf{Server-side Optimization}. Server collects the model differences from all participating clients $\{\Delta_t^i\}_{i\in\mathcal{S}_t}$, and aggregates by averaging, i.e., $\Delta_t=\frac{1}{ | \mathcal{S}_t  |}\sum_{i\in \mathcal{S}_t}\Delta_t^i$. FedAvg algorithm then simply updates the global model by $x_{t+1}=x_t-\eta \Delta_t$, where $\eta$ is the global learning rate.
\end{itemize}

Note that in vanilla FedAvg \cite{McMahan2017FedAvg}, $\eta=1$ throughout the entire training process, which is equivalent to directly averaging the local model $x_{t,K}^i$.  If we regard averaged model difference $\Delta_t$ as a ``pseudo-gradient'', another interpretation of FedAvg's server-side optimization $x_{t+1}=x_t-\eta \Delta_t$ is a one-step SGD with constant learning rate 1.

Interpreting FedAvg as a one-step SGD with constant learning rate 1 would immediately inspire us to enable a more flexible optimizer other than SGD, e.g. \cite{yang2021achieving} studies FedAvg with two-sided learning rates (i.e., FedAvg with $\eta$ that may not be 1), and shows a proper selection of $\eta$ enables FedAvg to achieve the best known convergence rate for general nonconvex objective functions. FedAvg with Momentum (FedAvgM) \cite{Hsu2019MeasuringTE} is proposed to augment momentum, which is formalized in Algorithm \ref{fed_adaptive_alg}. \cite{Hsu2019MeasuringTE,Wang2020SlowMo} empirically show the convincing performances of FedAvgM, especially when the clients have highly heterogeneous data distributions.

\subsection{Federated Adaptive Optimization}
\label{subsec:background_fed_adaptive}

The key limitation of SGD and SGD momentum is that they apply an identical learning rate to all features uniformly. When the feature space is sparse or heterogeneous, which is ubiquitous when training large-scale models, the lack of adaptivity may lead to limited training speed and inferior learning performances. In light of such limitations, adaptive optimization approaches propose to scale the gradient by square roots of some form of the average of the squared values of past gradients, to apply an adaptive per-feature learning rate. Examples such as AdaGrad \citep{duchi11adaptive} and Adam \citep{Kingma2015AdamAM}, have long become the default optimizer in many learning tasks, due to fast convergence speed \cite{Wilson2017Generalization}. 

\begin{algorithm2e}
\SetAlgoVlined
\KwIn{\\
Number of clients $n$, objective function $f(x)=\frac{1}{n}\sum_{i=1}^n f_i(x)$\; 
Number of communication rounds $T$, \textbf{local} learning rate $\eta_l$, \textbf{local} updating number $K$\;
Initialization $x_0$, \textbf{global} learning rate $\eta$, and $\beta$, $\gamma$, $\epsilon$;}
\SetAlgoLined
\For{$t\in\{1,...,T\}$}
{   
    Randomly sample a subset $\mathcal{S}_t$ of clients\;
    
    Server send $x_t$ to subset $\mathcal{S}_t$ of clients\;
    
    \For{each client $i\in \mathcal{S}_t$}
    {
    Initialize $x_{t,0}^i \gets x_t$\;
    
    \For{$k \in \{ 0,1,...,K-1 \}$}
    {
    Randomly sample a batch $\xi_{t,k}^i$\;
    Compute $g_{t,k}^i=\nabla f_i(x_{t,k}^i\;\xi_{t,k}^i)$\;
    Update $x_{t,k+1}^i=x_{t,k}^i-\eta_l g_{t,k}^i$\;
    }
    $\Delta_t^i=x_t-x_{t,K}^i$\;
    }
    
    Server aggregates $\Delta_t=\frac{1}{| \mathcal{S}_t |}\sum_{i\in \mathcal{S}_t}\Delta_t^i$\;

    \colorbox{yellow}{Update $x_{t+1}=x_t-\eta \Delta_t$}

    \colorbox{green}{$m_{t}=(1-\beta)\Delta_{t}+\beta m_{t-1}$}

    \colorbox{green}{Update $x_{t+1}=x_t-\eta  m_t$}

    \colorbox{cyan}{$m_{t}=(1-\beta)\Delta_{t}+\beta m_{t-1}$}
    
    \colorbox{cyan}{$v_{t}=(1-\gamma)\Delta_{t}^2+\gamma v_{t-1}$}
    
    \colorbox{cyan}{Update $x_{t+1}=x_t-\eta \frac{m_t}{\sqrt{v_t}+\epsilon}$}
    
}
return $x_T$
\caption{\colorbox{yellow}{FedAvg} \citep{McMahan2017FedAvg}, \colorbox{green}{FedAvgM} \citep{Hsu2019MeasuringTE}, \colorbox{cyan}{FedAdam} \citep{reddi2020adaptive}}
\label{fed_adaptive_alg}
\end{algorithm2e}

Inspired by its convincing empirical performances in non-federated settings, existing efforts have been made to combine adaptive optimization approaches with federated learning. \cite{reddi2020adaptive,Chen20LocalAMSGrad,wang22adaptive} propose several federated adaptive optimization approaches, e.g., FedAdam, FedAdagrad, FedYogi, FedAMS, which essentially applies popular non-federated adaptive optimizers Adam \citep{Kingma2015AdamAM}, Yogi \cite{Zaheer18Yogi}, Adagrad  \citep{duchi11adaptive}, and AMSGrad \cite{reddi18adam_convergence} to server-level optimization, respectively. 

For expository purposes, we display FedAdam in Algorithm \ref{fed_adaptive_alg} (other variants can be formulated similarly). Specifically, the server regards $\Delta_t$ as a pseudo-gradient, and updates the global model by Adam optimizer, where $m_t$ is the ordinary ``momentum buffer'' as in FedAvgM, $v_t$ stores the accumulated squared values of past gradients, and the hyperparameter $\epsilon>0$ controls the \textit{degree of adaptivity}, with smaller $\epsilon$ representing higher degrees of adaptivity. By updating the global model with $x_{t+1}=x_t-\eta \frac{m_t}{\sqrt{v_t}+\epsilon}$, federated adaptive optimization mitigates client drift, and the server could adaptively adjust the learning rate on a per-feature basis, which is shown to achieve superior performances than FedAvg \cite{reddi2020adaptive}.

\section{Client-Centric Federated Adaptive Optimization}
\label{sec:client-centric}

In Section \ref{subsec:server-centric-limitation}, we first introduce the key limitations of existing FL algorithms, which requires heavy server-client coordination and ignores system heterogeneity. In light of the limitations, we then introduce our proposed algorithm Client-Centric Federated Adaptive Optimization in Section \ref{subsec:proposed_alg}.

\subsection{Limitations of Server-Centric FL}
\label{subsec:server-centric-limitation}

Most existing federated learning algorithms universally make the following assumption that rarely holds in real world deployment,

\begin{tcolorbox}
\begin{center}
\textbf{At a given round, all clients synchronize with the same global model and they conduct identical number of local computations.}
\end{center}
\end{tcolorbox}

Specifically, if we observe the clients' local computation part in Algorithm \ref{fed_adaptive_alg} (i.e. Line 5-9), we would summarize that the following assumptions have been intrinsically made, (a) \textbf{(Homogeneous Local Updates)} Clients run an identical number of $K$ steps; (b) \textbf{(Uniform Client Participation)} Each client is sampled by the server in a given round uniformly and independently according to an underlying distribution; (c) \textbf{(Synchronous Local Clients)} All participating clients synchronize at any round $t$, i.e., they initialize with the global model at current time $x_t$.

These assumptions provide a useful yet over-simplified characteristic of real-world system. Due to their simplicity, most existing FL algorithms and their corresponding convergence analysis are proposed and derived based on the above assumptions, see e.g. \cite{McMahan2017FedAvg,stich2018localSGD,li2020federated,yang2021achieving,Hsu2019MeasuringTE,Wang2020SlowMo,Li20FedProx,karimireddy2020scaffold,Wang20FedNova,reddi2020adaptive,wang22adaptive,zhang2020fedpd,acar2021feddyn} (please check Section \ref{sec:related_work} and Appendix \ref{sec:related_work_appen} for a comprehensive review of existing works).

With the above assumptions, server takes a centric role in the federated learning system, as all clients always synchronize with the server, and the server determines the client sampling scheme as well as number of local updates, we therefore refer to this setting as \underline{\textbf{Server-Centric Federated Learning System}}.

However, we would like to argue that server-centric federated learning is an unrealistic characterization of real-world system. Especially considering that it typically takes thousands of communication rounds to converge in large-scale cross-device deployments, it is impossible to ensure these assumptions hold throughout the entire process. Instead, clients in a realistic cross-device FL system usually have the following characteristics \cite{li2021survey,Kairouz21AdvancesProblems},

\begin{itemize}[leftmargin=*]
    \item \textbf{(Heterogeneous Client Capability)} The clients may have completely different computational capability, and faster clients are able to carry out more local computations than slower clients during the same time period. Requesting identical local updates would straggle the training and incur unnecessary energy waste.
    \item \textbf{(Unpredictable Client Availability)} The clients may have arbitrary availability due to various constraints, e.g., unstable network connection, battery outage, or low participation willingness. Thus, each client determines to participate or not in a given communication round purely dependent on its own condition and the decision is completely unpredictable by the server. 
    \item \textbf{(Asynchronous Local Model)} Due to the random nature of client participation and capability, a reasonable server does not wait until the slowest client completes local computation, but would instead start the next round as long as a sufficient number of clients respond. The rest of the clients would participate in a future round when they finish and decide to respond. Though this paradigm is more efficient than its synchronous counterpart, the behavior of clients may be more chaotic as clients may compute based on an outdated global model $x_{t-\tau_{t,i}}$ instead of $x_t$ (up to a time-varying, device-dependent random delay $\tau_{t,i}$).
\end{itemize}

In summary, due to their discrepancy against real-world settings, the applicability of these server-centric FL algorithms is questionable. Traditionally, most works that recognize client heterogeneity focus explicitly on statistical heterogeneity \citep{Li2020Fed-Non-IID,yang2021achieving}, i.e., the local data distribution distinguishes with each other, while in reality the system heterogeneity is as ubiquitous as statistical heterogeneity and largely remains unexplored.

\subsection{Methodology}
\label{subsec:proposed_alg}

\begin{algorithm2e}
\SetAlgoVlined
\KwIn{
Number of clients $n$, objective $f(x)=\frac{1}{n}\sum_{i=1}^n f_i(x)$, initialization $x_0$, 
Number of rounds $T$, \textbf{local} learning rate $\eta_l$,
Number of participating clients $m$, server-side hyperparameter $\eta,\beta,\gamma,\epsilon$}
\SetAlgoLined
\For{$t\in\{1,...,T\}$}
{   

    {\textbf{At Each Client}\par}

    \Indp Self-determine whether to participate in the training, if not, stay idle.

    Once determine to participate, retrieve a global model $x_\mu$ from the server ($\mu$ may not be $t$)

    Trigger \textbf{Client-Centric Local Computation} (Algorithm \ref{alg:cc_local}), $\Delta_\mu^i = \textbf{CC-Local}\left(i,t,\mu,\eta_l\right)$

    Send local update $\Delta_\mu^i$

    \Indm {\textbf{At Server (Concurrently with Client)}\par}
    
    \Indp Collect the local updates $\{\Delta_{t-\tau_{t,i}}^i, i\in\mathcal{S}_t\}$ returned from a subset of clients $\mathcal{S}_t$, where $\tau_{t,i}$ represents the random delay of the client $i$'s local update, $i\in\mathcal{S}_t$


    Aggregate: $\Delta_t=\frac{1}{m}\sum_{i\in \mathcal{S}_t}\Delta_{t-\tau_{t,i}}^i$

    Update momentum buffer $m_{t}=(1-\beta)\Delta_{t}+\beta m_{t-1}$\;
    
    \colorbox{lime}{$\hat{v}_{t}=\hat{v}_{t-1}+\Delta_{t}^2$}
    
    \colorbox{teal}{$\hat{v}_{t}=(1-\gamma)\Delta_{t}^2+\gamma \hat{v}_{t-1}$}
    
    \colorbox{yellow}{$v_{t}=(1-\gamma)\Delta_{t}^2+\gamma v_{t-1}$}
    
    \colorbox{yellow}{$\hat{v}_{t}= \max(\hat{v}_{t-1},v_{t})$}
    
    Update $x_{t+1}=x_t-\eta \frac{m_t}{\sqrt{\hat{v}_t}+\epsilon}$\;

}
return $x_T$
\caption{A class of \textbf{Client-Centric Federated Adaptive Optimization} approaches. \colorbox{lime}{CC-FedAdagrad} \colorbox{teal}{CC-FedAdam} \colorbox{yellow}{CC-FedAMS}}
\label{alg:cc_fed_adaptive_algs}
\end{algorithm2e}

\begin{algorithm2e}
\SetAlgoVlined
\KwIn{client index $i$, current round $t$, retrieved global model timestamp $\mu$, \textbf{local} learning rate $\eta_l$}
\SetAlgoLined
Initialize the local model $x_{\mu,0}^i=x_\mu$.

Determine a number of local steps $K_{t,i}$, which can be time-varying and device-dependent based on its own condition.

    \For{$k\in\{0,1,...,K_{t,i}-1\}$}
    {
    Randomly sample a batch $\xi_{\mu,k}^i$
    
    Compute $g_{\mu,k}^i=\nabla f_i(x_{\mu,k}^i; \xi_{\mu,k}^i)$
    
    Update $x_{\mu,k+1}^i=x_{\mu,k}^i-\eta_l g_{\mu,k}^i$
    }
    Compute model difference $\Delta_\mu^i=x_\mu-x_{\mu,K_{t,i}}^i$

    Normalize w.r.t. $K_{t,i}$, and send $\Delta_\mu^i=\frac{\Delta_\mu^i}{K_{t,i}}$

return $\Delta_\mu^i$
\caption{\textbf{Client-Centric Local Computation} $\textbf{CC-Local}\left(i,t,\mu,\eta_l\right)$}
\label{alg:cc_local}
\end{algorithm2e}

In light of the limitations of server-centric federated learning settings and the convincing empirical results of federated adaptive optimization, in this section, we propose a novel framework which we refer to as \underline{\textbf{Client-Centric Federated Adaptive Optimization}}, which is formalized in Algorithm \ref{alg:cc_fed_adaptive_algs}. 

\subsubsection{\textbf{Workflow of Algorithm \ref{alg:cc_fed_adaptive_algs}}}
\label{subsubsec:detail_alg_fed_adaptive}\hfill\\

In the Client-Centric FL (CC-FL) framework, each client takes an independent role in deciding the participation, and how much computation it carries out. Note that CC-Federated Adaptive Optimization is a general and flexible framework in which any adaptive optimizer could serve as a plug-and-play module, and we display only three examples in Algorithm \ref{alg:cc_fed_adaptive_algs}, i.e., CC-FedAdagrad/CC-FedAdam/CC-FedAMS. For expository purposes, we walk through the workflow of Client-Centric Federated Adaptive Optimization only with CC-FedAMS.

Suppose $T$ is the total number of rounds in CC-FedAMS. At round $t\in\{1,2,\dots,T\}$, each client can self-determine whether to participate, and staying idle in the entire training process is allowed. Once it determines to participate, it downloads the global model $x_\mu$ from the server. Note that since each client may choose to participate at a different timestamp, the global model $x_\mu$ may not be from the current timestamp $x_t$. The client then triggers local computation \textbf{CC-Local} (i.e., Algorithm \ref{alg:cc_local}). It first initializes $x_{\mu,0}^i=x_\mu$ locally and carries out $K_{t,i}$ steps of local SGD (i.e., $x_{\mu,k+1}^i=x_{\mu,k}^i-\eta_l g_{\mu,k}^i$ for $K_{t,i}$ steps). Note that in server-centric FL algorithm, $K_{t,i}$ is a constant that does not vary with $t$ and $i$ (see e.g., Algorithm \ref{fed_adaptive_alg}). Here, we allow $K_{t,i}$ to be time-varying and device-dependent. The client then computes a \textbf{normalized} model update by $\Delta_\mu^i=\frac{x_{\mu,0}^i-x_{\mu,K_{t,i}}^i}{K_{t,i}}$ and sends $\Delta_\mu^i$ to server. The normalization is to de-bias the global model to avoid favoring clients with more local updates. 

Concurrently with the local computation \textbf{CC-Local}, the server collects the model updates from responsive clients $\{\Delta_{t-\tau_{t,i}}^i, i\in\mathcal{S}_t\}$, where $\mathcal{S}_t$ is a ``buffer'' of responsive clients. Note that for each client $i\in\mathcal{S}_t$, it may participate at a historical timestamp $t-\tau_{t,i}$, which is up to a random delay $\tau_{t,i}$. In server-centric FL algorithm, $\tau_{t,i}=0$. The global update only takes place once the buffer collects $m$ client updates, i.e., as long as $|\mathcal{S}_t|$ reaches $m$, regardless of the status of the rest of the clients. Such parallel runs of clients and server avoid unnecessary waiting time and wasteful resources. The global updating rule is adapted from a popular non-federated adaptive optimizer AMSGrad \cite{reddi18adam_convergence}, which uses a max stabilization step $\hat{v}_{t}= \max(\hat{v}_{t-1},v_{t})$ that generates a non-decreasing $\hat{v}_{t}$ sequence to solve the non-convergence issue in Adam. 

\subsubsection{\textbf{Key Algorithmic Designs}}
\label{subsec:key-algorithmic-designs}\hfill\\

Several unique features of Algorithm \ref{alg:cc_fed_adaptive_algs} distinguish itself from ordinary server-centric FL, which include (a) \textbf{Normalized Model Update} (Line 9 in Alg \ref{alg:cc_local}), (b) \textbf{Size $m$ Buffer $\mathcal{S}_t$} (Line 8 in Alg \ref{alg:cc_fed_adaptive_algs}), (c) \textbf{Server-side Adaptive Optimization} (Line 11-15 in Alg \ref{alg:cc_fed_adaptive_algs}). These features are key to alleviate system heterogeneity and client drift. We summarize the details and reasoning in Appendix 
\section{Convergence Analysis}
\label{sec:convergence}

In this section, we provide the convergence analysis for our proposed Client-Centric Federated Adaptive Optimization and its practical implications. 

To our best knowledge, this is the first definitive convergence result on federated adaptive optimization with system heterogeneity. Please note that enabling adaptive scaling in asynchronous FL is challenging in theoretical proof and is not an extension of results in \cite{reddi2020adaptive} or \cite{Yang2021AnarchicFL}, since the chaotic iterates caused by asynchrony and a nonzero momentum factor $\beta$ are ignored in \cite{reddi2020adaptive, Yang2021AnarchicFL}.

We study general non-convex objective functions under statistical heterogeneity, i.e., each local loss $f_i(x)$ (and thus the global loss $f(x)$) may be non-convex \footnote{Objective functions in modern large-scale neural networks e.g., VGG \cite{Simonyan14VGG}, ResNet \cite{He2016DeepResNet}, and DenseNet \cite{Huang2017DenseNet} are non-convex \cite{Li2018Visualization}.}, and $\mathcal{D}_i\neq\mathcal{D}_j$ when $i\neq j$.

\begin{assumption}[Smoothness]
\label{smoothness_assumption} 
Each local loss $f_i(x)$ has $L$-Lipschitz continuous gradients, i.e., $\forall x, x^\prime\in \mathbb{R}^d$, we have \\ $\left\|\nabla f_i(x)-\nabla f_i(x^\prime)\right\| \leq L \left\| x-x^\prime\right\|$, where $L$ is the Lipschitz constant. And $f$ has finite optimal value, i.e., $f^\ast\triangleq\min_x f(x)$ exists, and $f^\ast>-\infty$.
\end{assumption}

\begin{assumption}[Unbiased Bounded Gradient]
We could access an unbiased estimator $g_{t,k}^i=\nabla f_i(x_{t,k}^i, \xi_{t,k}^i)$ of true gradient $\nabla f_i(x_{t,k}^i)$ for all $t,k,i$, where $g_{t,k}^i$ is the stochastic gradient with minibatch $\xi_{t,k}^i$. And the stochastic gradient is bounded, i.e., $\left\|g_{t,k}^i\right\|\leq G$.
\label{bounded_gradient_assumption}
\end{assumption}

\begin{assumption}[Bounded Local Variance]
\label{bounded_local_assumption}
Each stochastic gradient on the $i$-th client has a bounded local variance, i.e., we have $\mathbb{E}\left[\left\| g_{t,k}^i - \nabla f_i(x_{t,k}^i) \right\|^2\right] \leq \sigma^2_l$.
\end{assumption}

\begin{assumption}[Bounded Global Variance]
\label{bounded_global_assumption}
Local loss $\{f_i\}_{i=1}^n$ have bounded global variance, i.e., $\forall x$, $\frac{1}{n}\sum_{i=1}^{n}\left\| \nabla f_i(x)-\nabla f(x)\right\|^2\leq\sigma_g^2$.
\end{assumption}

Assumptions \ref{smoothness_assumption} - \ref{bounded_global_assumption} are all standard and mild assumptions in federated learning research, and have been universally adopted in most existing works \citep{Kingma2015AdamAM,reddi18adam_convergence,Li2020Fed-Non-IID,reddi2020adaptive,qi2021stochastic,khanduri2021stem,wang22adaptive, bao2020fast}. Assumption \ref{bounded_gradient_assumption} is a standard assumption when studying adaptive optimization approaches and has been widely adopted in \cite{Kingma2015AdamAM,reddi18adam_convergence,chen2018adam,reddi2020adaptive,wang22adaptive}. 

Note that Assumption \ref{bounded_global_assumption} marks that we are studying general \textit{non i.i.d.} settings, where we allow each client has heterogeneous data distributions, as $\sigma_g^2=0$ corresponds to the \textit{i.i.d.} setting. We also do not require stronger and unrealistic assumptions such as convex objective or Lipschitz Hessian that are used in existing works such as \cite{gu2021arbitraryunavailable,Xie2019AsynchronousFO}. 

Note that the convergence analysis is not a simple combination of existing results, the analysis needs to overcome novel challenges due to the chaotic behavior caused by asynchrony and heterogeneous local computing. As a matter of fact, there is no existing theoretical result on federated adaptive optimization with system heterogeneity to our best knowledge.

We state the main convergence theorem of Client-Centric Federated Adaptive Optimization in Theorem \ref{fedadaptive_free_uniform_arrival_convergence_theorem} \footnote{For expository purposes, we prove the convergence for CC-FedAdagrad. Similar convergence results could be easily generalized to other CC-Federated Adaptive Optimization approaches.}, and then analyze the practical implications in Corollaries and Remarks.

\begin{theorem}[Convergence of CC-Federated Adaptive Optimization]
\label{fedadaptive_free_uniform_arrival_convergence_theorem}
Suppose $\{f_i\}_{i=1}^n$ fulfills Assumptions \ref{smoothness_assumption}-\ref{bounded_global_assumption}. Suppose the maximum delay is bounded, i.e., $\tau_{t,i}\leq\tau<\infty$ for any $i\in\mathcal{S}_t$ and $t\in\{1,\dots,T\}$. 

Under the condition

$$\eta_l\leq\min\left\{\frac{1}{8K_{t,\text{max}}L},\sqrt{\frac{1}{ 120L^2 \epsilon \tau K_{t,\text{max}}^2}},\frac{\epsilon}{\sqrt{T}G} \right\}$$, where $K_{t,\text{max}} =\max_{i\in\mathcal{S}_t}K_{t,i} $, and suppose $H_1 \eta_l^2+H_2\eta_l\le\epsilon^2$, where $H_1\triangleq 2 \eta^2 L^2 \tau^2$, $H_2\triangleq 4 \eta L C_\beta^2 + 6 \eta L \epsilon + 2 G \epsilon$, $C_\beta=\frac{\beta}{1-\beta}$. \footnote{Such constraint on $\eta_l$ is standard and easily fulfilled by ordinary value assignment. Similar constraint has been used in \cite{yang2021achieving,wang22adaptive,Yang2021AnarchicFL}.} And further assume each client is included in $\mathcal{S}_t$ with probability $\frac{m}{n}$ uniformly and independently. We would have:

\begin{gather*}
\frac{1}{T}\sum_{t=0}^{T-1} \mathbb{E}\left[\left\| \nabla f(x_{t })\right\|^2\right]
\leq \frac{8\epsilon}{\eta\eta_l T} \left( f(z_0) -f^\ast \right)  
 + \frac{\Phi}{T}  +  \Phi_g \sigma_g^2  + \Phi_l \sigma_l^2 
 \label{fedadaptive_free_uniform_arrival_bound}
\end{gather*}

where \footnote{We denote $\frac{1}{K_t}=\frac{1}{m}\sum_{i\in\mathcal{S}_t}\frac{1}{K_{t,i}}$, $\bar{K}_t\triangleq \frac{1}{m}\sum_{i\in\mathcal{S}_t}K_{t,i}$, $\hat{K}_t^2 \triangleq \frac{1}{m}\sum_{i\in\mathcal{S}_t}K^2_{t,i}$, $\phi_1 \triangleq  \frac{1}{T}\sum_{t=0}^{T-1} \Bar{K}_t$, $\phi_2 \triangleq  \frac{1}{T}\sum_{t=0}^{T-1} \hat{K}_t^2$, and $\phi_3 \triangleq \frac{1}{T}\sum_{t=0}^{T-1} \frac{1}{K_t}$, for ease of notation. }

\begin{gather*}
\Phi \triangleq  4  d  G^2 C_\beta + L \eta \eta_l G^2 C_\beta^2 \frac{4 d}{ \epsilon }, \quad \Phi_g \triangleq 240   \eta_l^2 L^2 \phi_2,\\
\Phi_l \triangleq 40 \eta_l^2 L^2 \phi_1 +   \frac{4 \eta  \eta_l  L C_\beta^2 + 6 L \eta  \eta_l + 2  \eta_l G}{ m \epsilon}   \phi_3 + \frac{2 \eta^2 \eta_l^2 L^2 \tau^2}{ \epsilon^2 m}  \phi_3
\end{gather*}

\end{theorem}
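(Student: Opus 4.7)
My plan is to follow the classical ``Lyapunov + smoothness + telescoping'' roadmap for nonconvex adaptive optimization, but augmented with three novel decompositions that isolate the damage caused by asynchrony, heterogeneous $K_{t,i}$, and the AdaGrad-type preconditioner. The first step is to introduce an auxiliary sequence $z_t$ that removes the momentum bias; a natural choice is $z_t \triangleq x_t + \frac{\beta}{1-\beta}(x_t - x_{t-1})$, so that $z_{t+1}-z_t = -\frac{\eta}{1-\beta}\,\frac{(1-\beta)\Delta_t}{\sqrt{\hat v_t}+\epsilon}$ up to a lower-order term coming from the change of the preconditioner between rounds. Applying $L$-smoothness of $f$ at $z_t$ and taking expectations yields the standard inequality $\mathbb{E}[f(z_{t+1})] - \mathbb{E}[f(z_t)] \le -\mathbb{E}\bigl\langle \nabla f(z_t),\tfrac{\eta\Delta_t}{\sqrt{\hat v_t}+\epsilon}\bigr\rangle + \tfrac{L}{2}\mathbb{E}\|z_{t+1}-z_t\|^2$, which I will telescope from $0$ to $T-1$ after replacing $\nabla f(z_t)$ by $\nabla f(x_t)$ (paying a smoothness penalty proportional to $\|x_t-z_t\| \le C_\beta\|x_t-x_{t-1}\|$, which is where the $C_\beta$ factors in $\Phi$ and $H_2$ originate).

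The second step is the decomposition of the aggregated pseudo-gradient. Writing $\Delta_t = \frac{1}{m}\sum_{i\in\mathcal{S}_t}\frac{1}{K_{t,i}}\sum_{k=0}^{K_{t,i}-1}\eta_l\,g_{t-\tau_{t,i},k}^i$, I will split it as $\nabla f(x_t) + E^{\text{async}}_t + E^{\text{drift}}_t + E^{\text{samp}}_t + E^{\text{noise}}_t$, where (i) $E^{\text{async}}_t$ captures $\nabla f_i(x_{t-\tau_{t,i}}) - \nabla f_i(x_t)$ and is controlled by $L\|x_t-x_{t-\tau_{t,i}}\| \le L\eta\tau \max_{s<t}\|\Delta_s/(\sqrt{\hat v_s}+\epsilon)\|$, which after squaring yields the $\eta^2 L^2 \tau^2/\epsilon^2$ term behind $H_1$ and $\Phi_l$; (ii) $E^{\text{drift}}_t$ quantifies the local deviation $\nabla f_i(x_{\mu,k}^i) - \nabla f_i(x_\mu)$ and, via standard recursion on $\|x_{\mu,k}^i - x_\mu\|^2$ using the bounded local/global variance and Assumption 5, gives the $40 \eta_l^2 L^2 \phi_1$ and $240 \eta_l^2 L^2 \phi_2$ contributions; (iii) $E^{\text{samp}}_t$ is the uniform-sampling error whose $1/m$ factor produces the $\phi_3/m$ terms in $\Phi_l$; and (iv) $E^{\text{noise}}_t$ is zero-mean and contributes only a variance term after squaring.

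The third step deals with the adaptive preconditioner. Because $\hat v_t$ is random and time-varying, I will use the two-sided bound $\epsilon \le \sqrt{\hat v_t}+\epsilon$ to pessimistically upper-bound ``bad'' error terms (dividing by $\epsilon$, which is where the $1/\epsilon$ and $1/\epsilon^2$ factors come from) while keeping the leading inner product $\langle \nabla f(x_t), \Delta_t/(\sqrt{\hat v_t}+\epsilon)\rangle$ intact; then the fact that $\hat v_t$ is coordinate-wise nondecreasing lets me telescope $\sum_t \|\Delta_t\|^2/(\sqrt{\hat v_t}+\epsilon)$ via the standard AdaGrad lemma, producing at worst an $\mathcal{O}(d)$ log-type correction that gets absorbed into $\Phi$ together with the momentum-induced $4dG^2 C_\beta$ piece. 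After telescoping $f(z_{t+1})-f(z_t)$, dividing by $T$, and applying the stepsize constraint $H_1\eta_l^2+H_2\eta_l \le \epsilon^2$ (which is exactly what lets me move the quadratic error terms to the left-hand side and absorb them into a fraction of $\mathbb{E}\|\nabla f(x_t)\|^2$), the claimed bound with leading term $\frac{8\epsilon}{\eta\eta_l T}(f(z_0)-f^\ast)$ follows.

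The main obstacle is step three, more precisely the coupling between the nonzero momentum $\beta$, the asynchronous stale gradients, and the stochastic preconditioner $\sqrt{\hat v_t}+\epsilon$. Existing analyses of FedAdagrad (\cite{reddi2020adaptive}) assume synchronous clients so that $\nabla f(x_t)$ and $\Delta_t$ share the same iterate, while asynchronous analyses such as \cite{Yang2021AnarchicFL} set $\beta=0$ to avoid coupling the Lyapunov shift $z_t-x_t$ with the delayed updates. Here both effects are present simultaneously, so showing that the cross terms produced by $\langle \nabla f(x_t)-\nabla f(z_t),\Delta_t/(\sqrt{\hat v_t}+\epsilon)\rangle$ combined with the staleness error can still be absorbed into the left-hand side $\|\nabla f(x_t)\|^2$ without blowing up the $\tau^2/\epsilon^2$ penalty is the delicate part, and is precisely what the condition on $H_1,H_2$ is engineered to accommodate.
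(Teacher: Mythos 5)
Your proposal follows essentially the same route as the paper's proof: the identical Lyapunov sequence $z_t = x_t + \frac{\beta}{1-\beta}(x_t - x_{t-1})$, smoothness at $z_t$ with the $C_\beta$-penalized swap of $\nabla f(z_t)$ for $\nabla f(x_t)$, a decomposition of the aggregated pseudo-gradient into staleness, local-drift, sampling, and noise errors, telescoping of the preconditioner differences using monotonicity of $\hat v_t$ (yielding the $\mathcal{O}(d)$ terms in $\Phi$), and the $H_1,H_2$ stepsize condition to absorb the quadratic error terms. The only minor deviation is bookkeeping: in the paper the $\|\Delta_t\|^2$-type terms are absorbed into the negative aggregated local-gradient norm (via the sampling-probability cross-term argument) rather than into $\|\nabla f(x_t)\|^2$ itself, and no log-type AdaGrad lemma is needed, but these do not change the structure of the argument.
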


\begin{proof}
We defer the proof to Appendix \ref{sec:appendix_proof} due to space limit.
\end{proof}

\begin{corollary}[Convergence Rate of Client-Centric Federated Adaptive Optimization]
Suppose an identical $K$ for all $t$ and $i$. By setting $\eta_l=\Theta\left(\frac{1}{\sqrt{T}}\right)$, $\eta=\Theta\left(\sqrt{mK}\right)$, we have the convergence rate as 
\begin{equation}
\begin{gathered}
 \frac{1}{T}\sum_{t=0}^{T-1} \mathbb{E}\left[\left\| \nabla f(x_{t })\right\|^2\right]  = \mathcal{O}\left(\sqrt{\frac{1}{mKT}}\right) +  \mathcal{O}\left(\frac{K^2}{T}\right) + \mathcal{O}\left(\frac{\tau^2}{T}\right)
\end{gathered}
\end{equation}

If we further have a sufficiently large $T$ (i.e. run the algorithm long enough) and an only moderately large $\tau$ (i.e. a manageable level of random delay), specifically, if we have $T\ge mK^5$ and $\tau \leq \left(\frac{T}{mK}\right)^\frac{1}{4}$, we obtain a rate,

\begin{equation}
\begin{gathered}
 \frac{1}{T}\sum_{t=0}^{T-1} \mathbb{E}\left[\left\| \nabla f(x_{t })\right\|^2\right]  = \mathcal{O}\left(\sqrt{\frac{1}{mKT}}\right) 
\end{gathered}
\end{equation}

\label{fedadaptive_free_uniform_arrival_convergence_rate}
\end{corollary}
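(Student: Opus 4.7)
The plan is to deduce the corollary as a direct consequence of Theorem \ref{fedadaptive_free_uniform_arrival_convergence_theorem} by substituting the prescribed choices of $\eta_l$, $\eta$ and the homogeneous local step count $K$, and then performing a term-by-term $\mathcal{O}(\cdot)$ bookkeeping to identify the dominant contributions. Concretely, under an identical $K$ for all $(t,i)$, the averaged moments collapse to $\phi_1=K$, $\phi_2=K^2$, and $\phi_3=1/K$, which turns $\Phi_g\sigma_g^2$ and $\Phi_l\sigma_l^2$ into explicit polynomial expressions in $(\eta,\eta_l,K,m,\tau)$.

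First I would handle the leading ``optimization'' term. With $\eta_l=\Theta(1/\sqrt{T})$ and $\eta=\Theta(\sqrt{mK})$, the first summand on the right-hand side of Theorem \ref{fedadaptive_free_uniform_arrival_convergence_theorem} becomes
\begin{equation*}
\frac{8\epsilon}{\eta\eta_l T}\bigl(f(z_0)-f^\ast\bigr)=\Theta\!\left(\frac{1}{\sqrt{mKT}}\right),
\end{equation*}
which will be the benchmark order to compare all other contributions against. The $\Phi/T$ term contributes $\mathcal{O}(1/T)+\mathcal{O}(\eta\eta_l/T)=\mathcal{O}(1/T)+\mathcal{O}(\sqrt{mK}/T^{3/2})$; both are dominated by $\mathcal{O}(1/\sqrt{mKT})$ as soon as $T\ge mK$, which is implied by the stronger hypothesis $T\ge mK^5$ imposed later.

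Next I would unpack the two variance terms. Substituting into $\Phi_g\sigma_g^2=240\eta_l^2 L^2\phi_2\,\sigma_g^2$ yields a contribution of order $\mathcal{O}(K^2/T)$. For $\Phi_l\sigma_l^2$, the three pieces give, in turn, $\mathcal{O}(K/T)$ from $40\eta_l^2L^2\phi_1$, a term of order $\mathcal{O}(\eta\eta_l/(m K))+\mathcal{O}(\eta_l/(mK))=\mathcal{O}(1/\sqrt{mKT})+\mathcal{O}(1/(mK\sqrt{T}))$ from the middle block (all three summands scale the same way in $\eta,\eta_l$), and finally $\mathcal{O}(\eta^2\eta_l^2\tau^2/(mK))=\mathcal{O}(\tau^2/T)$ from the asynchrony block. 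Collecting everything and dropping subleading terms yields the first displayed rate of the corollary:
\begin{equation*}
\frac{1}{T}\sum_{t=0}^{T-1}\mathbb{E}\!\left[\left\|\nabla f(x_t)\right\|^2\right]=\mathcal{O}\!\left(\sqrt{\tfrac{1}{mKT}}\right)+\mathcal{O}\!\left(\tfrac{K^2}{T}\right)+\mathcal{O}\!\left(\tfrac{\tau^2}{T}\right).
\end{equation*}

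Finally, to obtain the clean linear-speedup rate, I would verify the two stated sufficient conditions exactly absorb the residual terms. The inequality $K^2/T\le 1/\sqrt{mKT}$ is equivalent to $T\ge mK^5$, and $\tau^2/T\le 1/\sqrt{mKT}$ is equivalent to $\tau\le (T/(mK))^{1/4}$; under these two regime conditions both additive terms are swallowed by $\mathcal{O}(\sqrt{1/(mKT)})$, yielding the second displayed rate. I should also double-check that the choice $\eta_l=\Theta(1/\sqrt{T})$, $\eta=\Theta(\sqrt{mK})$ is compatible with the step-size constraints stated in the theorem (namely $\eta_l\le\min\{1/(8KL),\sqrt{1/(120L^2\epsilon\tau K^2)},\epsilon/(\sqrt{T}G)\}$ and $H_1\eta_l^2+H_2\eta_l\le\epsilon^2$), which hold for sufficiently large $T$ once $\tau=\mathcal{O}((T/(mK))^{1/4})$. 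The main ``obstacle'' here is purely organizational rather than technical: keeping the several pieces of $\Phi_l$ straight and being careful that the $\eta^2\eta_l^2\tau^2/(\epsilon^2 m)\cdot\phi_3$ term cancels the $1/K$ correctly to produce $\tau^2/T$ rather than a spurious factor of $K$, since this is the term that drives the $\tau\le (T/(mK))^{1/4}$ condition.
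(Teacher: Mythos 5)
Your proposal is correct and follows essentially the same route as the paper's own proof: substitute $\eta_l=\Theta(1/\sqrt{T})$, $\eta=\Theta(\sqrt{mK})$ into Theorem \ref{fedadaptive_free_uniform_arrival_convergence_theorem}, note $\phi_1=K$, $\phi_2=K^2$, $\phi_3=1/K$, and keep dominant terms to get $\mathcal{O}(\sqrt{1/(mKT)})+\mathcal{O}(K^2/T)+\mathcal{O}(\tau^2/T)$, with the conditions $T\ge mK^5$ and $\tau\le(T/(mK))^{1/4}$ absorbing the last two terms. Your additional checks (the step-size constraints and the exact equivalences behind the two regime conditions) are correct and slightly more explicit than the paper's proof, but not a different argument.
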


\begin{remark}[How Random Delay Impacts Convergence?]
Intuitively, due to the chaotic behavior brought by asynchronous clients, the convergence of CC-federated adaptive optimization is expected to be negatively impacted by the random delay $\tau$. Corollary \ref{fedadaptive_free_uniform_arrival_convergence_rate} indicates the slowdown effect from the random delay $\tau$ through the $\mathcal{O}\left(\frac{\tau^2}{T}\right)$ term. But fortunately, Corollary \ref{fedadaptive_free_uniform_arrival_convergence_rate} also implies, if $\tau$ is only moderately large, then client-centric federated adaptive optimization can obtain an $\mathcal{O}\left(\sqrt{\frac{1}{mKT}}\right)$ rate, which does not depend on $\tau$, i.e. the negative impact of using outdated information vanishes asymptotically. Such $\mathcal{O}\left(\sqrt{\frac{1}{mKT}}\right)$ rate indicates our client-centric federated adaptive optimization can converge with chaotic system heterogeneity, and matches the best known convergence rate in asynchronous computing \cite{Lian15Asynchronous,Avdiukhin21arbitrarycommunication,Yang2021AnarchicFL}.
\end{remark}

\begin{remark}[Convergence Rate of Arbitrary Participation] A natural question is that what happens if the clients participate entirely arbitrarily, since we assume client is included in participation uniformly at random in Theorem \ref{fedadaptive_free_uniform_arrival_convergence_theorem}. We could show the resulting convergence rate is $\mathcal{O}\left(\frac{1}{\sqrt{mKT}}\right)+ \mathcal{O}\left(\frac{\tau^2}{T}\right) +  \mathcal{O}\left( \frac{K^2}{T} \right) + \Omega\left(\sigma_g^2\right)$. And we could further show $\Omega(\sigma_g^2)$ is indeed the lower-bound rate (i.e. unavoidable) by constructing worst-case scenario (imagine only two clients $f(x)=\frac{1}{2}(f_1+f_2)$, where the loss for 1st client is $f_1(x)=(x+G)^2$ and the 2nd client is $f_1(x)=(x-G)^2$. In this case $\sigma_g^2=4G^2$. Suppose client 1 never participates, then the optimum $x^\ast$ any algorithm could find is $G$, where $\mathbb{E} [\left\| \nabla f(x^\ast)\right\|^2 ]=\Omega(\sigma_g^2)$). Thus, if there is no condition for the participation, any algorithm is subject to non-convergence. Theorem \ref{fedadaptive_free_uniform_arrival_convergence_theorem} is shown with one particular participation pattern, which can be relaxed or altered.
\end{remark}

\begin{remark}[Linear Speedup w.r.t $m,K$]
The $\mathcal{O}\left(\sqrt{\frac{1}{mKT}}\right)$ rate also reveals an appealing linear speedup effect of number of participating clients $m$ and number of local epochs $K$. Linear speedup in terms of $m$ indicates the convergence is faster with more participating clients. Such speedup is not revealed by existing bound in \cite{Nguyen2021FedBuff}. The speedup in terms of $K$ reflects an important trade-off between local computation and server-client communication, i.e., more local computation (larger $K$) can shorten the convergence, thus requiring fewer rounds of communication (smaller $T$). 
\end{remark}
\section{Experiments}
\label{sec:exp}

We present extensive experimental results on vision and language tasks in \ref{subsec:benchmark_perf} that validate the effectiveness of our proposed approaches. We also analyze the hyperparameter sensitivity in \ref{subsec:hyper_sensitivity}. We defer extra experimental results to Appendix 
due to space limit. Our codes are available at \url{https://anonymous.4open.science/r/CC-Federated-Adaptive-Optimization-FB02/}.

\begin{figure*}[h]
\vspace*{-12pt}
\centering
\subfigure[Training Curves with $\alpha=3.0$]{
\hspace{0pt}
\includegraphics[width=.22\textwidth]{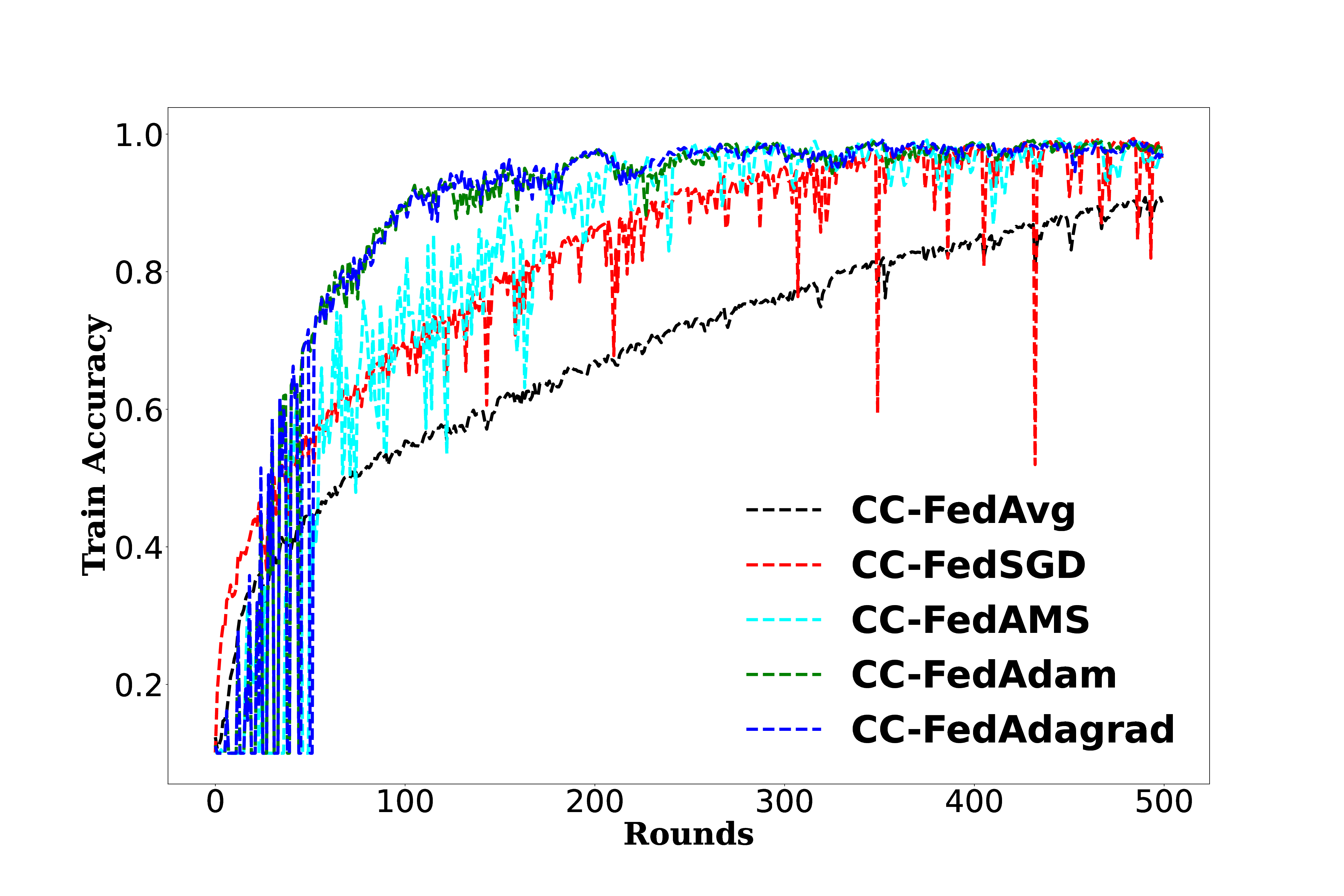}
\label{subfig:cifar10_resnet_adaptive_delay_5_niid_3_train}
}
\subfigure[Testing Curves with $\alpha=3.0$]{
\hspace{0pt}
\includegraphics[width=.22\textwidth]{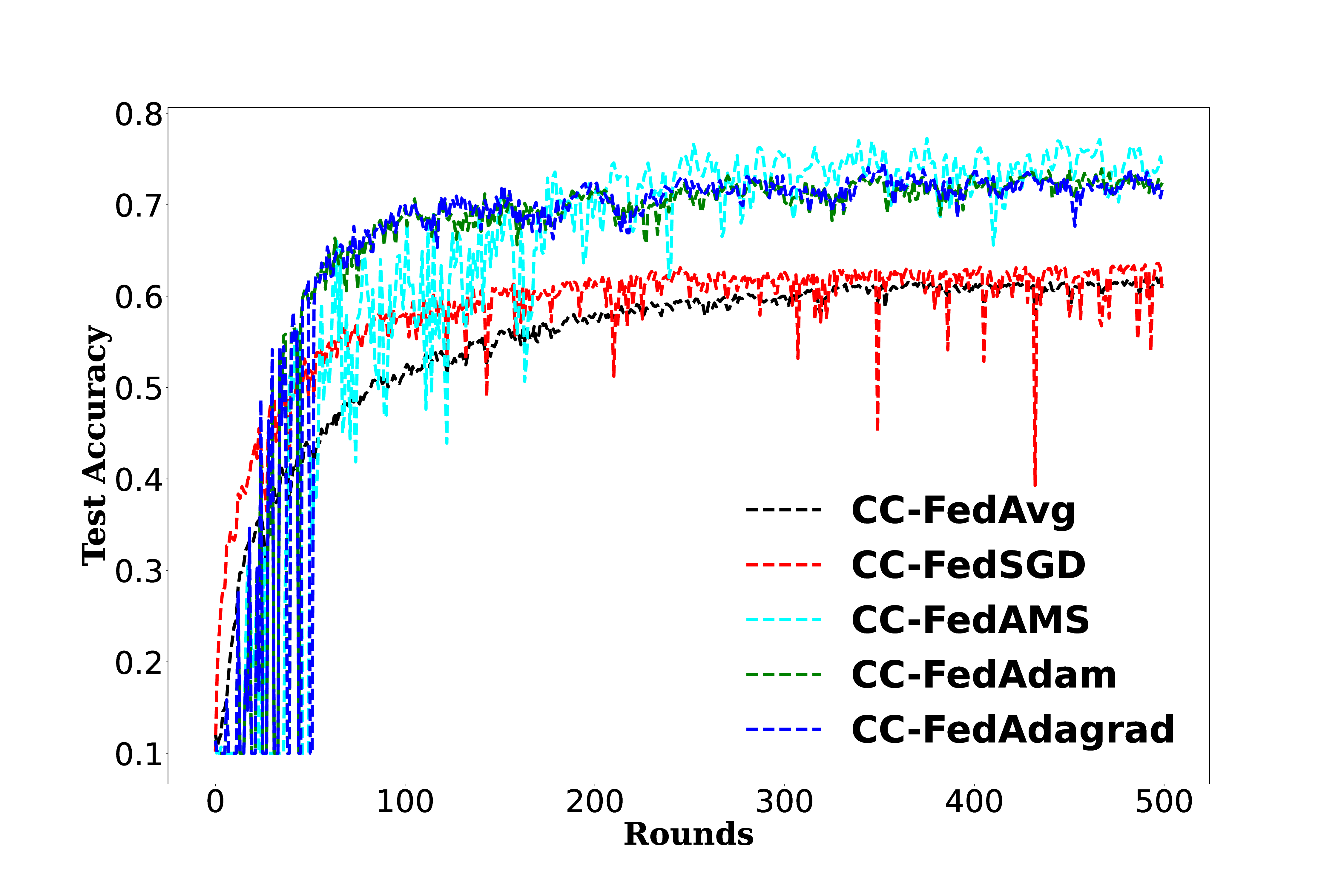}
\label{subfig:cifar10_resnet_adaptive_delay_5_niid_3_test}
}
\subfigure[Training Curves with $\alpha=0.5$]{
\hspace{0pt}
\includegraphics[width=.22\textwidth]{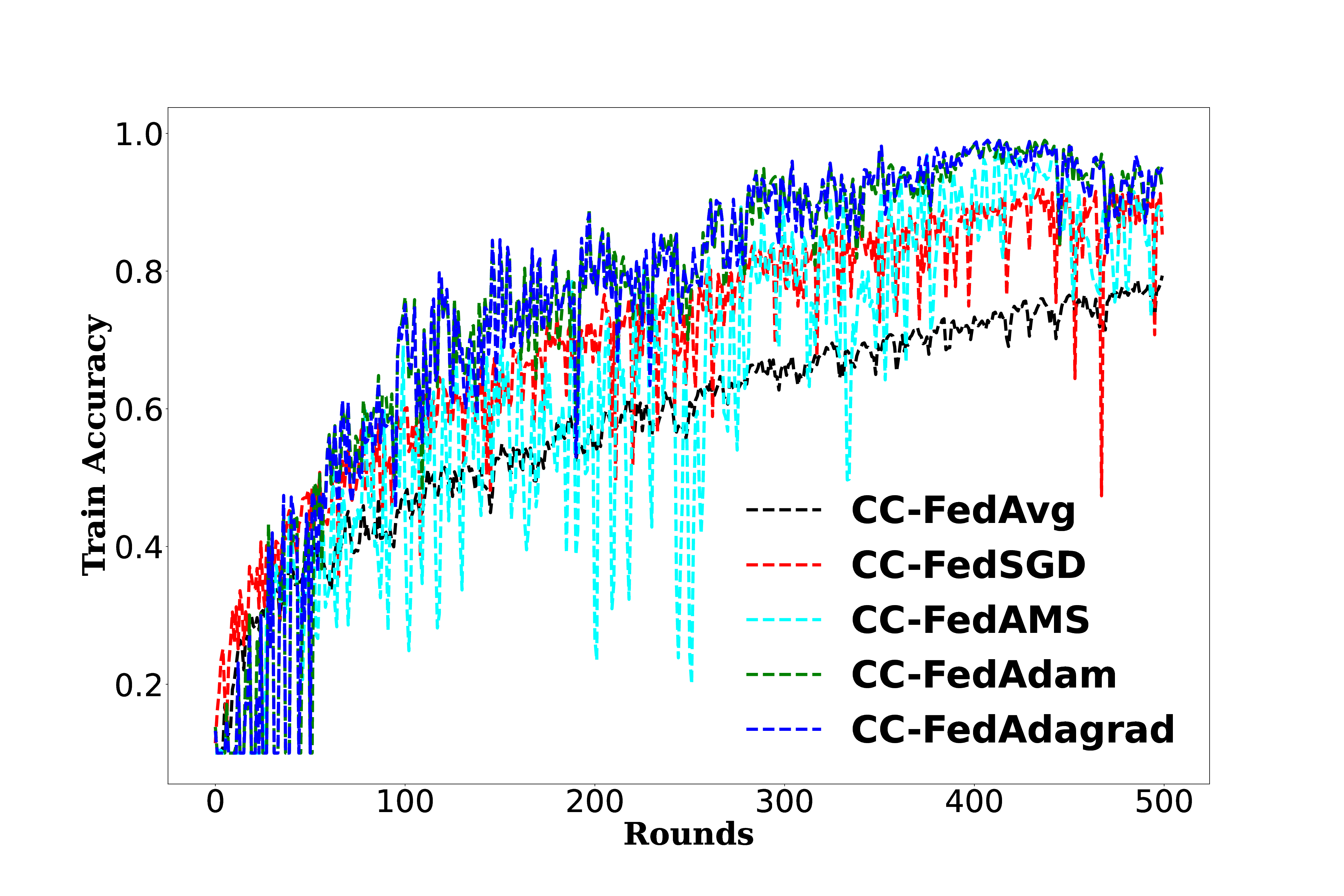}
\label{subfig:cifar10_resnet_adaptive_delay_5_train}
}
\subfigure[Testing Curves with $\alpha=0.5$]{
\hspace{0pt}
\includegraphics[width=.22\textwidth]{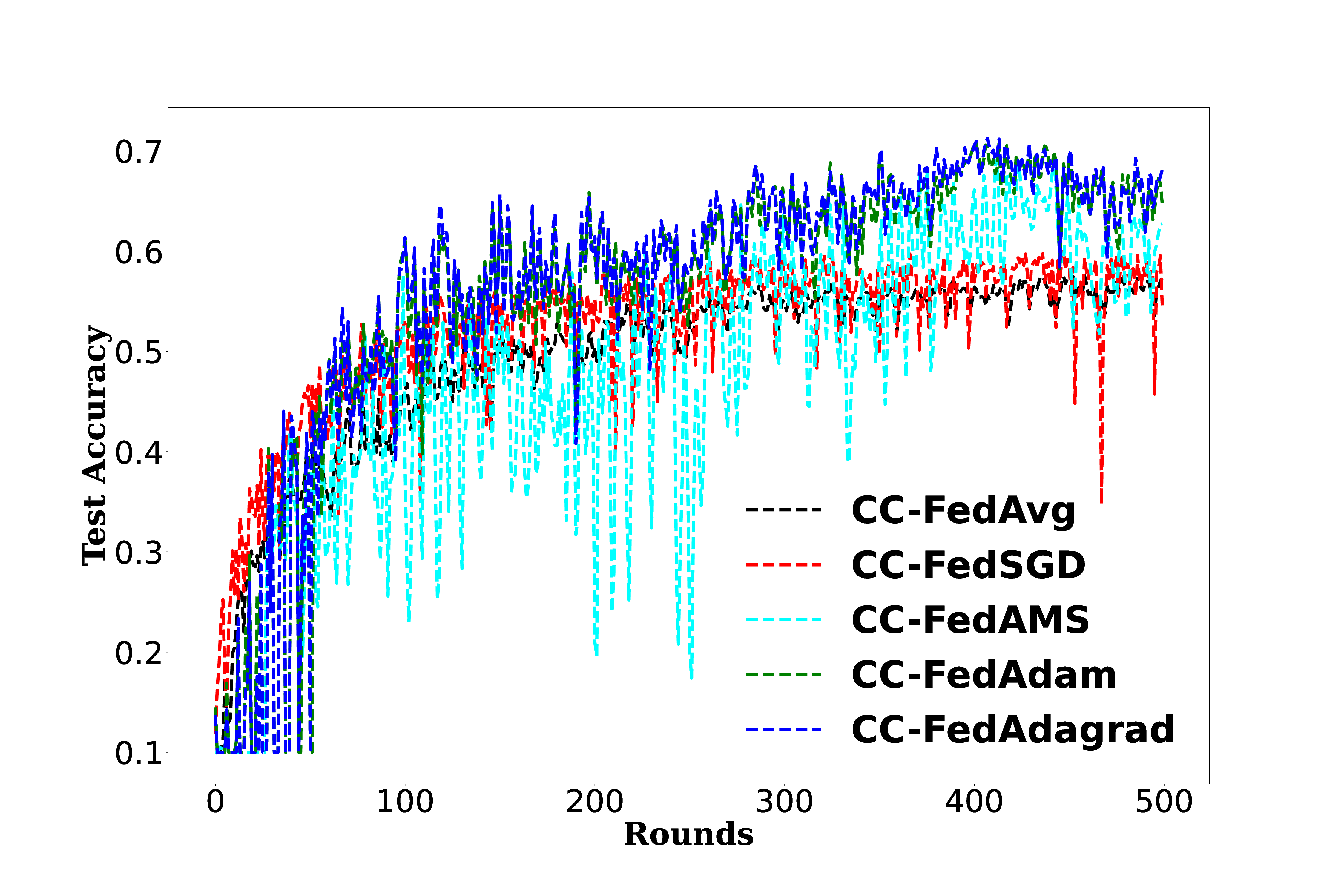}
\label{subfig:cifar10_resnet_adaptive_delay_5_test}
}
\vspace*{-6pt}
\caption{Training and testing curves for various CC-Federated Adaptive Optimizers (ResNet on CIFAR-10) under different Concentration Parameters $\alpha$.}
\label{fig:cifar10_resnet_adaptive_delay_5_niid_3_result}
\end{figure*}

\begin{figure*}[h]
\vspace*{-12pt}
\centering
\subfigure[Testing curve for ResNet on CIFAR-10 with $\tau=10$.]{
\hspace{0pt}
\includegraphics[width=.28\textwidth]{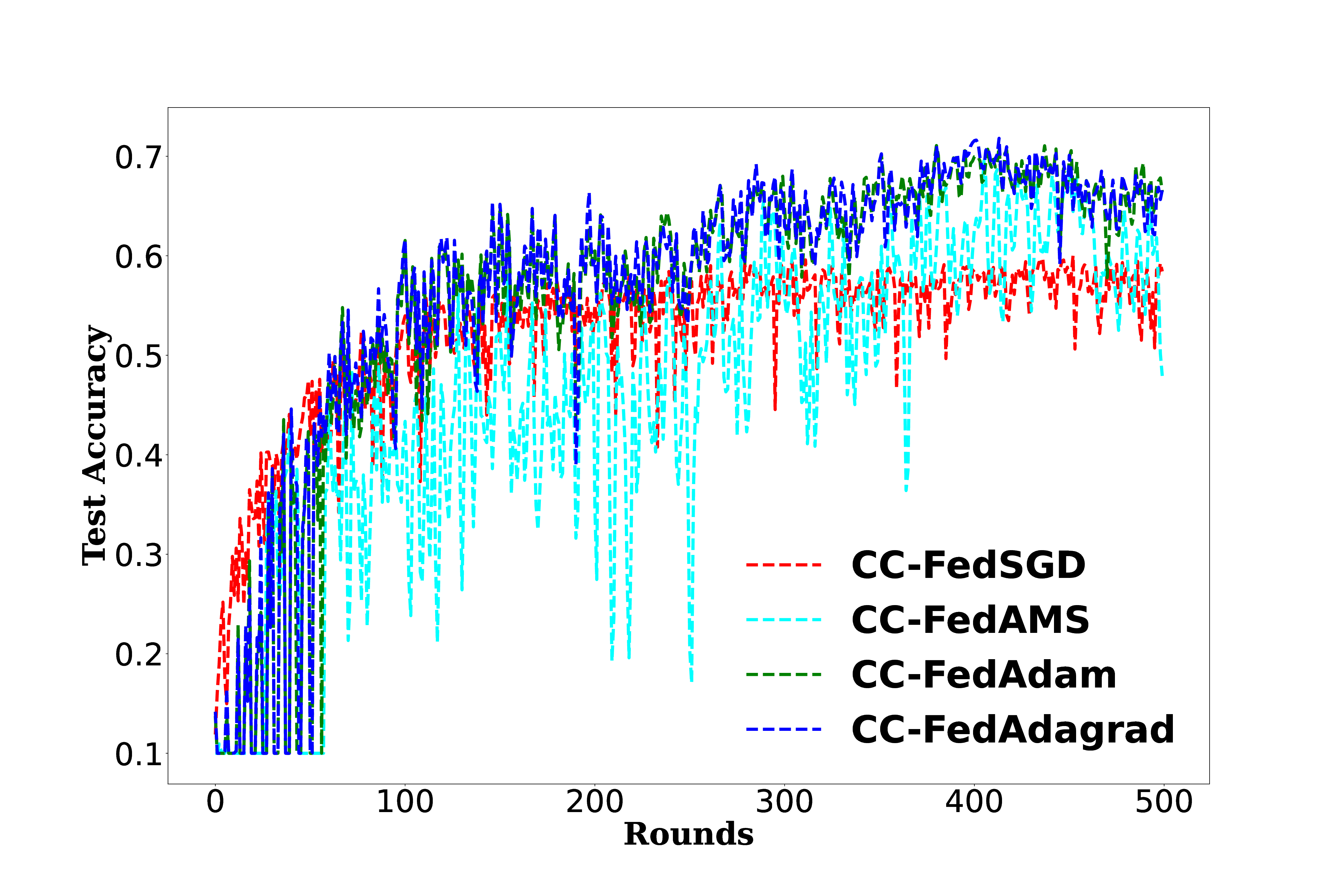}
\label{subfig:cifar10_resnet_adaptive_delay_10_test}
}
\subfigure[Testing curve for ResNet on CIFAR-10 with $R=3$.]{
\hspace{0pt}
\includegraphics[width=.28\textwidth]{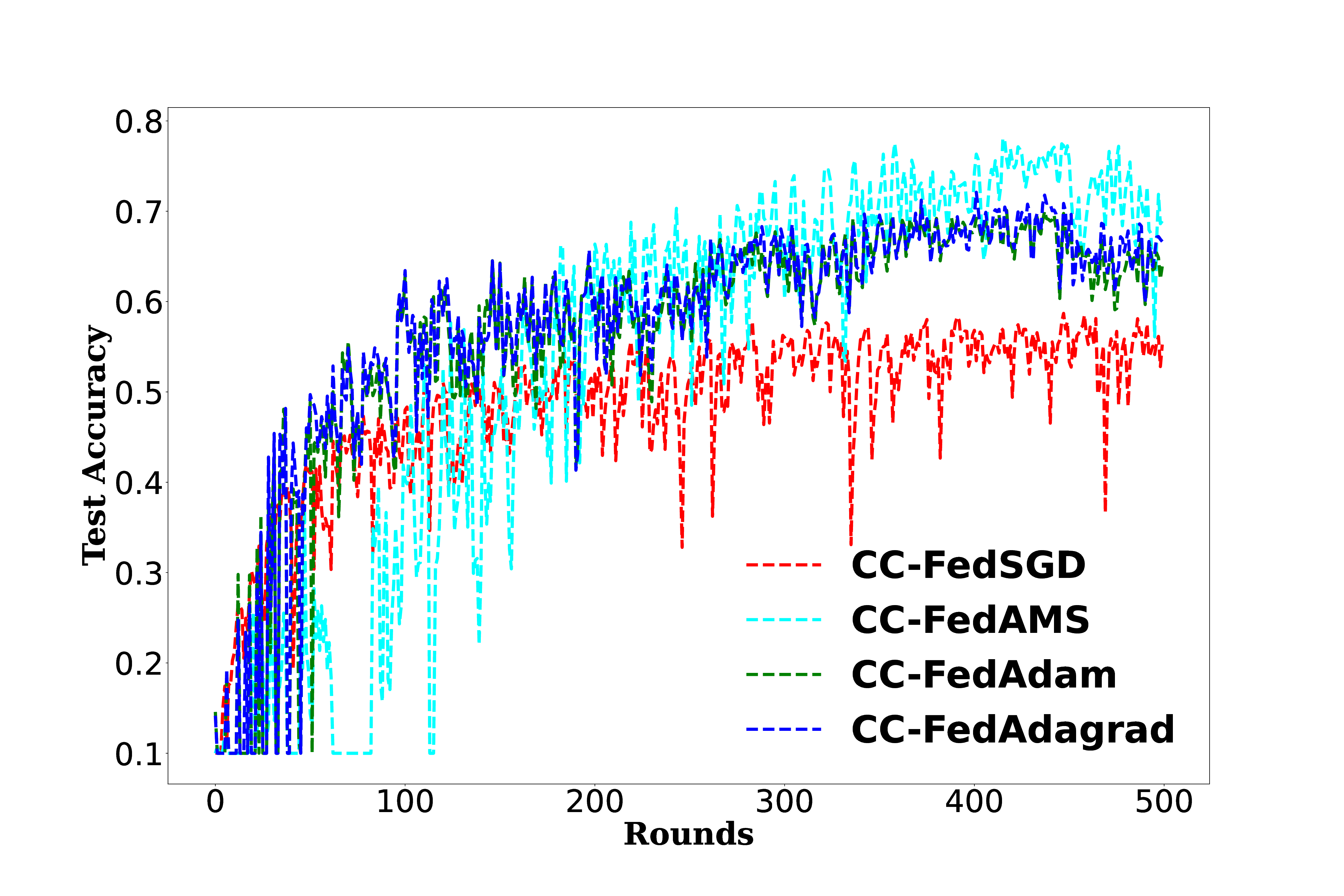}
\label{subfig:cifar10_resnet_delay_5_randomness_3_niid_05_test}
}
\subfigure[Testing curve for a shallow CNN on FMNIST.]{
\hspace{0pt}
\includegraphics[width=.28\textwidth]{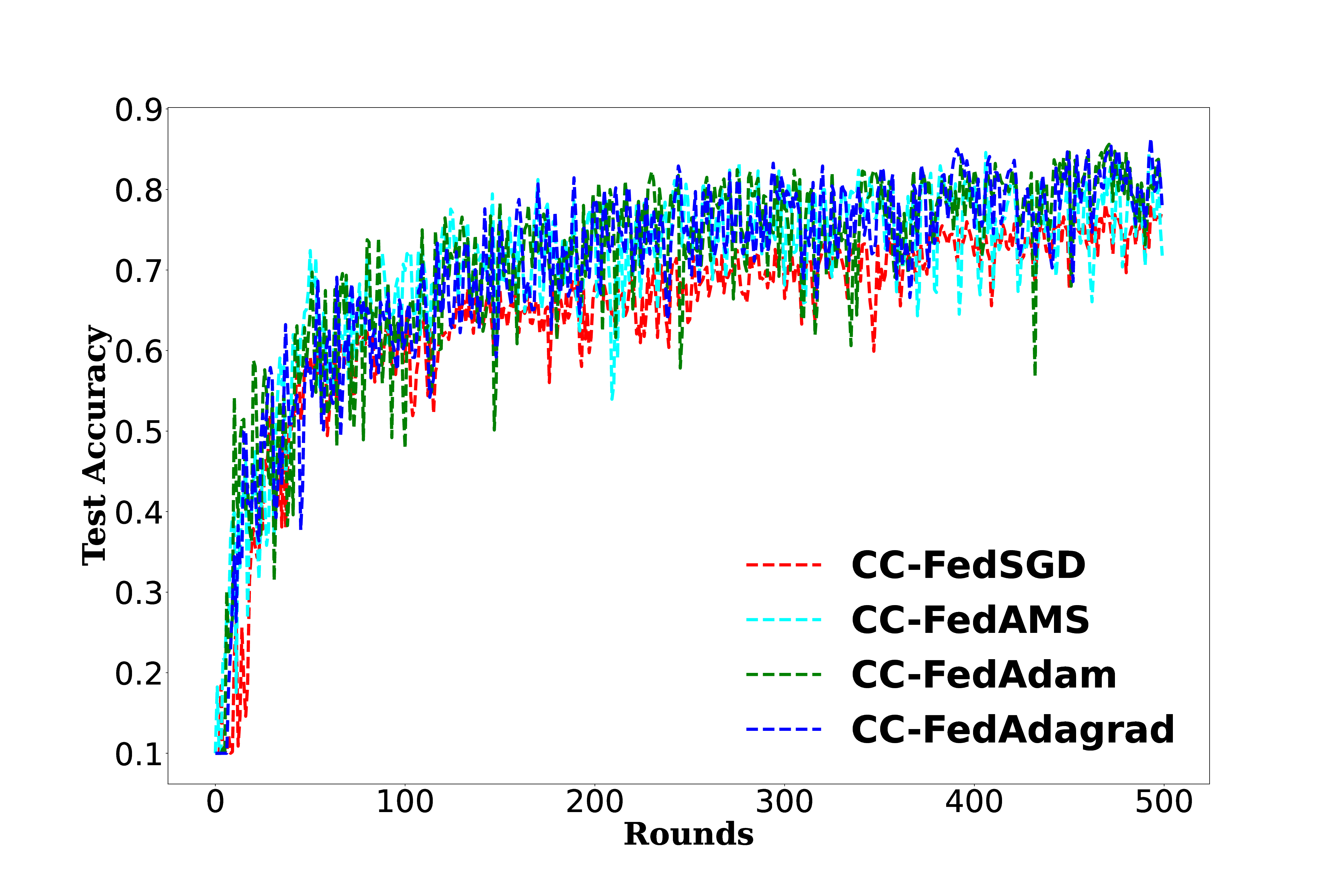}
\label{subfig:fmnist_simple_cnn_adaptive_delay_10_niid_03_test}
}
\vspace*{-12pt}
\caption{Testing curve for various CC-Federated Adaptive Optimizers on different settings of $\tau$, $R$, datasets, and architectures.}
\vspace*{-6pt}
\label{fig:cifar10_resnet_adaptive_various_setting_result}
\end{figure*}

\subsection{Experimental Setting}
\label{subsec:exp_setting}

We experiment with benchmark datasets such as Fashion-MNIST \cite{Xiao2017FashionMNISTAN}, CIFAR10, CIFAR100 \citep{Krizhevsky2009CIFAR}, and StackOverflow. For baseline and our proposed approaches, we sweep a wide range of hyperparameters and display the curves under best hyperparameter settings  \cite{che2021fedtrinet,zhou2022pac, mao2022trace,zheng2023potter}. 

\vspace*{5pt}
\noindent\textbf{How Statistical Heterogeneity is Implemented?}
\vspace*{5pt}

We enforce the label imbalance across all clients to simulate the statistical heterogeneity (i.e. \textit{non i.i.d.} settings). Specifically, each client is allocated a proportion of the samples of each label according to a Dirichlet distribution \footnote{The training examples on each client are sampled independently with class labels following a categorical distribution parameterized by a vector $q$ over $C$ classes (10 in CIFAR-10). Apparently, $q$ fulfills the following properties, $q_i\ge0$ for $i\in\{1,\dots,C\}$, and $\sum_{i=1}^C=1$. We draw $q\sim\text{Dir}(\alpha p)$ from a Dirichlet distribution, and $\alpha>0$ is a concentration parameter controlling the identicalness among clients. We refer full details of the simulation procedure to \cite{Hsu2019MeasuringTE,li2022NIIDBenchmark}} which is parameterized by a \textit{concentration} parameter $\alpha>0$. $\alpha$ controls the degree of non i.i.d. across clients, with $\alpha\to\infty$ indicates all clients have identical distributions (i.e., no statistical heterogeneity) and $\alpha \to 0$ indicates each client only has samples from one random class. Same procedure has been used in \cite{Hsu2019MeasuringTE,li2022NIIDBenchmark,Yurochkin2019BayesianNF,Wang20FedNova}.

\vspace*{5pt}
\noindent\textbf{How System Heterogeneity is Implemented?}
\vspace*{5pt}

System heterogeneity has two facets in this paper, i.e., asynchronous aggregation and heterogeneous local computing. To simulate the asynchrony, we allow each client to select one global model randomly from the last recent $\tau+1$ global models, where $\tau$ is the maximum random delay and controls the degree of asynchrony. Note that ordinary synchronous setting is a special example when $\tau=0$. To simulate the heterogeneous local computation, we allow each worker to randomly select local epoch number from $\{1,2, \dots, K*R\}$ at each round so that each worker has a time-varying, device-dependent local epoch, where $K$ is a default number of local epoch, and $R$ is a degree of randomness. For example, if $K=3$, ordinary homogeneous setting enforces all clients carry out 3 local epochs, but we allow each client randomly select a number of local epoch from $\{1,2, \dots, 6\}$ if $R=2$. Larger $R$ obviously indicates larger randomness.

\vspace*{5pt}
\noindent\textbf{Default Experimental Setting}
\vspace*{5pt}

Unless specified otherwise, we take the following default experimental settings as summarized in Table \ref{default_setting_table}. We have 100 clients in all experiments, and the buffer size $m$ is 5 (i.e. server updates globally once it collects 5 local updates), concentration parameter $\alpha=0.5$, maximum random delay $\tau=5$, default local epoch $K$ is 3, and degree of randomness $R$ is 2. We fix $\beta=0.9$, $\gamma=0.99$ following the settings in \cite{reddi2020adaptive}, except in Section \ref{subsec:hyper_sensitivity} where we test the sensitivity of these hyperparameters.

\begin{table}[htbp]
\vskip -6pt
    \caption{Default Experimental Settings}
    \centering
    \begin{tabular}{c|c}
    \hline
    Number of Clients: 100 & Buffer Size $m$: 5\\ 
    \hline
    Concentration Parameter: $\alpha=0.5$ & Default local epoch $K$: 3\\ 
    \hline
    Local Learning Rate: $\eta_l=0.01$ &  Total Number of Rounds: 500\\ 
    \hline
    $\beta$: 0.9 & $\gamma$: 0.99 \\ 
    \hline
    $\tau$: 5 & Local Momentum: Disabled\\ 
    \hline
    \end{tabular}
    \label{default_setting_table}
\vskip -6pt
\end{table}

\subsection{Performances on Benchmark Datasets}
\label{subsec:benchmark_perf}

We sweep $\eta$ in a wide grid $\eta\in\{10^{-3},10^{-2.5},\dots,10^1\}$, and show the curves with best hyperparameter settings below \cite{mao2022accelerating,zhang2021joint,wang2022ptseformer,li2024integrated}.

Figure \ref{subfig:cifar10_resnet_adaptive_delay_5_niid_3_train} and \ref{subfig:cifar10_resnet_adaptive_delay_5_niid_3_test} show the training/testing performances of training ResNet \cite{He16Res} on CIFAR-10 for 500 rounds. The concentration parameter $\alpha=3.0$ and maximum delay $\tau=5$. Note that CC-FedAvg in the figures denotes the ordinary FedAvg \cite{McMahan2017FedAvg} (i.e. $\eta=1$) in CC framework, while CC-FedSGD denotes a generalized FedAvg (i.e. search the best $\eta$ from the same grid as our proposed approaches). We could observe that this extra degree of freedom brings a significant acceleration in training (CC-FedSGD converges much faster than CC-FedAvg in Figure \ref{subfig:cifar10_resnet_adaptive_delay_5_niid_3_train}). Though it only exhibits a marginal improvement in testing, CC-FedSGD is apparently a stronger baseline than CC-FedAvg. As of our proposed approaches, any of CC-FedAMS/CC-FedAdam/CC-FedAdagrad outperforms CC-FedSGD by more than 10\% in testing performances after only 300 rounds. CC-FedSGD could catch up the training curve of CC-FedAMS in Figure \ref{subfig:cifar10_resnet_adaptive_delay_5_niid_3_train}, while the large gap between testing curves persists in Figure \ref{subfig:cifar10_resnet_adaptive_delay_5_niid_3_test}.

\begin{figure*}[h]
\vspace*{-12pt}
\centering
\subfigure[Testing Accuracy vs. $\beta$]{
\hspace{0pt}
\includegraphics[width=.28\textwidth]{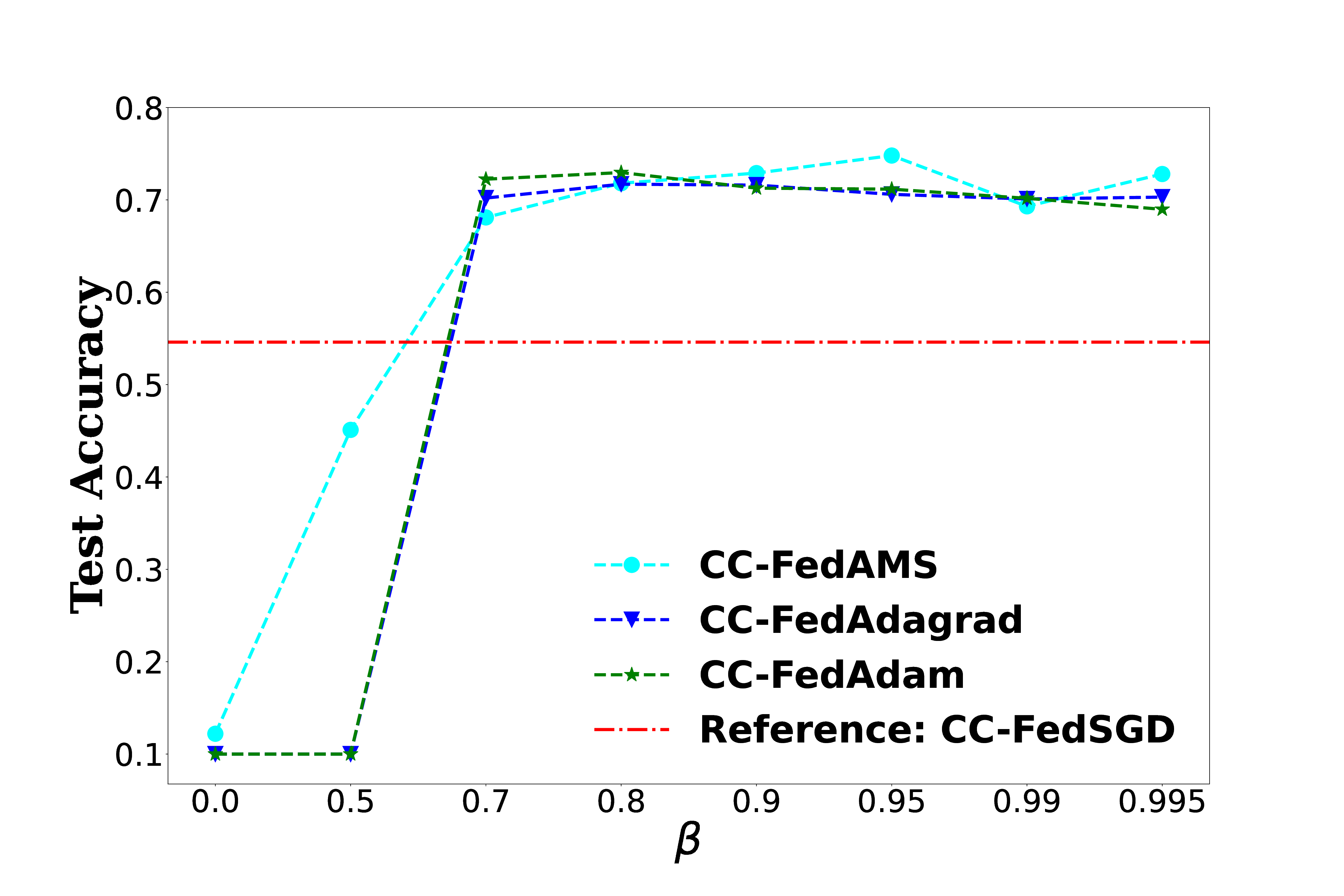}
\label{subfig:beta_sensitivity}
}
\subfigure[Testing Accuracy vs. $\gamma$]{
\hspace{0pt}
\includegraphics[width=.28\textwidth]{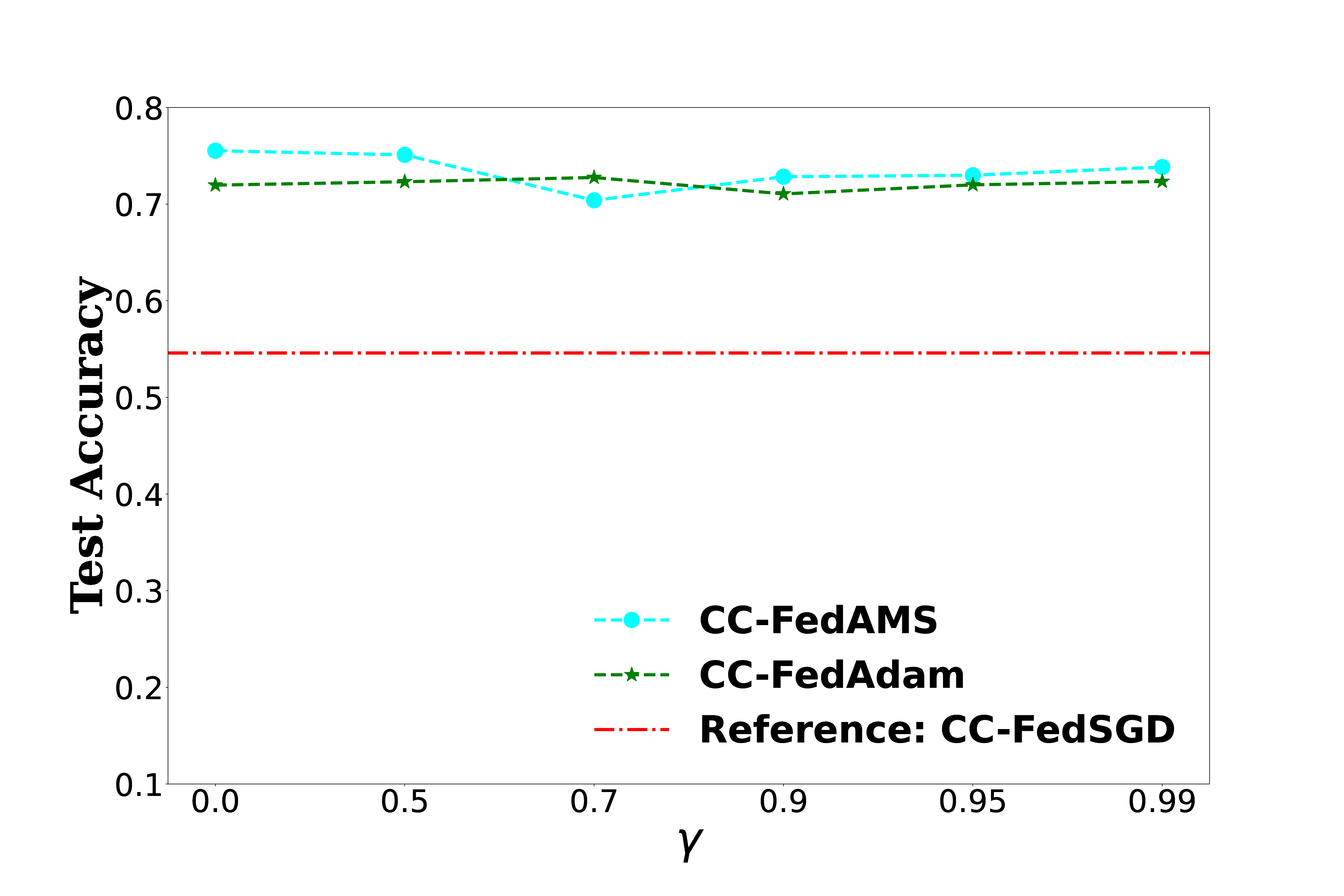}
\label{subfig:gamma_sensitivity}
}
\subfigure[Testing Accuracy vs. $\epsilon$]{
\hspace{0pt}
\includegraphics[width=.28\textwidth]{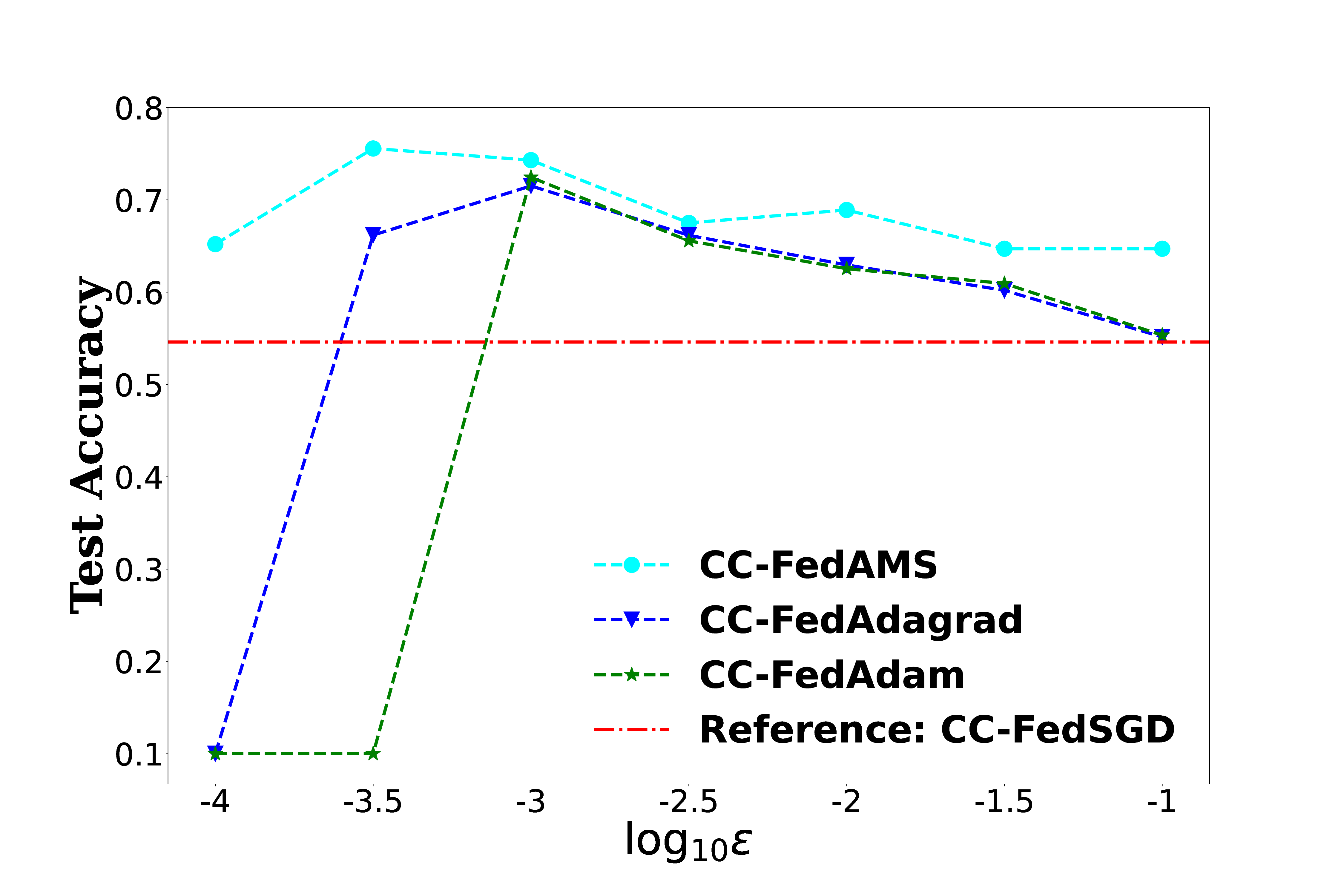}
\label{subfig:epsilon_sensitivity}
}
\vspace*{-12pt}
\caption{Hyperparameter sensitivity of CC-Federated Adaptive Optimizers. Note we do not plot CC-FedAdagrad in Figure \ref{subfig:gamma_sensitivity}, as there is no $\gamma$ in CC-FedAdagrad.}
\vspace*{-6pt}
\label{fig:hyper_sensitivity_result}
\end{figure*}

\begin{table}
\caption{Results on CIFAR-100 \& StackOverflow}
\begin{minipage}{\linewidth}
\centering
\begin{tabular}{c|c|c}
\toprule
\diagbox{Approach}{Dataset} & \thead{CIFAR-100 \\ (Accuracy)} & \thead{StackOverflow \\ (Recall@5)}\\ 
\midrule
CC-FedSGD & 44.0 &  29.8  \\ \hline
CC-FedAdagrad (\textbf{Ours}) & 47.3 &  \textbf{65.1} \\ \hline
CC-FedAdam (\textbf{Ours}) & \textbf{51.5} &  64.8  \\ \hline
\bottomrule
\end{tabular}
\label{extra_exp_table}
\end{minipage}
\end{table}

\vspace*{5pt}
\noindent\textbf{Improvement is Consistent under Various Settings}
\vspace*{5pt}

We test the performances across different settings of $\tau$, $R$, datasets, and architectures. 

We run experiments on more benchmarks, one image task (CIFAR-100) and one language task (tag prediction on StackOverflow). The results are presented in Table \ref{extra_exp_table}. Our proposed approaches outperform the baseline significantly across benchmarks, especially in NLP task (approximately 30\%), where gradient tends to be more sparse which adaptive optimizers can capitalize on \cite{reddi2020adaptive}.

In Figure \ref{subfig:cifar10_resnet_adaptive_delay_5_train} and \ref{subfig:cifar10_resnet_adaptive_delay_5_test}, we show the training/testing performances with a higher level of statistical heterogeneity, i.e. $\alpha=0.5$ while all other settings are identical to Figure \ref{subfig:cifar10_resnet_adaptive_delay_5_niid_3_train} and \ref{subfig:cifar10_resnet_adaptive_delay_5_niid_3_test}. The similarly superior performances of CC-FedAMS/CC-FedAdam/CC-FedAdagrad persist (by a 10\% margin over CC-FedSGD). 

In Figure \ref{fig:cifar10_resnet_adaptive_various_setting_result}, we show results across different $\tau$, $R$, and benchmarks. Note that we only show the testing curves here as it is a more important metric than training curves. We defer training curves to Appendix 
due to space limit.

Specifically, in Figure \ref{subfig:cifar10_resnet_adaptive_delay_10_test}, we test a higher level of system heterogeneity $\tau=10$, i.e., each client is allowed to randomly sample from the most recent 10 global models, while all other settings are identical to Figure \ref{subfig:cifar10_resnet_adaptive_delay_5_test} where $\tau=5$. Still, under a more asynchronous environment, it is obvious our proposed approaches outperform CC-FedSGD consistently. In Figure \ref{subfig:cifar10_resnet_delay_5_randomness_3_niid_05_test}, we test $R=3$, i.e., each client is allowed to randomly select a local epoch in $\{1,2, \dots, 9\}$ instead of $R=2$ in Figure \ref{subfig:cifar10_resnet_adaptive_delay_5_test}. We could observe, the performance gap between CC-FedAMS and CC-FedSGD actually increases to a 15\% margin with more heterogeneous local computations. 

Figure \ref{subfig:fmnist_simple_cnn_adaptive_delay_10_niid_03_test} shows the results of running a shallow CNN architecture from \cite{McMahan2017FedAvg} on Fashion-MNIST dataset. The CC-Federated Adaptive family of approaches still perform better than CC-FedSGD. But the gap is not as large as ResNet. The reason is likely due to adaptivity is most advantageous in settings where the feature space is sparse or anisotropic, which is common in training overparameterized deep models such as ResNet.

More experimental results are deferred to Appendix. 

\subsection{Ease of Hyperparameter Tuning}
\label{subsec:hyper_sensitivity}

We sweep a wide range of key hyperparameters $\{\beta,\gamma,\epsilon\}$ in CC-Federated Adaptive Optimizers under default experimental settings. Specifically, we sweep a $\beta$ grid: $\{0.0,0.5,0.7,0.8,0.9, \dots, 0.995\}$, $\epsilon$ grid: $\{10^{-4},10^{-3.5},\dots,10^{-1}\}$, and $\gamma$ grid: $\{0.0,0.5,0.7,0.9,0.95,0.99\}$, and report results in Figure \ref{fig:hyper_sensitivity_result}. We plot the best performance of CC-FedSGD as a reference. We could see it is quite easy to tune the hyperparameters to obtain a much better performance compared to CC-FedSGD, e.g., $\beta\ge0.7$, $\epsilon\in\left[10^{-3.5},10^{-1.5}\right]$ and $\gamma$ is flexible.

\section{Related Work}
\label{sec:related_work}

In this section, we discuss related work and how this paper develops on top of prior arts. We mainly review works on (1) Adaptive Optimization Approaches, (2) Server-Centric Federated Optimization, and (3) System Heterogeneity Aware FL.

\subsection{Adaptive Optimization Approaches}

Machine learning models have been widely applied in many different domains, e.g. \cite{Mnih2013PlayingAW,He2016DeepResNet,Devlin2019BERT,Jure2017GNN,google16deep&wide,vaswani2017attention,SuoICHI19,Xun2020CorrelationNF,Wen2023FeaturesplittingAF,Wen24Nonconvex,wei2023towards,Wang24GraphICML}, and the de facto optimizer for these ML models especially in the era of deep models is a large class of adaptive optimizers represented by Adam. 

There is a wealth of work on adaptive optimization approaches in non-federated settings. As this paper is mainly focused on FL, we only provide a brief review of this line of research. SGD type optimizers rely heavily and sensitively on proper hyperparameters (learning rate, batch size) setup \cite{Smith18DontDecay,GoyalDGNWKTJH17LargeMinibatch,jastrzebski2018three,London2017APA,He19ControlBatch,Sun21KDDHyperparameter,Sun22KDDHyper,Sun2023TKDD,Sun23Enhance} and also motivated by the poor performances of SGD type optimizers in presence of sparse features and heavy-tail stochastic gradient noise distributions, adaptive optimizers, is shown to converge much faster than SGD in many applications \cite{Wilson2017Generalization}. Most adaptive optimizers share similar design, i.e. scale coordinates of the gradient by square roots of some form of averaging of the squared coordinates on a per-feature basis \citep{reddi18adam_convergence}. Most representative adaptive optimizers include AdaGrad \citep{duchi11adaptive}, Adam \citep{Kingma2015AdamAM}, and their variants, e.g.  RMSProp \citep{tieleman2012rmsprop}, AdaDelta \citep{Zeiler2012ADADELTAAA}, AdaBound \citep{luo2018adabound}, AMSGrad \citep{reddi18adam_convergence}, AdaBelief \citep{zhuang2020adabelief}, RAdam \citep{Liu2020RAdam}.

Note that most of the adaptive optimizers listed above can serve as a plug-and-play server-side module in our proposed client-centric federated adaptive optimization framework, similarly as how we construct CC-FedAdam. This paper shows the empirical and theoretical properties of three of them, i.e. Adam/AMSGrad/Adagrad. 

\subsection{Server-Centric Federated Optimization}

As we explained in Section \ref{sec:intro}, most of the existing federated optimization researches belong to the server-centric FL regime, which naturally distinguish from this paper. We mainly review the works through the lens of how they mitigate \textit{statistical heterogeneity}, which this paper also considers as a source of the \textit{client drift}. We refer readers to \cite{Wang2021survey_fed_opt} for a complete survey of federated optimization.

FedAvg can diverge under high degrees of heterogeneity in worst case scenarios. To stabilize training even in presence of heterogeneity, the following approaches have been proposed, (1) penalize local models that are far away from the global model by regularizing the local objectives, e.g., FedProx \cite{Li20FedProx}, FedPD \cite{zhang2020fedpd}, and FedDyn \cite{acar2021feddyn}; (2) de-bias client's local model updates using techniques like control variates in SCAFFOLD \cite{karimireddy2020scaffold} or local posterior sampling in FedPA \cite{alshedivat2021fedpa}; and (3) treat local updates as pseudo-gradient and incorporate momentum/adaptive optimizers in either server or client optimization, e.g. FedAvgM \cite{Hsu2019MeasuringTE}, SlowMo \cite{Wang2020SlowMo}, FedAdam \cite{reddi2020adaptive}, FedAMS \cite{wang22adaptive}, which are most related to this paper. These server-centric federated momentum/adaptive works largely ignore system heterogeneity, which this paper focuses on.

\vspace*{-10pt}
\subsection{System Heterogeneity Aware FL}

Existing works that study system heterogeneity can be categorized into the following classes, 

(1) \textit{Heterogeneous local computing but synchronous aggregation}. \cite{Wang20FedNova} first shows heterogeneous number of local updates results in a mismatched global convergence and proposes an approach called FedNova to effectively alleviate such mismatch. Other representative approaches in this class include \cite{Basu19Qsparse-Local-SGD,Avdiukhin21arbitrarycommunication}.

(2) \textit{Flexible participation scheme but synchronous aggregation}. There are two themes of research in this class. The first line concentrates on biased client selection \citep{Nishio2018ClientSelection, Chen2020ClientSampling, cho22biased_selection}, most of which study the biased sampling strategy that the probability a client is sampled is related to its local loss to accelerate training \citep{Goetz2019ActiveFL,Ribero2020clientsampling}. This line of works is less related to this paper. The second line studies different patterns of client participation \cite{Yan2020DistributedClient,gu2021arbitraryunavailable,wang2022arbitraryparticipation,Ruan2021FlexibleParticipation}. For example, FedLaAvg \cite{Yan2020DistributedClient} and MIFA \cite{gu2021arbitraryunavailable} both allow clients to participate arbitrarily as long as all devices participate once in the first round. But the clients are still strictly synchronized and the theoretical analysis of MIFA requires a Lipschitz Hessian assumption, which is too stringent and our analysis gets rid of such assumption.

(3) \textit{Asynchronous aggregation}. This class is mostly related to this paper, but there are very limited works on asynchronous FL. Most related to ours are \cite{Xie2019AsynchronousFO,Nguyen2021FedBuff,Yang2021AnarchicFL}. Specifically, FedAsync \cite{Xie2019AsynchronousFO} allows the server immediately updates the global model whenever it receives a single local model to enable asynchrony. However, since the update from each individual client is no longer anonymized in an aggregate, FedAsync has privacy concerns. Moreover, their theoretical analysis only applies to convex objectives. FedBuff \cite{Nguyen2021FedBuff} and AFL \cite{Yang2021AnarchicFL} both propose to trigger a global update whenever the server receives $m$ local updates to ensure anonymity. However, FedBuff only considers homogeneous local computing. And both FedBuff and AFL ignore the lack of adaptivity issue.

In summary, the above existing works either relax only one of the unrealistic assumptions server-centric FL makes but not all of them, or require stringent assumptions in theoretical analysis.

\vspace*{-4pt}
\section{Conclusion}
\label{sec:conclusion}

In this paper, we proposed the Client-Centric Federated Adaptive Optimization, which is a class of novel federated optimization approaches. In contrast to most existing literature, we enable arbitrary client participation, asynchronous server aggregation, and heterogeneous local computing to fully characterize the ubiquitous system heterogeneity in real-world applications. We show both theoretically and empirically the convincing performances of our proposed algorithms. To our best knowledge, this is the first work that addresses statistical/system heterogeneity and lack of adaptivity issues simultaneously. The proposed client-centric FL regime is of independent interest for future FL algorithmic development as a more realistic testbed. 

\newpage
\bibliographystyle{ACM-Reference-Format}
\bibliography{jianhui}


\begin{thebibliography}{103}


\ifx \showCODEN    \undefined \def \showCODEN     #1{\unskip}     \fi
\ifx \showDOI      \undefined \def \showDOI       #1{#1}\fi
\ifx \showISBNx    \undefined \def \showISBNx     #1{\unskip}     \fi
\ifx \showISBNxiii \undefined \def \showISBNxiii  #1{\unskip}     \fi
\ifx \showISSN     \undefined \def \showISSN      #1{\unskip}     \fi
\ifx \showLCCN     \undefined \def \showLCCN      #1{\unskip}     \fi
\ifx \shownote     \undefined \def \shownote      #1{#1}          \fi
\ifx \showarticletitle \undefined \def \showarticletitle #1{#1}   \fi
\ifx \showURL      \undefined \def \showURL       {\relax}        \fi
\providecommand\bibfield[2]{#2}
\providecommand\bibinfo[2]{#2}
\providecommand\natexlab[1]{#1}
\providecommand\showeprint[2][]{arXiv:#2}

\bibitem[Acar et~al\mbox{.}(2021)]%
        {acar2021feddyn}
\bibfield{author}{\bibinfo{person}{Durmus Alp~Emre Acar}, \bibinfo{person}{Yue Zhao}, \bibinfo{person}{Ramon Matas}, \bibinfo{person}{Matthew Mattina}, \bibinfo{person}{Paul Whatmough}, {and} \bibinfo{person}{Venkatesh Saligrama}.} \bibinfo{year}{2021}\natexlab{}.
\newblock \showarticletitle{Federated Learning Based on Dynamic Regularization}. In \bibinfo{booktitle}{\emph{International Conference on Learning Representations}}.
\newblock
\urldef\tempurl%
\url{https://openreview.net/forum?id=B7v4QMR6Z9w}
\showURL{%
\tempurl}


\bibitem[Al-Shedivat et~al\mbox{.}(2021)]%
        {alshedivat2021fedpa}
\bibfield{author}{\bibinfo{person}{Maruan Al-Shedivat}, \bibinfo{person}{Jennifer Gillenwater}, \bibinfo{person}{Eric Xing}, {and} \bibinfo{person}{Afshin Rostamizadeh}.} \bibinfo{year}{2021}\natexlab{}.
\newblock \showarticletitle{Federated Learning via Posterior Averaging: A New Perspective and Practical Algorithms}. In \bibinfo{booktitle}{\emph{International Conference on Learning Representations (ICLR)}}.
\newblock


\bibitem[Avdiukhin and Kasiviswanathan(2021)]%
        {Avdiukhin21arbitrarycommunication}
\bibfield{author}{\bibinfo{person}{Dmitrii Avdiukhin} {and} \bibinfo{person}{Shiva Kasiviswanathan}.} \bibinfo{year}{2021}\natexlab{}.
\newblock \showarticletitle{Federated Learning under Arbitrary Communication Patterns}. In \bibinfo{booktitle}{\emph{Proceedings of the 38th International Conference on Machine Learning}} \emph{(\bibinfo{series}{Proceedings of Machine Learning Research}, Vol.~\bibinfo{volume}{139})}, \bibfield{editor}{\bibinfo{person}{Marina Meila} {and} \bibinfo{person}{Tong Zhang}} (Eds.). \bibinfo{publisher}{PMLR}, \bibinfo{pages}{425--435}.
\newblock
\urldef\tempurl%
\url{https://proceedings.mlr.press/v139/avdiukhin21a.html}
\showURL{%
\tempurl}


\bibitem[Bao et~al\mbox{.}(2020)]%
        {bao2020fast}
\bibfield{author}{\bibinfo{person}{Runxue Bao}, \bibinfo{person}{Bin Gu}, {and} \bibinfo{person}{Heng Huang}.} \bibinfo{year}{2020}\natexlab{}.
\newblock \showarticletitle{Fast oscar and owl regression via safe screening rules}. In \bibinfo{booktitle}{\emph{International conference on machine learning}}. PMLR, \bibinfo{pages}{653--663}.
\newblock


\bibitem[Bao et~al\mbox{.}(2022)]%
        {bao2022doubly}
\bibfield{author}{\bibinfo{person}{Runxue Bao}, \bibinfo{person}{Xidong Wu}, \bibinfo{person}{Wenhan Xian}, {and} \bibinfo{person}{Heng Huang}.} \bibinfo{year}{2022}\natexlab{}.
\newblock \showarticletitle{Doubly Sparse Asynchronous Learning}. In \bibinfo{booktitle}{\emph{The 31st International Joint Conference on Artificial Intelligence (IJCAI 2022)}}.
\newblock


\bibitem[Basu et~al\mbox{.}(2019)]%
        {Basu19Qsparse-Local-SGD}
\bibfield{author}{\bibinfo{person}{Debraj Basu}, \bibinfo{person}{Deepesh Data}, \bibinfo{person}{Can Karakus}, {and} \bibinfo{person}{Suhas Diggavi}.} \bibinfo{year}{2019}\natexlab{}.
\newblock \bibinfo{booktitle}{\emph{Qsparse-Local-SGD: Distributed SGD with Quantization, Sparsification, and Local Computations}}.
\newblock \bibinfo{publisher}{Curran Associates Inc.}, \bibinfo{address}{Red Hook, NY, USA}.
\newblock


\bibitem[Brown et~al\mbox{.}(2020)]%
        {brown2020GPT}
\bibfield{author}{\bibinfo{person}{Tom Brown}, \bibinfo{person}{Benjamin Mann}, \bibinfo{person}{Nick Ryder}, \bibinfo{person}{Melanie Subbiah}, \bibinfo{person}{Jared~D Kaplan}, \bibinfo{person}{Prafulla Dhariwal}, \bibinfo{person}{Arvind Neelakantan}, \bibinfo{person}{Pranav Shyam}, \bibinfo{person}{Girish Sastry}, \bibinfo{person}{Amanda Askell}, {et~al\mbox{.}}} \bibinfo{year}{2020}\natexlab{}.
\newblock \showarticletitle{Language models are few-shot learners}.
\newblock \bibinfo{journal}{\emph{Advances in neural information processing systems}}  \bibinfo{volume}{33} (\bibinfo{year}{2020}), \bibinfo{pages}{1877--1901}.
\newblock


\bibitem[BUKATY(2019)]%
        {California_Consumer_Privacy_Act_CCPA}
\bibfield{author}{\bibinfo{person}{PRESTON BUKATY}.} \bibinfo{year}{2019}\natexlab{}.
\newblock \bibinfo{booktitle}{\emph{The California Consumer Privacy Act (CCPA): An implementation guide}}.
\newblock \bibinfo{publisher}{IT Governance Publishing}.
\newblock
\showISBNx{9781787781320}
\urldef\tempurl%
\url{http://www.jstor.org/stable/j.ctvjghvnn}
\showURL{%
\tempurl}


\bibitem[Che et~al\mbox{.}(2021)]%
        {che2021fedtrinet}
\bibfield{author}{\bibinfo{person}{Liwei Che}, \bibinfo{person}{Zewei Long}, \bibinfo{person}{Jiaqi Wang}, \bibinfo{person}{Yaqing Wang}, \bibinfo{person}{Houping Xiao}, {and} \bibinfo{person}{Fenglong Ma}.} \bibinfo{year}{2021}\natexlab{}.
\newblock \showarticletitle{FedTriNet: A pseudo labeling method with three players for federated semi-supervised learning}. In \bibinfo{booktitle}{\emph{2021 IEEE International Conference on Big Data (Big Data)}}. IEEE, \bibinfo{pages}{715--724}.
\newblock


\bibitem[Che et~al\mbox{.}(2023)]%
        {che2023multimodal}
\bibfield{author}{\bibinfo{person}{Liwei Che}, \bibinfo{person}{Jiaqi Wang}, \bibinfo{person}{Yao Zhou}, {and} \bibinfo{person}{Fenglong Ma}.} \bibinfo{year}{2023}\natexlab{}.
\newblock \showarticletitle{Multimodal federated learning: A survey}.
\newblock \bibinfo{journal}{\emph{Sensors}} \bibinfo{volume}{23}, \bibinfo{number}{15} (\bibinfo{year}{2023}), \bibinfo{pages}{6986}.
\newblock


\bibitem[Chen et~al\mbox{.}(2020a)]%
        {Chen2020ClientSampling}
\bibfield{author}{\bibinfo{person}{Wenlin Chen}, \bibinfo{person}{Samuel Horv{\'a}th}, {and} \bibinfo{person}{Peter Richt{\'a}rik}.} \bibinfo{year}{2020}\natexlab{a}.
\newblock \showarticletitle{Optimal Client Sampling for Federated Learning}.
\newblock \bibinfo{journal}{\emph{ArXiv}}  \bibinfo{volume}{abs/2010.13723} (\bibinfo{year}{2020}).
\newblock


\bibitem[Chen et~al\mbox{.}(2020b)]%
        {Chen20LocalAMSGrad}
\bibfield{author}{\bibinfo{person}{Xiangyi Chen}, \bibinfo{person}{Xiaoyun Li}, {and} \bibinfo{person}{Ping Li}.} \bibinfo{year}{2020}\natexlab{b}.
\newblock \showarticletitle{Toward Communication Efficient Adaptive Gradient Method}. In \bibinfo{booktitle}{\emph{Proceedings of the 2020 ACM-IMS on Foundations of Data Science Conference}} (Virtual Event, USA) \emph{(\bibinfo{series}{FODS '20})}. \bibinfo{publisher}{Association for Computing Machinery}, \bibinfo{address}{New York, NY, USA}, \bibinfo{pages}{119–128}.
\newblock
\showISBNx{9781450381031}
\urldef\tempurl%
\url{https://doi.org/10.1145/3412815.3416891}
\showDOI{\tempurl}


\bibitem[Chen et~al\mbox{.}(2019)]%
        {chen2018adam}
\bibfield{author}{\bibinfo{person}{Xiangyi Chen}, \bibinfo{person}{Sijia Liu}, \bibinfo{person}{Ruoyu Sun}, {and} \bibinfo{person}{Mingyi Hong}.} \bibinfo{year}{2019}\natexlab{}.
\newblock \showarticletitle{On the Convergence of A Class of Adam-Type Algorithms for Non-Convex Optimization}. In \bibinfo{booktitle}{\emph{International Conference on Learning Representations}}.
\newblock
\urldef\tempurl%
\url{https://openreview.net/forum?id=H1x-x309tm}
\showURL{%
\tempurl}


\bibitem[Cheng et~al\mbox{.}(2016)]%
        {google16deep&wide}
\bibfield{author}{\bibinfo{person}{Heng-Tze Cheng}, \bibinfo{person}{Levent Koc}, \bibinfo{person}{Jeremiah Harmsen}, \bibinfo{person}{Tal Shaked}, \bibinfo{person}{Tushar Chandra}, \bibinfo{person}{Hrishikesh~B. Aradhye}, \bibinfo{person}{Glen Anderson}, \bibinfo{person}{Gregory~S. Corrado}, \bibinfo{person}{Wei Chai}, \bibinfo{person}{Mustafa Ispir}, \bibinfo{person}{Rohan Anil}, \bibinfo{person}{Zakaria Haque}, \bibinfo{person}{Lichan Hong}, \bibinfo{person}{Vihan Jain}, \bibinfo{person}{Xiaobing Liu}, {and} \bibinfo{person}{Hemal Shah}.} \bibinfo{year}{2016}\natexlab{}.
\newblock \showarticletitle{Wide \& Deep Learning for Recommender Systems}.
\newblock \bibinfo{journal}{\emph{Proceedings of the 1st Workshop on Deep Learning for Recommender Systems}} (\bibinfo{year}{2016}).
\newblock
\urldef\tempurl%
\url{https://api.semanticscholar.org/CorpusID:3352400}
\showURL{%
\tempurl}


\bibitem[Devlin et~al\mbox{.}(2018)]%
        {devlin2018bert}
\bibfield{author}{\bibinfo{person}{Jacob Devlin}, \bibinfo{person}{Ming-Wei Chang}, \bibinfo{person}{Kenton Lee}, {and} \bibinfo{person}{Kristina Toutanova}.} \bibinfo{year}{2018}\natexlab{}.
\newblock \showarticletitle{Bert: Pre-training of deep bidirectional transformers for language understanding}.
\newblock \bibinfo{journal}{\emph{arXiv preprint arXiv:1810.04805}} (\bibinfo{year}{2018}).
\newblock


\bibitem[Devlin et~al\mbox{.}(2019)]%
        {Devlin2019BERT}
\bibfield{author}{\bibinfo{person}{Jacob Devlin}, \bibinfo{person}{Ming-Wei Chang}, \bibinfo{person}{Kenton Lee}, {and} \bibinfo{person}{Kristina Toutanova}.} \bibinfo{year}{2019}\natexlab{}.
\newblock \showarticletitle{BERT: Pre-training of Deep Bidirectional Transformers for Language Understanding}. In \bibinfo{booktitle}{\emph{North American Chapter of the Association for Computational Linguistics}}.
\newblock
\urldef\tempurl%
\url{https://api.semanticscholar.org/CorpusID:52967399}
\showURL{%
\tempurl}


\bibitem[Dosovitskiy et~al\mbox{.}(2021)]%
        {dosovitskiy2021VIT}
\bibfield{author}{\bibinfo{person}{Alexey Dosovitskiy}, \bibinfo{person}{Lucas Beyer}, \bibinfo{person}{Alexander Kolesnikov}, \bibinfo{person}{Dirk Weissenborn}, \bibinfo{person}{Xiaohua Zhai}, \bibinfo{person}{Thomas Unterthiner}, \bibinfo{person}{Mostafa Dehghani}, \bibinfo{person}{Matthias Minderer}, \bibinfo{person}{Georg Heigold}, \bibinfo{person}{Sylvain Gelly}, \bibinfo{person}{Jakob Uszkoreit}, {and} \bibinfo{person}{Neil Houlsby}.} \bibinfo{year}{2021}\natexlab{}.
\newblock \showarticletitle{An Image is Worth 16x16 Words: Transformers for Image Recognition at Scale}. In \bibinfo{booktitle}{\emph{International Conference on Learning Representations}}.
\newblock
\urldef\tempurl%
\url{https://openreview.net/forum?id=YicbFdNTTy}
\showURL{%
\tempurl}


\bibitem[Duchi et~al\mbox{.}(2011)]%
        {duchi11adaptive}
\bibfield{author}{\bibinfo{person}{John Duchi}, \bibinfo{person}{Elad Hazan}, {and} \bibinfo{person}{Yoram Singer}.} \bibinfo{year}{2011}\natexlab{}.
\newblock \showarticletitle{Adaptive Subgradient Methods for Online Learning and Stochastic Optimization}.
\newblock \bibinfo{journal}{\emph{Journal of Machine Learning Research}} \bibinfo{volume}{12}, \bibinfo{number}{61} (\bibinfo{year}{2011}), \bibinfo{pages}{2121--2159}.
\newblock
\urldef\tempurl%
\url{http://jmlr.org/papers/v12/duchi11a.html}
\showURL{%
\tempurl}


\bibitem[{European Commission}(2016)]%
        {european_commission_regulation_2016}
\bibfield{author}{\bibinfo{person}{{European Commission}}.} \bibinfo{year}{2016}\natexlab{}.
\newblock \bibinfo{title}{Regulation ({EU}) 2016/679 of the {European} {Parliament} and of the {Council} of 27 {April} 2016 on the protection of natural persons with regard to the processing of personal data and on the free movement of such data, and repealing {Directive} 95/46/{EC} ({General} {Data} {Protection} {Regulation}) ({Text} with {EEA} relevance)}.
\newblock
\newblock
\urldef\tempurl%
\url{https://eur-lex.europa.eu/eli/reg/2016/679/oj}
\showURL{%
\tempurl}


\bibitem[Goetz et~al\mbox{.}(2019)]%
        {Goetz2019ActiveFL}
\bibfield{author}{\bibinfo{person}{Jack Goetz}, \bibinfo{person}{Kshitiz Malik}, \bibinfo{person}{Duc~Viet Bui}, \bibinfo{person}{Seungwhan Moon}, \bibinfo{person}{Honglei Liu}, {and} \bibinfo{person}{Anuj Kumar}.} \bibinfo{year}{2019}\natexlab{}.
\newblock \showarticletitle{Active Federated Learning}.
\newblock \bibinfo{journal}{\emph{ArXiv}}  \bibinfo{volume}{abs/1909.12641} (\bibinfo{year}{2019}).
\newblock


\bibitem[Goyal et~al\mbox{.}(2017)]%
        {GoyalDGNWKTJH17LargeMinibatch}
\bibfield{author}{\bibinfo{person}{Priya Goyal}, \bibinfo{person}{Piotr Doll{\'{a}}r}, \bibinfo{person}{Ross~B. Girshick}, \bibinfo{person}{Pieter Noordhuis}, \bibinfo{person}{Lukasz Wesolowski}, \bibinfo{person}{Aapo Kyrola}, \bibinfo{person}{Andrew Tulloch}, \bibinfo{person}{Yangqing Jia}, {and} \bibinfo{person}{Kaiming He}.} \bibinfo{year}{2017}\natexlab{}.
\newblock \showarticletitle{Accurate, Large Minibatch {SGD:} Training ImageNet in 1 Hour}.
\newblock \bibinfo{journal}{\emph{CoRR}}  \bibinfo{volume}{abs/1706.02677} (\bibinfo{year}{2017}).
\newblock
\showeprint[arxiv]{1706.02677}
\urldef\tempurl%
\url{http://arxiv.org/abs/1706.02677}
\showURL{%
\tempurl}


\bibitem[Gu et~al\mbox{.}(2021)]%
        {gu2021arbitraryunavailable}
\bibfield{author}{\bibinfo{person}{Xinran Gu}, \bibinfo{person}{Kaixuan Huang}, \bibinfo{person}{Jingzhao Zhang}, {and} \bibinfo{person}{Longbo Huang}.} \bibinfo{year}{2021}\natexlab{}.
\newblock \showarticletitle{Fast Federated Learning in the Presence of Arbitrary Device Unavailability}. In \bibinfo{booktitle}{\emph{Advances in Neural Information Processing Systems}}, \bibfield{editor}{\bibinfo{person}{A.~Beygelzimer}, \bibinfo{person}{Y.~Dauphin}, \bibinfo{person}{P.~Liang}, {and} \bibinfo{person}{J.~Wortman Vaughan}} (Eds.).
\newblock
\urldef\tempurl%
\url{https://openreview.net/forum?id=1_gaHBaRYt}
\showURL{%
\tempurl}


\bibitem[Hamilton et~al\mbox{.}(2017)]%
        {Jure2017GNN}
\bibfield{author}{\bibinfo{person}{William~L. Hamilton}, \bibinfo{person}{Zhitao Ying}, {and} \bibinfo{person}{Jure Leskovec}.} \bibinfo{year}{2017}\natexlab{}.
\newblock \showarticletitle{Inductive Representation Learning on Large Graphs}. In \bibinfo{booktitle}{\emph{Neural Information Processing Systems}}.
\newblock
\urldef\tempurl%
\url{https://api.semanticscholar.org/CorpusID:4755450}
\showURL{%
\tempurl}


\bibitem[He et~al\mbox{.}(2019)]%
        {He19ControlBatch}
\bibfield{author}{\bibinfo{person}{Fengxiang He}, \bibinfo{person}{Tongliang Liu}, {and} \bibinfo{person}{Dacheng Tao}.} \bibinfo{year}{2019}\natexlab{}.
\newblock \showarticletitle{Control Batch Size and Learning Rate to Generalize Well: Theoretical and Empirical Evidence}.
\newblock In \bibinfo{booktitle}{\emph{Advances in Neural Information Processing Systems 32}}. \bibinfo{publisher}{Curran Associates, Inc.}, \bibinfo{pages}{1143--1152}.
\newblock


\bibitem[He et~al\mbox{.}(2016)]%
        {He2016DeepResNet}
\bibfield{author}{\bibinfo{person}{Kaiming He}, \bibinfo{person}{X. Zhang}, \bibinfo{person}{Shaoqing Ren}, {and} \bibinfo{person}{Jian Sun}.} \bibinfo{year}{2016}\natexlab{}.
\newblock \showarticletitle{Deep Residual Learning for Image Recognition}.
\newblock \bibinfo{journal}{\emph{2016 IEEE Conference on Computer Vision and Pattern Recognition (CVPR)}} (\bibinfo{year}{2016}), \bibinfo{pages}{770--778}.
\newblock


\bibitem[{He} et~al\mbox{.}(2016)]%
        {He16Res}
\bibfield{author}{\bibinfo{person}{K. {He}}, \bibinfo{person}{X. {Zhang}}, \bibinfo{person}{S. {Ren}}, {and} \bibinfo{person}{J. {Sun}}.} \bibinfo{year}{2016}\natexlab{}.
\newblock \showarticletitle{Deep Residual Learning for Image Recognition}. In \bibinfo{booktitle}{\emph{2016 IEEE Conference on Computer Vision and Pattern Recognition (CVPR)}}. \bibinfo{pages}{770--778}.
\newblock


\bibitem[Hsu et~al\mbox{.}(2019)]%
        {Hsu2019MeasuringTE}
\bibfield{author}{\bibinfo{person}{Tzu-Ming~Harry Hsu}, \bibinfo{person}{Qi}, {and} \bibinfo{person}{Matthew Brown}.} \bibinfo{year}{2019}\natexlab{}.
\newblock \showarticletitle{Measuring the Effects of Non-Identical Data Distribution for Federated Visual Classification}.
\newblock \bibinfo{journal}{\emph{ArXiv}}  \bibinfo{volume}{abs/1909.06335} (\bibinfo{year}{2019}).
\newblock


\bibitem[Huang et~al\mbox{.}(2024)]%
        {huang2024overcoming}
\bibfield{author}{\bibinfo{person}{Chun-Yin Huang}, \bibinfo{person}{Kartik Srinivas}, \bibinfo{person}{Xin Zhang}, {and} \bibinfo{person}{Xiaoxiao Li}.} \bibinfo{year}{2024}\natexlab{}.
\newblock \showarticletitle{Overcoming Data and Model Heterogeneities in Decentralized Federated Learning via Synthetic Anchors}.
\newblock \bibinfo{journal}{\emph{arXiv preprint arXiv:2405.11525}} (\bibinfo{year}{2024}).
\newblock


\bibitem[{Huang} et~al\mbox{.}(2017)]%
        {Huang2017DenseNet}
\bibfield{author}{\bibinfo{person}{G. {Huang}}, \bibinfo{person}{Z. {Liu}}, \bibinfo{person}{L. {Van Der Maaten}}, {and} \bibinfo{person}{K.~Q. {Weinberger}}.} \bibinfo{year}{2017}\natexlab{}.
\newblock \showarticletitle{Densely Connected Convolutional Networks}. In \bibinfo{booktitle}{\emph{2017 IEEE Conference on Computer Vision and Pattern Recognition (CVPR)}}. \bibinfo{pages}{2261--2269}.
\newblock


\bibitem[Jastrzębski et~al\mbox{.}(2018)]%
        {jastrzebski2018three}
\bibfield{author}{\bibinfo{person}{Stanisław Jastrzębski}, \bibinfo{person}{Zac Kenton}, \bibinfo{person}{Devansh Arpit}, \bibinfo{person}{Nicolas Ballas}, \bibinfo{person}{Asja Fischer}, \bibinfo{person}{Amos Storkey}, {and} \bibinfo{person}{Yoshua Bengio}.} \bibinfo{year}{2018}\natexlab{}.
\newblock \bibinfo{title}{Three factors influencing minima in {SGD}}.
\newblock
\newblock
\urldef\tempurl%
\url{https://openreview.net/forum?id=rJma2bZCW}
\showURL{%
\tempurl}


\bibitem[Jee~Cho et~al\mbox{.}(2022)]%
        {cho22biased_selection}
\bibfield{author}{\bibinfo{person}{Yae Jee~Cho}, \bibinfo{person}{Jianyu Wang}, {and} \bibinfo{person}{Gauri Joshi}.} \bibinfo{year}{2022}\natexlab{}.
\newblock \showarticletitle{Towards Understanding Biased Client Selection in Federated Learning}. In \bibinfo{booktitle}{\emph{Proceedings of The 25th International Conference on Artificial Intelligence and Statistics}} \emph{(\bibinfo{series}{Proceedings of Machine Learning Research}, Vol.~\bibinfo{volume}{151})}, \bibfield{editor}{\bibinfo{person}{Gustau Camps-Valls}, \bibinfo{person}{Francisco J.~R. Ruiz}, {and} \bibinfo{person}{Isabel Valera}} (Eds.). \bibinfo{publisher}{PMLR}, \bibinfo{pages}{10351--10375}.
\newblock
\urldef\tempurl%
\url{https://proceedings.mlr.press/v151/jee-cho22a.html}
\showURL{%
\tempurl}


\bibitem[Kairouz et~al\mbox{.}(2021)]%
        {Kairouz21AdvancesProblems}
\bibfield{author}{\bibinfo{person}{Peter Kairouz}, \bibinfo{person}{H.~Brendan McMahan}, \bibinfo{person}{Brendan Avent}, \bibinfo{person}{Aurélien Bellet}, \bibinfo{person}{Mehdi Bennis}, \bibinfo{person}{Arjun~Nitin Bhagoji}, \bibinfo{person}{Kallista~A. Bonawitz}, \bibinfo{person}{Zachary Charles}, \bibinfo{person}{Graham Cormode}, \bibinfo{person}{Rachel Cummings}, \bibinfo{person}{Rafael G.~L. D'Oliveira}, \bibinfo{person}{Hubert Eichner}, \bibinfo{person}{Salim~El Rouayheb}, \bibinfo{person}{David Evans}, \bibinfo{person}{Josh Gardner}, \bibinfo{person}{Zachary Garrett}, \bibinfo{person}{Adrià Gascón}, \bibinfo{person}{Badih Ghazi}, \bibinfo{person}{Phillip~B. Gibbons}, \bibinfo{person}{Marco Gruteser}, \bibinfo{person}{Zaïd Harchaoui}, \bibinfo{person}{Chaoyang He}, \bibinfo{person}{Lie He}, \bibinfo{person}{Zhouyuan Huo}, \bibinfo{person}{Ben Hutchinson}, \bibinfo{person}{Justin Hsu}, \bibinfo{person}{Martin Jaggi}, \bibinfo{person}{Tara Javidi}, \bibinfo{person}{Gauri Joshi},
  \bibinfo{person}{Mikhail Khodak}, \bibinfo{person}{Jakub Konecný}, \bibinfo{person}{Aleksandra Korolova}, \bibinfo{person}{Farinaz Koushanfar}, \bibinfo{person}{Sanmi Koyejo}, \bibinfo{person}{Tancrède Lepoint}, \bibinfo{person}{Yang Liu}, \bibinfo{person}{Prateek Mittal}, \bibinfo{person}{Mehryar Mohri}, \bibinfo{person}{Richard Nock}, \bibinfo{person}{Ayfer Özgür}, \bibinfo{person}{Rasmus Pagh}, \bibinfo{person}{Hang Qi}, \bibinfo{person}{Daniel Ramage}, \bibinfo{person}{Ramesh Raskar}, \bibinfo{person}{Mariana Raykova}, \bibinfo{person}{Dawn Song}, \bibinfo{person}{Weikang Song}, \bibinfo{person}{Sebastian~U. Stich}, \bibinfo{person}{Ziteng Sun}, \bibinfo{person}{Ananda~Theertha Suresh}, \bibinfo{person}{Florian Tramèr}, \bibinfo{person}{Praneeth Vepakomma}, \bibinfo{person}{Jianyu Wang}, \bibinfo{person}{Li Xiong}, \bibinfo{person}{Zheng Xu}, \bibinfo{person}{Qiang Yang}, \bibinfo{person}{Felix~X. Yu}, \bibinfo{person}{Han Yu}, {and} \bibinfo{person}{Sen Zhao}.} \bibinfo{year}{2021}\natexlab{}.
\newblock \showarticletitle{Advances and Open Problems in Federated Learning}.
\newblock \bibinfo{journal}{\emph{Found. Trends Mach. Learn.}} \bibinfo{volume}{14}, \bibinfo{number}{1-2} (\bibinfo{year}{2021}), \bibinfo{pages}{1--210}.
\newblock
\urldef\tempurl%
\url{https://doi.org/10.1561/2200000083}
\showURL{%
\tempurl}


\bibitem[Karimireddy et~al\mbox{.}(2020)]%
        {karimireddy2020scaffold}
\bibfield{author}{\bibinfo{person}{Sai~Praneeth Karimireddy}, \bibinfo{person}{Satyen Kale}, \bibinfo{person}{Mehryar Mohri}, \bibinfo{person}{Sashank Reddi}, \bibinfo{person}{Sebastian Stich}, {and} \bibinfo{person}{Ananda~Theertha Suresh}.} \bibinfo{year}{2020}\natexlab{}.
\newblock \showarticletitle{Scaffold: Stochastic controlled averaging for federated learning}. In \bibinfo{booktitle}{\emph{International Conference on Machine Learning}}. PMLR, \bibinfo{pages}{5132--5143}.
\newblock


\bibitem[Khanduri et~al\mbox{.}(2021)]%
        {khanduri2021stem}
\bibfield{author}{\bibinfo{person}{Prashant Khanduri}, \bibinfo{person}{Pranay Sharma}, \bibinfo{person}{Haibo Yang}, \bibinfo{person}{Mingyi Hong}, \bibinfo{person}{Jia Liu}, \bibinfo{person}{Ketan Rajawat}, {and} \bibinfo{person}{Pramod Varshney}.} \bibinfo{year}{2021}\natexlab{}.
\newblock \showarticletitle{Stem: A stochastic two-sided momentum algorithm achieving near-optimal sample and communication complexities for federated learning}.
\newblock \bibinfo{journal}{\emph{Advances in Neural Information Processing Systems}}  \bibinfo{volume}{34} (\bibinfo{year}{2021}).
\newblock


\bibitem[Kingma and Ba(2015)]%
        {Kingma2015AdamAM}
\bibfield{author}{\bibinfo{person}{Diederik~P. Kingma} {and} \bibinfo{person}{Jimmy Ba}.} \bibinfo{year}{2015}\natexlab{}.
\newblock \showarticletitle{Adam: A Method for Stochastic Optimization}.
\newblock \bibinfo{journal}{\emph{CoRR}}  \bibinfo{volume}{abs/1412.6980} (\bibinfo{year}{2015}).
\newblock


\bibitem[Krizhevsky(2009)]%
        {Krizhevsky2009CIFAR}
\bibfield{author}{\bibinfo{person}{Alex Krizhevsky}.} \bibinfo{year}{2009}\natexlab{}.
\newblock \showarticletitle{Learning Multiple Layers of Features from Tiny Images}.
\newblock


\bibitem[Li et~al\mbox{.}(2018)]%
        {Li2018Visualization}
\bibfield{author}{\bibinfo{person}{Hao Li}, \bibinfo{person}{Zheng Xu}, \bibinfo{person}{Gavin Taylor}, \bibinfo{person}{Christoph Studer}, {and} \bibinfo{person}{Tom Goldstein}.} \bibinfo{year}{2018}\natexlab{}.
\newblock \showarticletitle{Visualizing the Loss Landscape of Neural Nets}. In \bibinfo{booktitle}{\emph{Proceedings of the 32nd International Conference on Neural Information Processing Systems}} (Montr\'{e}al, Canada) \emph{(\bibinfo{series}{NIPS’18})}. \bibinfo{publisher}{Curran Associates Inc.}, \bibinfo{address}{Red Hook, NY, USA}, \bibinfo{pages}{6391–6401}.
\newblock


\bibitem[Li et~al\mbox{.}(2022)]%
        {li2022NIIDBenchmark}
\bibfield{author}{\bibinfo{person}{Qinbin Li}, \bibinfo{person}{Yiqun Diao}, \bibinfo{person}{Quan Chen}, {and} \bibinfo{person}{Bingsheng He}.} \bibinfo{year}{2022}\natexlab{}.
\newblock \showarticletitle{Federated Learning on Non-IID Data Silos: An Experimental Study}. In \bibinfo{booktitle}{\emph{IEEE International Conference on Data Engineering}}.
\newblock


\bibitem[Li et~al\mbox{.}(2021)]%
        {li2021survey}
\bibfield{author}{\bibinfo{person}{Qinbin Li}, \bibinfo{person}{Zeyi Wen}, \bibinfo{person}{Zhaomin Wu}, \bibinfo{person}{Sixu Hu}, \bibinfo{person}{Naibo Wang}, \bibinfo{person}{Yuan Li}, \bibinfo{person}{Xu Liu}, {and} \bibinfo{person}{Bingsheng He}.} \bibinfo{year}{2021}\natexlab{}.
\newblock \showarticletitle{A survey on federated learning systems: vision, hype and reality for data privacy and protection}.
\newblock \bibinfo{journal}{\emph{IEEE Transactions on Knowledge and Data Engineering}} (\bibinfo{year}{2021}).
\newblock


\bibitem[Li et~al\mbox{.}(2020b)]%
        {Li20FedProx}
\bibfield{author}{\bibinfo{person}{Tian Li}, \bibinfo{person}{Anit~Kumar Sahu}, \bibinfo{person}{Manzil Zaheer}, \bibinfo{person}{Maziar Sanjabi}, \bibinfo{person}{Ameet Talwalkar}, {and} \bibinfo{person}{Virginia Smith}.} \bibinfo{year}{2020}\natexlab{b}.
\newblock \showarticletitle{Federated Optimization in Heterogeneous Networks}. In \bibinfo{booktitle}{\emph{Proceedings of Machine Learning and Systems}}, \bibfield{editor}{\bibinfo{person}{I.~Dhillon}, \bibinfo{person}{D.~Papailiopoulos}, {and} \bibinfo{person}{V.~Sze}} (Eds.), Vol.~\bibinfo{volume}{2}. \bibinfo{pages}{429--450}.
\newblock


\bibitem[Li et~al\mbox{.}(2020c)]%
        {li2020federated}
\bibfield{author}{\bibinfo{person}{Tian Li}, \bibinfo{person}{Anit~Kumar Sahu}, \bibinfo{person}{Manzil Zaheer}, \bibinfo{person}{Maziar Sanjabi}, \bibinfo{person}{Ameet Talwalkar}, {and} \bibinfo{person}{Virginia Smith}.} \bibinfo{year}{2020}\natexlab{c}.
\newblock \showarticletitle{Federated optimization in heterogeneous networks}.
\newblock \bibinfo{journal}{\emph{Proceedings of Machine Learning and Systems}}  \bibinfo{volume}{2} (\bibinfo{year}{2020}), \bibinfo{pages}{429--450}.
\newblock


\bibitem[Li et~al\mbox{.}(2020a)]%
        {Li2020Fed-Non-IID}
\bibfield{author}{\bibinfo{person}{Xiang Li}, \bibinfo{person}{Kaixuan Huang}, \bibinfo{person}{Wenhao Yang}, \bibinfo{person}{Shusen Wang}, {and} \bibinfo{person}{Zhihua Zhang}.} \bibinfo{year}{2020}\natexlab{a}.
\newblock \showarticletitle{On the Convergence of FedAvg on Non-IID Data}. In \bibinfo{booktitle}{\emph{International Conference on Learning Representations}}.
\newblock
\urldef\tempurl%
\url{https://openreview.net/forum?id=HJxNAnVtDS}
\showURL{%
\tempurl}


\bibitem[Li et~al\mbox{.}(2024)]%
        {li2024integrated}
\bibfield{author}{\bibinfo{person}{Xinjin Li}, \bibinfo{person}{Yu Ma}, \bibinfo{person}{Yangchen Huang}, \bibinfo{person}{Xingqi Wang}, \bibinfo{person}{Yuzhen Lin}, {and} \bibinfo{person}{Chenxi Zhang}.} \bibinfo{year}{2024}\natexlab{}.
\newblock \showarticletitle{Integrated Optimization of Large Language Models: Synergizing Data Utilization and Compression Techniques}.
\newblock  (\bibinfo{year}{2024}).
\newblock


\bibitem[Lian et~al\mbox{.}(2015)]%
        {Lian15Asynchronous}
\bibfield{author}{\bibinfo{person}{Xiangru Lian}, \bibinfo{person}{Yijun Huang}, \bibinfo{person}{Yuncheng Li}, {and} \bibinfo{person}{Ji Liu}.} \bibinfo{year}{2015}\natexlab{}.
\newblock \showarticletitle{Asynchronous Parallel Stochastic Gradient for Nonconvex Optimization}. In \bibinfo{booktitle}{\emph{Proceedings of the 28th International Conference on Neural Information Processing Systems - Volume 2}} (Montreal, Canada) \emph{(\bibinfo{series}{NIPS'15})}. \bibinfo{publisher}{MIT Press}, \bibinfo{address}{Cambridge, MA, USA}, \bibinfo{pages}{2737–2745}.
\newblock


\bibitem[Liu et~al\mbox{.}(2020)]%
        {Liu2020RAdam}
\bibfield{author}{\bibinfo{person}{Liyuan Liu}, \bibinfo{person}{Haoming Jiang}, \bibinfo{person}{Pengcheng He}, \bibinfo{person}{Weizhu Chen}, \bibinfo{person}{Xiaodong Liu}, \bibinfo{person}{Jianfeng Gao}, {and} \bibinfo{person}{Jiawei Han}.} \bibinfo{year}{2020}\natexlab{}.
\newblock \showarticletitle{On the Variance of the Adaptive Learning Rate and Beyond}. In \bibinfo{booktitle}{\emph{International Conference on Learning Representations}}.
\newblock
\urldef\tempurl%
\url{https://openreview.net/forum?id=rkgz2aEKDr}
\showURL{%
\tempurl}


\bibitem[Liu et~al\mbox{.}(2024)]%
        {liu2024robust}
\bibfield{author}{\bibinfo{person}{Weidong Liu}, \bibinfo{person}{Xiaojun Mao}, \bibinfo{person}{Xiaofei Zhang}, {and} \bibinfo{person}{Xin Zhang}.} \bibinfo{year}{2024}\natexlab{}.
\newblock \showarticletitle{Robust Personalized Federated Learning with Sparse Penalization}.
\newblock \bibinfo{journal}{\emph{J. Amer. Statist. Assoc.}} (\bibinfo{year}{2024}), \bibinfo{pages}{1--12}.
\newblock


\bibitem[London(2017)]%
        {London2017APA}
\bibfield{author}{\bibinfo{person}{Ben London}.} \bibinfo{year}{2017}\natexlab{}.
\newblock \showarticletitle{A PAC-Bayesian Analysis of Randomized Learning with Application to Stochastic Gradient Descent}. In \bibinfo{booktitle}{\emph{NIPS}}.
\newblock


\bibitem[Luo et~al\mbox{.}(2019)]%
        {luo2018adabound}
\bibfield{author}{\bibinfo{person}{Liangchen Luo}, \bibinfo{person}{Yuanhao Xiong}, {and} \bibinfo{person}{Yan Liu}.} \bibinfo{year}{2019}\natexlab{}.
\newblock \showarticletitle{Adaptive Gradient Methods with Dynamic Bound of Learning Rate}. In \bibinfo{booktitle}{\emph{International Conference on Learning Representations}}.
\newblock
\urldef\tempurl%
\url{https://openreview.net/forum?id=Bkg3g2R9FX}
\showURL{%
\tempurl}


\bibitem[Mao et~al\mbox{.}(2022a)]%
        {mao2022accelerating}
\bibfield{author}{\bibinfo{person}{Yu Mao}, \bibinfo{person}{Yufei Cui}, \bibinfo{person}{Tei-Wei Kuo}, {and} \bibinfo{person}{Chun~Jason Xue}.} \bibinfo{year}{2022}\natexlab{a}.
\newblock \showarticletitle{Accelerating general-purpose lossless compression via simple and scalable parameterization}. In \bibinfo{booktitle}{\emph{Proceedings of the 30th ACM International Conference on Multimedia}}. \bibinfo{pages}{3205--3213}.
\newblock


\bibitem[Mao et~al\mbox{.}(2022b)]%
        {mao2022trace}
\bibfield{author}{\bibinfo{person}{Yu Mao}, \bibinfo{person}{Yufei Cui}, \bibinfo{person}{Tei-Wei Kuo}, {and} \bibinfo{person}{Chun~Jason Xue}.} \bibinfo{year}{2022}\natexlab{b}.
\newblock \showarticletitle{Trace: A fast transformer-based general-purpose lossless compressor}. In \bibinfo{booktitle}{\emph{Proceedings of the ACM Web Conference 2022}}. \bibinfo{pages}{1829--1838}.
\newblock


\bibitem[McMahan et~al\mbox{.}(2017)]%
        {McMahan2017FedAvg}
\bibfield{author}{\bibinfo{person}{H.~B. McMahan}, \bibinfo{person}{Eider Moore}, \bibinfo{person}{Daniel Ramage}, \bibinfo{person}{Seth Hampson}, {and} \bibinfo{person}{Blaise~Ag{\"u}era y Arcas}.} \bibinfo{year}{2017}\natexlab{}.
\newblock \showarticletitle{Communication-Efficient Learning of Deep Networks from Decentralized Data}. In \bibinfo{booktitle}{\emph{International Conference on Artificial Intelligence and Statistics}}.
\newblock


\bibitem[Mei et~al\mbox{.}(2024)]%
        {mei2024using}
\bibfield{author}{\bibinfo{person}{Yongsheng Mei}, \bibinfo{person}{Liangqi Yuan}, \bibinfo{person}{Dong-Jun Han}, \bibinfo{person}{Kevin~S Chan}, \bibinfo{person}{Christopher~G Brinton}, {and} \bibinfo{person}{Tian Lan}.} \bibinfo{year}{2024}\natexlab{}.
\newblock \showarticletitle{Using Diffusion Models as Generative Replay in Continual Federated Learning--What will Happen?}
\newblock \bibinfo{journal}{\emph{arXiv preprint arXiv:2411.06618}} (\bibinfo{year}{2024}).
\newblock


\bibitem[Mnih et~al\mbox{.}(2013)]%
        {Mnih2013PlayingAW}
\bibfield{author}{\bibinfo{person}{V. Mnih}, \bibinfo{person}{K. Kavukcuoglu}, \bibinfo{person}{D. Silver}, \bibinfo{person}{A. Graves}, \bibinfo{person}{Ioannis Antonoglou}, \bibinfo{person}{Daan Wierstra}, {and} \bibinfo{person}{Martin~A. Riedmiller}.} \bibinfo{year}{2013}\natexlab{}.
\newblock \showarticletitle{Playing Atari with Deep Reinforcement Learning}.
\newblock \bibinfo{journal}{\emph{ArXiv}}  \bibinfo{volume}{abs/1312.5602} (\bibinfo{year}{2013}).
\newblock


\bibitem[Nguyen et~al\mbox{.}(2021)]%
        {Nguyen2021FedBuff}
\bibfield{author}{\bibinfo{person}{John Nguyen}, \bibinfo{person}{Kshitiz Malik}, \bibinfo{person}{Hongyuan Zhan}, \bibinfo{person}{Ashkan Yousefpour}, \bibinfo{person}{Michael~G. Rabbat}, \bibinfo{person}{Mani Malek}, {and} \bibinfo{person}{Dzmitry Huba}.} \bibinfo{year}{2021}\natexlab{}.
\newblock \showarticletitle{Federated Learning with Buffered Asynchronous Aggregation}. In \bibinfo{booktitle}{\emph{International Conference on Artificial Intelligence and Statistics}}.
\newblock


\bibitem[Nishio and Yonetani(2018)]%
        {Nishio2018ClientSelection}
\bibfield{author}{\bibinfo{person}{Takayuki Nishio} {and} \bibinfo{person}{Ryo Yonetani}.} \bibinfo{year}{2018}\natexlab{}.
\newblock \showarticletitle{Client Selection for Federated Learning with Heterogeneous Resources in Mobile Edge}.
\newblock \bibinfo{journal}{\emph{ICC 2019 - 2019 IEEE International Conference on Communications (ICC)}} (\bibinfo{year}{2018}), \bibinfo{pages}{1--7}.
\newblock


\bibitem[Qi et~al\mbox{.}(2021)]%
        {qi2021stochastic}
\bibfield{author}{\bibinfo{person}{Qi Qi}, \bibinfo{person}{Youzhi Luo}, \bibinfo{person}{Zhao Xu}, \bibinfo{person}{Shuiwang Ji}, {and} \bibinfo{person}{Tianbao Yang}.} \bibinfo{year}{2021}\natexlab{}.
\newblock \showarticletitle{Stochastic optimization of areas under precision-recall curves with provable convergence}.
\newblock \bibinfo{journal}{\emph{Advances in neural information processing systems}}  \bibinfo{volume}{34} (\bibinfo{year}{2021}), \bibinfo{pages}{1752--1765}.
\newblock


\bibitem[Qi et~al\mbox{.}(2022)]%
        {qi2022stochastic}
\bibfield{author}{\bibinfo{person}{Qi Qi}, \bibinfo{person}{Jiameng Lyu}, \bibinfo{person}{Er~Wei Bai}, \bibinfo{person}{Tianbao Yang}, {et~al\mbox{.}}} \bibinfo{year}{2022}\natexlab{}.
\newblock \showarticletitle{Stochastic constrained dro with a complexity independent of sample size}.
\newblock \bibinfo{journal}{\emph{arXiv preprint arXiv:2210.05740}} (\bibinfo{year}{2022}).
\newblock


\bibitem[Reddi et~al\mbox{.}(2020)]%
        {reddi2020adaptive}
\bibfield{author}{\bibinfo{person}{Sashank Reddi}, \bibinfo{person}{Zachary Charles}, \bibinfo{person}{Manzil Zaheer}, \bibinfo{person}{Zachary Garrett}, \bibinfo{person}{Keith Rush}, \bibinfo{person}{Jakub Kone{\v{c}}n{\`y}}, \bibinfo{person}{Sanjiv Kumar}, {and} \bibinfo{person}{H~Brendan McMahan}.} \bibinfo{year}{2020}\natexlab{}.
\newblock \showarticletitle{Adaptive federated optimization}.
\newblock \bibinfo{journal}{\emph{arXiv preprint arXiv:2003.00295}} (\bibinfo{year}{2020}).
\newblock


\bibitem[Reddi et~al\mbox{.}(2018)]%
        {reddi18adam_convergence}
\bibfield{author}{\bibinfo{person}{Sashank~J. Reddi}, \bibinfo{person}{Satyen Kale}, {and} \bibinfo{person}{Sanjiv Kumar}.} \bibinfo{year}{2018}\natexlab{}.
\newblock \showarticletitle{On the Convergence of Adam and Beyond}. In \bibinfo{booktitle}{\emph{International Conference on Learning Representations}}.
\newblock
\urldef\tempurl%
\url{https://openreview.net/forum?id=ryQu7f-RZ}
\showURL{%
\tempurl}


\bibitem[Ribero and Vikalo(2020)]%
        {Ribero2020clientsampling}
\bibfield{author}{\bibinfo{person}{M{\'o}nica Ribero} {and} \bibinfo{person}{Haris Vikalo}.} \bibinfo{year}{2020}\natexlab{}.
\newblock \showarticletitle{Communication-Efficient Federated Learning via Optimal Client Sampling}.
\newblock \bibinfo{journal}{\emph{ArXiv}}  \bibinfo{volume}{abs/2007.15197} (\bibinfo{year}{2020}).
\newblock


\bibitem[Ruan et~al\mbox{.}(2021)]%
        {Ruan2021FlexibleParticipation}
\bibfield{author}{\bibinfo{person}{Yichen Ruan}, \bibinfo{person}{Xiaoxi Zhang}, \bibinfo{person}{Shu-Che Liang}, {and} \bibinfo{person}{Carlee Joe-Wong}.} \bibinfo{year}{2021}\natexlab{}.
\newblock \showarticletitle{Towards Flexible Device Participation in Federated Learning}. In \bibinfo{booktitle}{\emph{International Conference on Artificial Intelligence and Statistics}}.
\newblock


\bibitem[Simonyan and Zisserman(2015)]%
        {Simonyan14VGG}
\bibfield{author}{\bibinfo{person}{Karen Simonyan} {and} \bibinfo{person}{Andrew Zisserman}.} \bibinfo{year}{2015}\natexlab{}.
\newblock \showarticletitle{Very Deep Convolutional Networks for Large-Scale Image Recognition}. In \bibinfo{booktitle}{\emph{3rd International Conference on Learning Representations, {ICLR} 2015, San Diego, CA, USA, May 7-9, 2015, Conference Track Proceedings}}.
\newblock
\urldef\tempurl%
\url{http://arxiv.org/abs/1409.1556}
\showURL{%
\tempurl}


\bibitem[Smith et~al\mbox{.}(2018)]%
        {Smith18DontDecay}
\bibfield{author}{\bibinfo{person}{Samuel~L. Smith}, \bibinfo{person}{Pieter-Jan Kindermans}, {and} \bibinfo{person}{Quoc~V. Le}.} \bibinfo{year}{2018}\natexlab{}.
\newblock \showarticletitle{Don't Decay the Learning Rate, Increase the Batch Size}. In \bibinfo{booktitle}{\emph{International Conference on Learning Representations}}.
\newblock
\urldef\tempurl%
\url{https://openreview.net/forum?id=B1Yy1BxCZ}
\showURL{%
\tempurl}


\bibitem[Stich(2019)]%
        {stich2018localSGD}
\bibfield{author}{\bibinfo{person}{Sebastian~U. Stich}.} \bibinfo{year}{2019}\natexlab{}.
\newblock \showarticletitle{Local {SGD} Converges Fast and Communicates Little}. In \bibinfo{booktitle}{\emph{International Conference on Learning Representations}}.
\newblock
\urldef\tempurl%
\url{https://openreview.net/forum?id=S1g2JnRcFX}
\showURL{%
\tempurl}


\bibitem[Sun et~al\mbox{.}(2022)]%
        {Sun22KDDHyper}
\bibfield{author}{\bibinfo{person}{Jianhui Sun}, \bibinfo{person}{Mengdi Huai}, \bibinfo{person}{Kishlay Jha}, {and} \bibinfo{person}{Aidong Zhang}.} \bibinfo{year}{2022}\natexlab{}.
\newblock \showarticletitle{Demystify Hyperparameters for Stochastic Optimization with Transferable Representations}. In \bibinfo{booktitle}{\emph{Proceedings of the 28th ACM SIGKDD Conference on Knowledge Discovery and Data Mining}} (Washington DC, USA) \emph{(\bibinfo{series}{KDD '22})}. \bibinfo{publisher}{Association for Computing Machinery}, \bibinfo{address}{New York, NY, USA}, \bibinfo{pages}{1706–1716}.
\newblock
\showISBNx{9781450393850}
\urldef\tempurl%
\url{https://doi.org/10.1145/3534678.3539298}
\showDOI{\tempurl}


\bibitem[Sun et~al\mbox{.}(2023a)]%
        {Sun23Enhance}
\bibfield{author}{\bibinfo{person}{Jianhui Sun}, \bibinfo{person}{Sanchit Sinha}, {and} \bibinfo{person}{Aidong Zhang}.} \bibinfo{year}{2023}\natexlab{a}.
\newblock \showarticletitle{Enhance Diffusion to Improve Robust Generalization}. In \bibinfo{booktitle}{\emph{Proceedings of the 29th ACM SIGKDD Conference on Knowledge Discovery and Data Mining}} (Long Beach, CA, USA) \emph{(\bibinfo{series}{KDD '23})}. \bibinfo{publisher}{Association for Computing Machinery}, \bibinfo{address}{New York, NY, USA}, \bibinfo{pages}{2083–2095}.
\newblock
\showISBNx{9798400701030}
\urldef\tempurl%
\url{https://doi.org/10.1145/3580305.3599333}
\showDOI{\tempurl}


\bibitem[Sun et~al\mbox{.}(2024)]%
        {sun2024role}
\bibfield{author}{\bibinfo{person}{Jianhui Sun}, \bibinfo{person}{Xidong Wu}, \bibinfo{person}{Heng Huang}, {and} \bibinfo{person}{Aidong Zhang}.} \bibinfo{year}{2024}\natexlab{}.
\newblock \showarticletitle{On the role of server momentum in federated learning}. In \bibinfo{booktitle}{\emph{Proceedings of the AAAI Conference on Artificial Intelligence}}, Vol.~\bibinfo{volume}{38}. \bibinfo{pages}{15164--15172}.
\newblock


\bibitem[Sun et~al\mbox{.}(2021)]%
        {Sun21KDDHyperparameter}
\bibfield{author}{\bibinfo{person}{Jianhui Sun}, \bibinfo{person}{Ying Yang}, \bibinfo{person}{Guangxu Xun}, {and} \bibinfo{person}{Aidong Zhang}.} \bibinfo{year}{2021}\natexlab{}.
\newblock \showarticletitle{A Stagewise Hyperparameter Scheduler to Improve Generalization}. In \bibinfo{booktitle}{\emph{Proceedings of the 27th ACM SIGKDD Conference on Knowledge Discovery \& Data Mining}} (Virtual Event, Singapore) \emph{(\bibinfo{series}{KDD '21})}. \bibinfo{publisher}{Association for Computing Machinery}, \bibinfo{address}{New York, NY, USA}, \bibinfo{pages}{1530–1540}.
\newblock
\showISBNx{9781450383325}
\urldef\tempurl%
\url{https://doi.org/10.1145/3447548.3467287}
\showDOI{\tempurl}


\bibitem[Sun et~al\mbox{.}(2023b)]%
        {Sun2023TKDD}
\bibfield{author}{\bibinfo{person}{Jianhui Sun}, \bibinfo{person}{Ying Yang}, \bibinfo{person}{Guangxu Xun}, {and} \bibinfo{person}{Aidong Zhang}.} \bibinfo{year}{2023}\natexlab{b}.
\newblock \showarticletitle{Scheduling Hyperparameters to Improve Generalization: From Centralized SGD to Asynchronous SGD}.
\newblock \bibinfo{journal}{\emph{ACM Trans. Knowl. Discov. Data}} \bibinfo{volume}{17}, \bibinfo{number}{2}, Article \bibinfo{articleno}{29} (\bibinfo{date}{mar} \bibinfo{year}{2023}), \bibinfo{numpages}{37}~pages.
\newblock
\showISSN{1556-4681}
\urldef\tempurl%
\url{https://doi.org/10.1145/3544782}
\showDOI{\tempurl}


\bibitem[Suo et~al\mbox{.}(2019)]%
        {SuoICHI19}
\bibfield{author}{\bibinfo{person}{Qiuling Suo}, \bibinfo{person}{Liuyi Yao}, \bibinfo{person}{Guangxu Xun}, \bibinfo{person}{Jianhui Sun}, {and} \bibinfo{person}{Aidong Zhang}.} \bibinfo{year}{2019}\natexlab{}.
\newblock \showarticletitle{Recurrent Imputation for Multivariate Time Series with Missing Values}. In \bibinfo{booktitle}{\emph{2019 {IEEE} International Conference on Healthcare Informatics, {ICHI} 2019, Xi'an, China, June 10-13, 2019}}. \bibinfo{publisher}{{IEEE}}, \bibinfo{pages}{1--3}.
\newblock
\urldef\tempurl%
\url{https://doi.org/10.1109/ICHI.2019.8904638}
\showDOI{\tempurl}


\bibitem[Tieleman et~al\mbox{.}(2012)]%
        {tieleman2012rmsprop}
\bibfield{author}{\bibinfo{person}{Tijmen Tieleman}, \bibinfo{person}{Geoffrey Hinton}, {et~al\mbox{.}}} \bibinfo{year}{2012}\natexlab{}.
\newblock \showarticletitle{Lecture 6.5-rmsprop: Divide the gradient by a running average of its recent magnitude}.
\newblock \bibinfo{journal}{\emph{COURSERA: Neural networks for machine learning}} \bibinfo{volume}{4}, \bibinfo{number}{2} (\bibinfo{year}{2012}), \bibinfo{pages}{26--31}.
\newblock


\bibitem[Vaswani et~al\mbox{.}(2017)]%
        {vaswani2017attention}
\bibfield{author}{\bibinfo{person}{Ashish Vaswani}, \bibinfo{person}{Noam Shazeer}, \bibinfo{person}{Niki Parmar}, \bibinfo{person}{Jakob Uszkoreit}, \bibinfo{person}{Llion Jones}, \bibinfo{person}{Aidan~N Gomez}, \bibinfo{person}{{\L}ukasz Kaiser}, {and} \bibinfo{person}{Illia Polosukhin}.} \bibinfo{year}{2017}\natexlab{}.
\newblock \showarticletitle{Attention is all you need}.
\newblock \bibinfo{journal}{\emph{Advances in neural information processing systems}}  \bibinfo{volume}{30} (\bibinfo{year}{2017}).
\newblock


\bibitem[Wang et~al\mbox{.}(2025)]%
        {Wang24GraphICML}
\bibfield{author}{\bibinfo{person}{Haohui Wang}, \bibinfo{person}{Yuzhen Mao}, \bibinfo{person}{Yujun Yan}, \bibinfo{person}{Yaoqing Yang}, \bibinfo{person}{Jianhui Sun}, \bibinfo{person}{Kevin Choi}, \bibinfo{person}{Balaji Veeramani}, \bibinfo{person}{Alison Hu}, \bibinfo{person}{Edward Bowen}, \bibinfo{person}{Tyler Cody}, {and} \bibinfo{person}{Dawei Zhou}.} \bibinfo{year}{2025}\natexlab{}.
\newblock \showarticletitle{EvoluNet: advancing dynamic non-IID transfer learning on graphs}. In \bibinfo{booktitle}{\emph{Proceedings of the 41st International Conference on Machine Learning}} (Vienna, Austria) \emph{(\bibinfo{series}{ICML'24})}. \bibinfo{publisher}{JMLR.org}, Article \bibinfo{articleno}{2095}, \bibinfo{numpages}{19}~pages.
\newblock


\bibitem[Wang et~al\mbox{.}(2022b)]%
        {wang2022ptseformer}
\bibfield{author}{\bibinfo{person}{Han Wang}, \bibinfo{person}{Jun Tang}, \bibinfo{person}{Xiaodong Liu}, \bibinfo{person}{Shanyan Guan}, \bibinfo{person}{Rong Xie}, {and} \bibinfo{person}{Li Song}.} \bibinfo{year}{2022}\natexlab{b}.
\newblock \showarticletitle{Ptseformer: Progressive temporal-spatial enhanced transformer towards video object detection}. In \bibinfo{booktitle}{\emph{European Conference on Computer Vision}}. Springer, \bibinfo{pages}{732--747}.
\newblock


\bibitem[Wang et~al\mbox{.}(2022c)]%
        {wang2022fedkc}
\bibfield{author}{\bibinfo{person}{Haoyu Wang}, \bibinfo{person}{Handong Zhao}, \bibinfo{person}{Yaqing Wang}, \bibinfo{person}{Tong Yu}, \bibinfo{person}{Jiuxiang Gu}, {and} \bibinfo{person}{Jing Gao}.} \bibinfo{year}{2022}\natexlab{c}.
\newblock \showarticletitle{FedKC: Federated knowledge composition for multilingual natural language understanding}. In \bibinfo{booktitle}{\emph{Proceedings of the ACM Web Conference 2022}}. \bibinfo{pages}{1839--1850}.
\newblock


\bibitem[Wang et~al\mbox{.}(2021)]%
        {Wang2021survey_fed_opt}
\bibfield{author}{\bibinfo{person}{Jianyu Wang}, \bibinfo{person}{Zachary~B. Charles}, \bibinfo{person}{Zheng Xu}, \bibinfo{person}{Gauri Joshi}, \bibinfo{person}{H.~B. McMahan}, \bibinfo{person}{Blaise~Ag{\"u}era y Arcas}, \bibinfo{person}{Maruan Al-Shedivat}, \bibinfo{person}{Galen Andrew}, \bibinfo{person}{Salman Avestimehr}, \bibinfo{person}{Katharine Daly}, \bibinfo{person}{Deepesh Data}, \bibinfo{person}{Suhas~N. Diggavi}, \bibinfo{person}{Hubert Eichner}, \bibinfo{person}{Advait Gadhikar}, \bibinfo{person}{Zachary Garrett}, \bibinfo{person}{Antonious~M. Girgis}, \bibinfo{person}{Filip Hanzely}, \bibinfo{person}{Andrew Hard}, \bibinfo{person}{Chaoyang He}, \bibinfo{person}{Samuel Horv{\'a}th}, \bibinfo{person}{Zhouyuan Huo}, \bibinfo{person}{Alex Ingerman}, \bibinfo{person}{Martin Jaggi}, \bibinfo{person}{Tara Javidi}, \bibinfo{person}{Peter Kairouz}, \bibinfo{person}{Satyen Kale}, \bibinfo{person}{Sai~Praneeth Karimireddy}, \bibinfo{person}{Jakub Konecn{\'y}}, \bibinfo{person}{Sanmi Koyejo},
  \bibinfo{person}{Tian Li}, \bibinfo{person}{Luyang Liu}, \bibinfo{person}{Mehryar Mohri}, \bibinfo{person}{Hang Qi}, \bibinfo{person}{Sashank~J. Reddi}, \bibinfo{person}{Peter Richt{\'a}rik}, \bibinfo{person}{K. Singhal}, \bibinfo{person}{Virginia Smith}, \bibinfo{person}{Mahdi Soltanolkotabi}, \bibinfo{person}{Weikang Song}, \bibinfo{person}{Ananda~Theertha Suresh}, \bibinfo{person}{Sebastian~U. Stich}, \bibinfo{person}{Ameet~S. Talwalkar}, \bibinfo{person}{Hongyi Wang}, \bibinfo{person}{Blake~E. Woodworth}, \bibinfo{person}{Shanshan Wu}, \bibinfo{person}{Felix~X. Yu}, \bibinfo{person}{Honglin Yuan}, \bibinfo{person}{Manzil Zaheer}, \bibinfo{person}{Mi Zhang}, \bibinfo{person}{Tong Zhang}, \bibinfo{person}{Chunxiang Zheng}, \bibinfo{person}{Chen Zhu}, {and} \bibinfo{person}{Wennan Zhu}.} \bibinfo{year}{2021}\natexlab{}.
\newblock \showarticletitle{A Field Guide to Federated Optimization}.
\newblock \bibinfo{journal}{\emph{ArXiv}}  \bibinfo{volume}{abs/2107.06917} (\bibinfo{year}{2021}).
\newblock


\bibitem[Wang et~al\mbox{.}(2020a)]%
        {Wang20FedNova}
\bibfield{author}{\bibinfo{person}{Jianyu Wang}, \bibinfo{person}{Qinghua Liu}, \bibinfo{person}{Hao Liang}, \bibinfo{person}{Gauri Joshi}, {and} \bibinfo{person}{H.~Vincent Poor}.} \bibinfo{year}{2020}\natexlab{a}.
\newblock \showarticletitle{Tackling the Objective Inconsistency Problem in Heterogeneous Federated Optimization}. In \bibinfo{booktitle}{\emph{Proceedings of the 34th International Conference on Neural Information Processing Systems}} (Vancouver, BC, Canada) \emph{(\bibinfo{series}{NIPS'20})}. \bibinfo{publisher}{Curran Associates Inc.}, \bibinfo{address}{Red Hook, NY, USA}, Article \bibinfo{articleno}{638}, \bibinfo{numpages}{13}~pages.
\newblock
\showISBNx{9781713829546}


\bibitem[Wang et~al\mbox{.}(2020b)]%
        {Wang2020SlowMo}
\bibfield{author}{\bibinfo{person}{Jianyu Wang}, \bibinfo{person}{Vinayak Tantia}, \bibinfo{person}{Nicolas Ballas}, {and} \bibinfo{person}{Michael Rabbat}.} \bibinfo{year}{2020}\natexlab{b}.
\newblock \showarticletitle{SlowMo: Improving Communication-Efficient Distributed SGD with Slow Momentum}. In \bibinfo{booktitle}{\emph{International Conference on Learning Representations}}.
\newblock
\urldef\tempurl%
\url{https://openreview.net/forum?id=SkxJ8REYPH}
\showURL{%
\tempurl}


\bibitem[Wang and Ji(2022)]%
        {wang2022arbitraryparticipation}
\bibfield{author}{\bibinfo{person}{Shiqiang Wang} {and} \bibinfo{person}{Mingyue Ji}.} \bibinfo{year}{2022}\natexlab{}.
\newblock \showarticletitle{A Unified Analysis of Federated Learning with Arbitrary Client Participation}. In \bibinfo{booktitle}{\emph{Advances in Neural Information Processing Systems}}, \bibfield{editor}{\bibinfo{person}{Alice~H. Oh}, \bibinfo{person}{Alekh Agarwal}, \bibinfo{person}{Danielle Belgrave}, {and} \bibinfo{person}{Kyunghyun Cho}} (Eds.).
\newblock
\urldef\tempurl%
\url{https://openreview.net/forum?id=qSs7C7c4G8D}
\showURL{%
\tempurl}


\bibitem[Wang et~al\mbox{.}(2022a)]%
        {wang22adaptive}
\bibfield{author}{\bibinfo{person}{Yujia Wang}, \bibinfo{person}{Lu Lin}, {and} \bibinfo{person}{Jinghui Chen}.} \bibinfo{year}{2022}\natexlab{a}.
\newblock \showarticletitle{Communication-Efficient Adaptive Federated Learning}. In \bibinfo{booktitle}{\emph{Proceedings of the 39th International Conference on Machine Learning}} \emph{(\bibinfo{series}{Proceedings of Machine Learning Research}, Vol.~\bibinfo{volume}{162})}. \bibinfo{publisher}{PMLR}, \bibinfo{pages}{22802--22838}.
\newblock
\urldef\tempurl%
\url{https://proceedings.mlr.press/v162/wang22o.html}
\showURL{%
\tempurl}


\bibitem[Wei et~al\mbox{.}(2023)]%
        {wei2023towards}
\bibfield{author}{\bibinfo{person}{Tianxin Wei}, \bibinfo{person}{Bowen Jin}, \bibinfo{person}{Ruirui Li}, \bibinfo{person}{Hansi Zeng}, \bibinfo{person}{Zhengyang Wang}, \bibinfo{person}{Jianhui Sun}, \bibinfo{person}{Qingyu Yin}, \bibinfo{person}{Hanqing Lu}, \bibinfo{person}{Suhang Wang}, \bibinfo{person}{Jingrui He}, {et~al\mbox{.}}} \bibinfo{year}{2023}\natexlab{}.
\newblock \showarticletitle{Towards universal multi-modal personalization: A language model empowered generative paradigm}. In \bibinfo{booktitle}{\emph{The Twelfth International Conference on Learning Representations}}.
\newblock


\bibitem[Wen et~al\mbox{.}(2023)]%
        {Wen2023FeaturesplittingAF}
\bibfield{author}{\bibinfo{person}{Jiawei Wen}, \bibinfo{person}{Songshan Yang}, \bibinfo{person}{Chris Wang}, \bibinfo{person}{Yishen Jiang}, {and} \bibinfo{person}{Runze Li}.} \bibinfo{year}{2023}\natexlab{}.
\newblock \showarticletitle{Feature-splitting algorithms for ultrahigh dimensional quantile regression}.
\newblock \bibinfo{journal}{\emph{Journal of Econometrics}} (\bibinfo{year}{2023}).
\newblock
\urldef\tempurl%
\url{https://api.semanticscholar.org/CorpusID:257747996}
\showURL{%
\tempurl}


\bibitem[Wen et~al\mbox{.}(2024)]%
        {Wen24Nonconvex}
\bibfield{author}{\bibinfo{person}{Jiawei Wen}, \bibinfo{person}{Songshan Yang}, {and} \bibinfo{person}{Delin Zhao}.} \bibinfo{year}{2024}\natexlab{}.
\newblock \showarticletitle{Nonconvex Dantzig selector and its parallel computing algorithm}.
\newblock \bibinfo{journal}{\emph{Statistics and Computing}} \bibinfo{volume}{34}, \bibinfo{number}{6} (\bibinfo{date}{Sept.} \bibinfo{year}{2024}), \bibinfo{numpages}{21}~pages.
\newblock
\showISSN{0960-3174}
\urldef\tempurl%
\url{https://doi.org/10.1007/s11222-024-10492-8}
\showDOI{\tempurl}


\bibitem[Wilson et~al\mbox{.}(2017)]%
        {Wilson2017Generalization}
\bibfield{author}{\bibinfo{person}{Ashia~C Wilson}, \bibinfo{person}{Rebecca Roelofs}, \bibinfo{person}{Mitchell Stern}, \bibinfo{person}{Nati Srebro}, {and} \bibinfo{person}{Benjamin Recht}.} \bibinfo{year}{2017}\natexlab{}.
\newblock \showarticletitle{The Marginal Value of Adaptive Gradient Methods in Machine Learning}.
\newblock In \bibinfo{booktitle}{\emph{Advances in Neural Information Processing Systems 30}}. \bibinfo{publisher}{Curran Associates, Inc.}, \bibinfo{pages}{4148--4158}.
\newblock


\bibitem[Wu et~al\mbox{.}(2023)]%
        {wu23fcso}
\bibfield{author}{\bibinfo{person}{Xidong Wu}, \bibinfo{person}{Jianhui Sun}, \bibinfo{person}{Zhengmian Hu}, \bibinfo{person}{Junyi Li}, \bibinfo{person}{Aidong Zhang}, {and} \bibinfo{person}{Heng Huang}.} \bibinfo{year}{2023}\natexlab{}.
\newblock \showarticletitle{Federated Conditional Stochastic Optimization}. In \bibinfo{booktitle}{\emph{Advances in Neural Information Processing Systems}}, \bibfield{editor}{\bibinfo{person}{A.~Oh}, \bibinfo{person}{T.~Naumann}, \bibinfo{person}{A.~Globerson}, \bibinfo{person}{K.~Saenko}, \bibinfo{person}{M.~Hardt}, {and} \bibinfo{person}{S.~Levine}} (Eds.), Vol.~\bibinfo{volume}{36}. \bibinfo{publisher}{Curran Associates, Inc.}, \bibinfo{pages}{5752--5764}.
\newblock


\bibitem[Wu et~al\mbox{.}(2024)]%
        {wu2024solving}
\bibfield{author}{\bibinfo{person}{Xidong Wu}, \bibinfo{person}{Jianhui Sun}, \bibinfo{person}{Zhengmian Hu}, \bibinfo{person}{Aidong Zhang}, {and} \bibinfo{person}{Heng Huang}.} \bibinfo{year}{2024}\natexlab{}.
\newblock \showarticletitle{Solving a class of non-convex minimax optimization in federated learning}.
\newblock \bibinfo{journal}{\emph{Advances in Neural Information Processing Systems}}  \bibinfo{volume}{36} (\bibinfo{year}{2024}).
\newblock


\bibitem[Xiao et~al\mbox{.}(2017)]%
        {Xiao2017FashionMNISTAN}
\bibfield{author}{\bibinfo{person}{Han Xiao}, \bibinfo{person}{Kashif Rasul}, {and} \bibinfo{person}{Roland Vollgraf}.} \bibinfo{year}{2017}\natexlab{}.
\newblock \showarticletitle{Fashion-MNIST: a Novel Image Dataset for Benchmarking Machine Learning Algorithms}.
\newblock \bibinfo{journal}{\emph{ArXiv}}  \bibinfo{volume}{abs/1708.07747} (\bibinfo{year}{2017}).
\newblock


\bibitem[Xie et~al\mbox{.}(2019)]%
        {Xie2019AsynchronousFO}
\bibfield{author}{\bibinfo{person}{Cong Xie}, \bibinfo{person}{Oluwasanmi Koyejo}, {and} \bibinfo{person}{Indranil Gupta}.} \bibinfo{year}{2019}\natexlab{}.
\newblock \showarticletitle{Asynchronous Federated Optimization}.
\newblock \bibinfo{journal}{\emph{ArXiv}}  \bibinfo{volume}{abs/1903.03934} (\bibinfo{year}{2019}).
\newblock


\bibitem[Xun et~al\mbox{.}(2020)]%
        {Xun2020CorrelationNF}
\bibfield{author}{\bibinfo{person}{Guangxu Xun}, \bibinfo{person}{Kishlay Jha}, \bibinfo{person}{Jianhui Sun}, {and} \bibinfo{person}{Aidong Zhang}.} \bibinfo{year}{2020}\natexlab{}.
\newblock \showarticletitle{Correlation Networks for Extreme Multi-label Text Classification}.
\newblock \bibinfo{journal}{\emph{Proceedings of the 26th ACM SIGKDD International Conference on Knowledge Discovery \& Data Mining}} (\bibinfo{year}{2020}).
\newblock


\bibitem[Yan et~al\mbox{.}(2020)]%
        {Yan2020DistributedClient}
\bibfield{author}{\bibinfo{person}{Yikai Yan}, \bibinfo{person}{Chaoyue Niu}, \bibinfo{person}{Yucheng Ding}, \bibinfo{person}{Zhenzhe Zheng}, \bibinfo{person}{Fan Wu}, \bibinfo{person}{Guihai Chen}, \bibinfo{person}{Shaojie Tang}, {and} \bibinfo{person}{Zhihua Wu}.} \bibinfo{year}{2020}\natexlab{}.
\newblock \showarticletitle{Distributed Non-Convex Optimization with Sublinear Speedup under Intermittent Client Availability}.
\newblock \bibinfo{journal}{\emph{ArXiv}}  \bibinfo{volume}{abs/2002.07399} (\bibinfo{year}{2020}).
\newblock


\bibitem[Yang et~al\mbox{.}(2021a)]%
        {yang2021achieving}
\bibfield{author}{\bibinfo{person}{Haibo Yang}, \bibinfo{person}{Minghong Fang}, {and} \bibinfo{person}{Jia Liu}.} \bibinfo{year}{2021}\natexlab{a}.
\newblock \showarticletitle{Achieving Linear Speedup with Partial Worker Participation in Non-{IID} Federated Learning}. In \bibinfo{booktitle}{\emph{International Conference on Learning Representations}}.
\newblock
\urldef\tempurl%
\url{https://openreview.net/forum?id=jDdzh5ul-d}
\showURL{%
\tempurl}


\bibitem[Yang et~al\mbox{.}(2021b)]%
        {Yang2021AnarchicFL}
\bibfield{author}{\bibinfo{person}{Haibo Yang}, \bibinfo{person}{Xin Zhang}, \bibinfo{person}{Prashant Khanduri}, {and} \bibinfo{person}{Jia Liu}.} \bibinfo{year}{2021}\natexlab{b}.
\newblock \showarticletitle{Anarchic Federated Learning}. In \bibinfo{booktitle}{\emph{International Conference on Machine Learning}}.
\newblock


\bibitem[Yurochkin et~al\mbox{.}(2019)]%
        {Yurochkin2019BayesianNF}
\bibfield{author}{\bibinfo{person}{Mikhail Yurochkin}, \bibinfo{person}{Mayank Agarwal}, \bibinfo{person}{Soumya~Shubhra Ghosh}, \bibinfo{person}{Kristjan~H. Greenewald}, \bibinfo{person}{Trong~Nghia Hoang}, {and} \bibinfo{person}{Yasaman Khazaeni}.} \bibinfo{year}{2019}\natexlab{}.
\newblock \showarticletitle{Bayesian Nonparametric Federated Learning of Neural Networks}. In \bibinfo{booktitle}{\emph{International Conference on Machine Learning}}.
\newblock


\bibitem[Zaheer et~al\mbox{.}(2018)]%
        {Zaheer18Yogi}
\bibfield{author}{\bibinfo{person}{Manzil Zaheer}, \bibinfo{person}{Sashank~J. Reddi}, \bibinfo{person}{Devendra Sachan}, \bibinfo{person}{Satyen Kale}, {and} \bibinfo{person}{Sanjiv Kumar}.} \bibinfo{year}{2018}\natexlab{}.
\newblock \showarticletitle{Adaptive Methods for Nonconvex Optimization}. In \bibinfo{booktitle}{\emph{Proceedings of the 32nd International Conference on Neural Information Processing Systems}} (Montr\'{e}al, Canada) \emph{(\bibinfo{series}{NIPS'18})}. \bibinfo{publisher}{Curran Associates Inc.}, \bibinfo{address}{Red Hook, NY, USA}, \bibinfo{pages}{9815–9825}.
\newblock


\bibitem[Zeiler(2012)]%
        {Zeiler2012ADADELTAAA}
\bibfield{author}{\bibinfo{person}{Matthew~D. Zeiler}.} \bibinfo{year}{2012}\natexlab{}.
\newblock \showarticletitle{ADADELTA: An Adaptive Learning Rate Method}.
\newblock \bibinfo{journal}{\emph{ArXiv}}  \bibinfo{volume}{abs/1212.5701} (\bibinfo{year}{2012}).
\newblock


\bibitem[Zhang et~al\mbox{.}(2020)]%
        {zhang2020fedpd}
\bibfield{author}{\bibinfo{person}{Xinwei Zhang}, \bibinfo{person}{Mingyi Hong}, \bibinfo{person}{Sairaj Dhople}, \bibinfo{person}{Wotao Yin}, {and} \bibinfo{person}{Yang Liu}.} \bibinfo{year}{2020}\natexlab{}.
\newblock \showarticletitle{Fedpd: A federated learning framework with optimal rates and adaptivity to non-iid data}.
\newblock \bibinfo{journal}{\emph{arXiv preprint arXiv:2005.11418}} (\bibinfo{year}{2020}).
\newblock


\bibitem[Zhang et~al\mbox{.}(2021)]%
        {zhang2021joint}
\bibfield{author}{\bibinfo{person}{Zeyu Zhang}, \bibinfo{person}{Thuy Vu}, {and} \bibinfo{person}{Alessandro Moschitti}.} \bibinfo{year}{2021}\natexlab{}.
\newblock \showarticletitle{Joint models for answer verification in question answering systems}.
\newblock \bibinfo{journal}{\emph{arXiv preprint arXiv:2107.04217}} (\bibinfo{year}{2021}).
\newblock


\bibitem[Zheng et~al\mbox{.}(2023)]%
        {zheng2023potter}
\bibfield{author}{\bibinfo{person}{Ce Zheng}, \bibinfo{person}{Xianpeng Liu}, \bibinfo{person}{Guo-Jun Qi}, {and} \bibinfo{person}{Chen Chen}.} \bibinfo{year}{2023}\natexlab{}.
\newblock \showarticletitle{Potter: Pooling attention transformer for efficient human mesh recovery}. In \bibinfo{booktitle}{\emph{Proceedings of the IEEE/CVF Conference on Computer Vision and Pattern Recognition}}. \bibinfo{pages}{1611--1620}.
\newblock


\bibitem[Zheng and Makar(2022)]%
        {zheng2022causally}
\bibfield{author}{\bibinfo{person}{Jiayun Zheng} {and} \bibinfo{person}{Maggie Makar}.} \bibinfo{year}{2022}\natexlab{}.
\newblock \showarticletitle{Causally motivated multi-shortcut identification and removal}.
\newblock \bibinfo{journal}{\emph{Advances in Neural Information Processing Systems}}  \bibinfo{volume}{35} (\bibinfo{year}{2022}), \bibinfo{pages}{12800--12812}.
\newblock


\bibitem[Zhou et~al\mbox{.}(2022a)]%
        {zhou2022pac}
\bibfield{author}{\bibinfo{person}{Hanhan Zhou}, \bibinfo{person}{Tian Lan}, {and} \bibinfo{person}{Vaneet Aggarwal}.} \bibinfo{year}{2022}\natexlab{a}.
\newblock \showarticletitle{PAC: Assisted Value Factorization with Counterfactual Predictions in Multi-Agent Reinforcement Learning}.
\newblock \bibinfo{journal}{\emph{Advances in Neural Information Processing Systems (NeurIPS)}}  \bibinfo{volume}{36} (\bibinfo{year}{2022}), \bibinfo{pages}{15757--15769}.
\newblock


\bibitem[Zhou et~al\mbox{.}(2022b)]%
        {zhou2022federated}
\bibfield{author}{\bibinfo{person}{Hanhan Zhou}, \bibinfo{person}{Tian Lan}, \bibinfo{person}{Guru~Prasadh Venkataramani}, {and} \bibinfo{person}{Wenbo Ding}.} \bibinfo{year}{2022}\natexlab{b}.
\newblock \showarticletitle{Federated learning with online adaptive heterogeneous local models}. In \bibinfo{booktitle}{\emph{Workshop on Federated Learning: Recent Advances and New Challenges (in Conjunction with NeurIPS 2022)}}.
\newblock


\bibitem[Zhou et~al\mbox{.}(2023)]%
        {zhou2023every}
\bibfield{author}{\bibinfo{person}{Hanhan Zhou}, \bibinfo{person}{Tian Lan}, \bibinfo{person}{Guru~Prasadh Venkataramani}, {and} \bibinfo{person}{Wenbo Ding}.} \bibinfo{year}{2023}\natexlab{}.
\newblock \showarticletitle{Every parameter matters: Ensuring the convergence of federated learning with dynamic heterogeneous models reduction}.
\newblock \bibinfo{journal}{\emph{Advances in Neural Information Processing Systems}}  \bibinfo{volume}{37} (\bibinfo{year}{2023}).
\newblock


\bibitem[Zhuang et~al\mbox{.}(2020)]%
        {zhuang2020adabelief}
\bibfield{author}{\bibinfo{person}{Juntang Zhuang}, \bibinfo{person}{Tommy Tang}, \bibinfo{person}{Yifan Ding}, \bibinfo{person}{Sekhar Tatikonda}, \bibinfo{person}{Nicha Dvornek}, \bibinfo{person}{Xenophon Papademetris}, {and} \bibinfo{person}{James Duncan}.} \bibinfo{year}{2020}\natexlab{}.
\newblock \showarticletitle{AdaBelief Optimizer: Adapting Stepsizes by the Belief in Observed Gradients}.
\newblock \bibinfo{journal}{\emph{Conference on Neural Information Processing Systems}} (\bibinfo{year}{2020}).
\newblock


\end{thebibliography}

\newpage
\appendix
\onecolumn

\centerline{\huge\textbf{Appendix}}
\vspace{10pt}

In Section \ref{subsubsec:key_alg_design}, we discuss key model designs omitted from Section \ref{subsec:key-algorithmic-designs}. In Section \ref{sec:appendix_proof}, we provide complete proof of Theorem \ref{fedadaptive_free_uniform_arrival_convergence_theorem} and Corollary \ref{fedadaptive_free_uniform_arrival_convergence_rate}. In Section \ref{sec:appendix_more_exp}, we provide more experimental results which are omitted from main text.

\section{Key Algorithmic Designs} 
\label{subsubsec:key_alg_design}

We would like to summarize the following unique features of Algorithm \ref{alg:cc_fed_adaptive_algs} that distinguish itself from server-centric FL,

\begin{itemize}[leftmargin=*]
    \item Global update by server happens concurrently with local update by client, which avoids low-capacity clients from straggling training or any system locking.
    \item Client participates whenever it intends to, and each client can self-determine a time-varying and device-dependent number of local updates $K_{t,i}$. 
    \item Each participating client can work with an asynchronous view of the global model from an outdated timestamp $t-\tau_{t,i}$.
\end{itemize}

We highlight several algorithmic designs that are key to Algorithm \ref{alg:cc_fed_adaptive_algs} in this section.

\vspace*{6pt}
\noindent \textbf{(a) Normalized Model Update}
\vspace*{6pt}

In CC-FedAMS, we allow a time-varying and device-dependent $K_{t,i}$, which would result in the global update biased towards the client with larger $K_{t,i}$. Figure \ref{fig:fednova_toy_example} demonstrates this phenomenon with a two-client toy example. Specifically, suppose client 1 and client 2 carry out two and five local updates, and reach $x_{0,2}^1$ and $x_{0,5}^2$ (following the same $x_{t,k}^i$ notation in Algorithm \ref{alg:cc_fed_adaptive_algs}), respectively. $x_\ast$, $x_\ast^1$, and $x_\ast^2$ denote the global optimum, local optimum for client 1 and 2, respectively. As is evident in this example, $x_{0,5}^2$ would likely drag the global update (red solid arrow) more towards $x_\ast^2$ since client 2 takes more local updates and $x_0-x_{0,5}^2$ thus has a larger scale. This will make the convergence towards global optimum $x_\ast$ slowing down or even failing in worst case \cite{Wang20FedNova}.

\begin{figure}[htp]
    \centering
    \includegraphics[height=30mm, width=50mm]{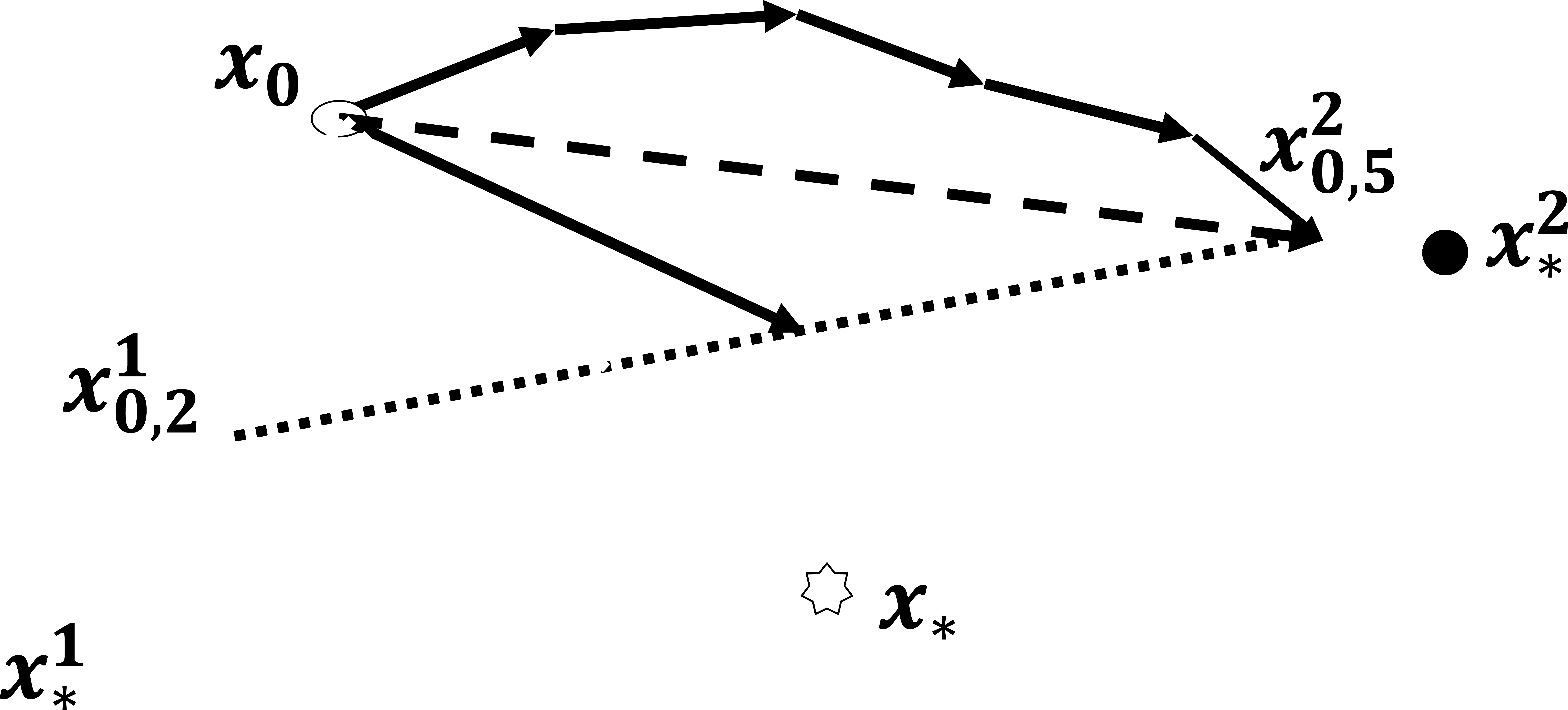}
    \caption{A toy example of a two-client FL setting. There is a mismatch between the ideal direction towards global optimum (green dashed arrow) and the actual search direction (red solid line) if the two clients take a different number of local updates.}
    \label{fig:fednova_toy_example}
\end{figure}

Therefore, before sending the model difference to the server, we request each client normalizes the model update by $K_{t,i}$, i.e., $\Delta_\mu^i=\frac{x_{\mu,0}^i-x_{\mu,K_{t,i}}^i}{K_{t,i}}$.

\vspace*{6pt}
\noindent \textbf{(b) Size $m$ Buffer $\mathcal{S}_t$}
\vspace*{6pt}

In CC-FedAMS, we concurrently run server and client computations. The server maintains a ``buffer'' of responsive clients $\mathcal{S}_t$ and only triggers global update once the buffer reaches size $m$. There is a fundamental trade-off in the selection of $m$. Specifically, if $m=1$ as in \cite{Xie2019AsynchronousFO}, the global update takes place once one client responds, which has the fastest updating frequency and avoids the idle time to collect a size $m$ buffer. However, $m=1$ has the following significant drawbacks. First, it reveals the model update from one single client, which has profound privacy concerns. One of the most important motivations for FL is to anonymize any single client by server-side aggregation, which $m=1$ clearly violates. Second, it is more vulnerable to client drift. $m=1$ indicates the global update relies entirely on one single client at each round, and the global convergence would be slowdown if that client is subject to dramatic distributional shift. Third, too fast global update results in more chaotic client behaviors. Too frequent global update would make clients up to a larger delay, and potentially negatively impact the convergence. \footnote{In Section \ref{sec:convergence} Corollary \ref{fedadaptive_free_uniform_arrival_convergence_rate}, we can see the convergence rate has a term $\mathcal{O}\left(\frac{\tau^2}{T}\right)$, which theoretically validates larger randomly delay $\tau$ slows down global convergence and thus too small $m$ may be harmful.} Empirically, we find an $m>1$ setting converges much faster than $m=1$ in our experiments. ($m=5$ when there are 100 clients in our experiments is a good setting across benchmarks and does not need tuning).

\vspace*{6pt}
\noindent \textbf{(c) Server-side Adaptive Optimization}
\vspace*{6pt}

In CC-FedAMS, a critical ingredient is the server side AMSGrad-type global update. We would like to highlight why adaptive optimization is necessary here. (1) Adaptivity is crucial when training large-scale overparameterized models. As was explained in Section \ref{sec:intro}, adaptively scaling learning rate on a per-feature basis could accelerate convergence with sparse feature space. (2) Adaptive optimization helps mitigate client drift. Note that the design of adaptive optimizer leverages the average of historical gradient information, e.g., it maintains a momentum by recursively aggregating historical model updates $m_{t}=(1-\beta)\Delta_{t}+\beta m_{t-1}$ and it scales coordinates of the gradient by square roots of the recursive averaging of the squared coordinates $x_{t+1}=x_t-\eta \frac{m_t}{\sqrt{\hat{v}_t}+\epsilon}$. If without adaptive optimization, the global update would rely entirely on the averaged current model update, which is subject to unexpected drift. When clients are dynamically heterogeneous, historical gradient information is virtually a regularizer to restrain the search direction from going wild.

\section{Proof of Theorem \ref{fedadaptive_free_uniform_arrival_convergence_theorem} and Corollary \ref{fedadaptive_free_uniform_arrival_convergence_rate}}
\label{sec:appendix_proof}

\subsection{Proof of Theorem \ref{fedadaptive_free_uniform_arrival_convergence_theorem}}

\begin{proof}[Proof of Theorem \ref{fedadaptive_free_uniform_arrival_convergence_theorem}]

We introduce a Lyapunov sequence $\{z_t\}_{t=0}^{T-1}$ which is devised as follows:

\begin{equation}
\label{auxiliary_seq}
z_t= x_t+\frac{\beta}{1-\beta}\left(x_t-x_{t-1}\right)
\end{equation}

We could verify \footnote{Momentum buffer $d_t$ here is $m_t$ in main text, also in Appendix, the zero-index of time $t\in\{0,1,\dots,T-1\}$ is equivalent to the one-index of time $t\in\{1,\dots,T\}$ in main text.}, 

\begin{equation}
\begin{gathered}
z_{t+1}-z_t=\frac{1}{1-\beta}\left(x_{t+1}-x_t\right)-\frac{\beta}{1-\beta}\left(x_t-x_{t-1}\right)\\
=\frac{1}{1-\beta}\left(\eta\frac{d_{t+1}}{\sqrt{\hat{v}_{t+1}}+\epsilon}\right)-\frac{\beta}{1-\beta}\left( \eta\frac{d_{t}}{\sqrt{\hat{v}_{t}}+\epsilon}\right)\\
= \frac{\eta}{1-\beta}\frac{1}{\sqrt{\hat{v}_{t+1}}+\epsilon}\left(\left(1-\beta\right)\Delta_t+\beta d_t\right)-\frac{\beta}{1-\beta}\eta \frac{1}{\sqrt{\hat{v}_{t}}+\epsilon}d_t\\
= \eta\frac{\Delta_t}{\sqrt{\hat{v}_{t+1}}+\epsilon}-\eta\frac{\beta}{1-\beta}\left(\frac{1}{\sqrt{\hat{v}_{t}}+\epsilon}-\frac{1}{\sqrt{\hat{v}_{t+1}}+\epsilon}\right)d_t
\end{gathered}\nonumber
\end{equation}

Since $f$ is $L$-smooth, taking conditional expectation with respect to all randomness prior to step $t$, we have

\begin{equation}
\begin{gathered}
\mathbb{E}\left[f(z_{t+1})\right] \leq f(z_t)+\mathbb{E}\left[\left\langle \nabla f(z_t),z_{t+1}-z_t \right\rangle\right]+\frac{L}{2}\mathbb{E}\left[\left\| z_{t+1}-z_t\right\|^2\right] \\
\leq f(z_t)+ \mathbb{E}\left[\left\langle \nabla f(z_t), \eta\frac{\Delta_t}{\sqrt{\hat{v}_{t+1}}+\epsilon}-\eta\frac{\beta}{1-\beta}\left(\frac{1}{\sqrt{\hat{v}_{t}}+\epsilon}-\frac{1}{\sqrt{\hat{v}_{t+1}}+\epsilon}\right)d_t \right\rangle\right]\\
+\frac{L\eta^2}{2}\mathbb{E}\left[\left\| \frac{\Delta_t}{\sqrt{\hat{v}_{t+1}}+\epsilon}-\frac{\beta}{1-\beta}\left(\frac{1}{\sqrt{\hat{v}_{t}}+\epsilon}-\frac{1}{\sqrt{\hat{v}_{t+1}}+\epsilon}\right)d_t\right\|^2\right]\\
\leq f(z_t)+ \underbrace{\mathbb{E}\left[\left\langle \nabla f(z_t)-\nabla f(x_t), \eta\frac{\Delta_t}{\sqrt{\hat{v}_{t+1}}+\epsilon}\right\rangle\right]}_{A_1} + \underbrace{\mathbb{E}\left[\left\langle \nabla f(x_t), \eta\frac{\Delta_t}{\sqrt{\hat{v}_{t+1}}+\epsilon}\right\rangle\right]}_{A_2} \\
+\underbrace{\mathbb{E}\left[\left\langle \nabla f(z_t),-\eta\frac{\beta}{1-\beta}\left(\frac{1}{\sqrt{\hat{v}_{t}}+\epsilon}-\frac{1}{\sqrt{\hat{v}_{t+1}}+\epsilon}\right)d_t \right \rangle \right]}_{A_3} \\
+ \underbrace{\frac{L\eta^2}{2}\mathbb{E}\left[\left\|  \frac{\Delta_t}{\sqrt{\hat{v}_{t+1}}+\epsilon}-\frac{\beta}{1-\beta}\left(\frac{1}{\sqrt{\hat{v}_{t}}+\epsilon}-\frac{1}{\sqrt{\hat{v}_{t+1}}+\epsilon}\right)d_t\right\|^2\right]}_{A_4} 
\end{gathered}\nonumber
\end{equation}

\textbf{Bounding} $A_1$:
\begin{equation}
\begin{gathered}
A_1 =\mathbb{E}\left[\left\langle \nabla f(z_t)-\nabla f(x_t), \eta\frac{\Delta_t}{\sqrt{\hat{v}_{t+1}}+\epsilon}\right\rangle\right]\\
\underset{(i)}{\leq} \mathbb{E}\left[\left\| \nabla f(z_t)-\nabla f(x_t)\right\| \cdot \left\| \eta\frac{\Delta_t}{\sqrt{\hat{v}_{t+1}}+\epsilon}\right\|\right]\\
\underset{(ii)}{\leq} L \mathbb{E}\left[\left\| z_t - x_t \right\| \cdot \left\| \eta\frac{\Delta_t}{\sqrt{\hat{v}_{t+1}}+\epsilon}\right\|\right] 
\underset{(iii)}{\leq} \frac{\eta^2 L}{2}\mathbb{E}\left[\left\| \frac{\beta}{1-\beta} \frac{d_t}{\sqrt{\hat{v}_{t}}+\epsilon} \right\|^2\right] + \frac{\eta^2 L}{2}\mathbb{E}\left[\left\| \frac{\Delta_t}{\sqrt{\hat{v}_{t+1}}+\epsilon} \right\|^2 \right]\\
\underset{(iv)}{\leq} \frac{\eta^2 L}{2\epsilon^2}\left(\frac{\beta}{1-\beta}\right)^2 \mathbb{E}\left[\left\| d_t \right\|^2\right] + \frac{\eta^2 L}{2\epsilon^2} \mathbb{E}\left[\left\| \Delta_t \right\|^2\right]
\end{gathered}\nonumber
\end{equation}

where $(i)$ holds by applying Cauchy-Schwarz inequality, $(ii)$ holds as $f$ is $L$-smooth, $(iii)$ follows from Young’s inequality and the definition of $z_t$, and $(iv)$ holds as $\sqrt{\hat{v}_{t}}+\epsilon\ge\epsilon$.

\textbf{Bounding} $A_2$:
\begin{equation}
\begin{gathered}
A_2 =\mathbb{E}\left[\left\langle \nabla f(x_t), \eta\frac{\Delta_t}{\sqrt{\hat{v}_{t+1}}+\epsilon}\right\rangle\right]\\
=\underbrace{\mathbb{E}\left[\left\langle \nabla f(x_t), \eta\frac{\Delta_t}{\sqrt{\hat{v}_{t+1}}+\epsilon}  - \eta\frac{\Delta_t}{\sqrt{\hat{v}_{t}}+\epsilon}\right\rangle\right]}_{A_5} + \underbrace{\mathbb{E}
\left[\left\langle \nabla f(x_t),  \eta\frac{\Delta_t}{\sqrt{\hat{v}_{t}}+\epsilon}\right\rangle\right]}_{A_6}
\end{gathered}\nonumber
\end{equation}

We could bound $A_5$ as follows,

\begin{equation}
\begin{gathered}
A_5 =\mathbb{E}\left[\left\langle \nabla f(x_t), \eta\frac{\Delta_t}{\sqrt{\hat{v}_{t+1}}+\epsilon}  - \eta\frac{\Delta_t}{\sqrt{\hat{v}_{t}}+\epsilon}\right\rangle\right]\\
\underset{(i)}{\leq} \eta\left\|\nabla f(x_t)\right\|\mathbb{E}\left[\left\| \frac{1}{\sqrt{\hat{v}_{t+1}}+\epsilon}  -  \frac{1}{\sqrt{\hat{v}_{t}}+\epsilon}\right\| \cdot \left\|\Delta_t\right\|\right]\\
\leq \eta\left\|\nabla f(x_t)\right\|\mathbb{E}\left[\left\| \frac{\left(\sqrt{\hat{v}_{t+1}}-\sqrt{\hat{v}_{t}}\right) \left(\sqrt{\hat{v}_{t+1}}+\sqrt{\hat{v}_{t}}\right)}{\left(\sqrt{\hat{v}_{t+1}}+\epsilon\right)\left(\sqrt{\hat{v}_{t}}+\epsilon\right)\left(\sqrt{\hat{v}_{t+1}}+\sqrt{\hat{v}_{t}}\right)} \right\| \cdot \left\|\Delta_t\right\|\right]\\
\underset{(ii)}{\leq} \eta\left\|\nabla f(x_t)\right\|\mathbb{E}\left[\left\| \frac{\left\|\Delta_t\right\|^2}{ \left(2\epsilon\right)^2 \left\|\Delta_t\right\|  } \right\| \cdot \left\|\Delta_t\right\|\right]\\
\underset{(iii)}{\leq} \eta \cdot G \cdot  \frac{1}{4\epsilon^2} \mathbb{E}\left[ \left\|\Delta_t\right\|^2 \right] 
\end{gathered}\nonumber
\end{equation}
where $(i)$ holds due to Cauchy-Schwarz inequality, $(ii)$ holds from the definition of $\hat{v}_{t+1}$ (\& $\hat{v}_{t}$) and the fact that $\hat{v}_{t}$ is non-decreasing and $\hat{v}_{t} \ge \dots \ge \hat{v}_{0}\ge\epsilon^2$, $(iii)$ holds due to bounded gradient assumption.

We could bound $A_6$ as follows,
\begin{equation}
\begin{gathered}
A_6 = \mathbb{E}\left[\left\langle \nabla f(x_t),  \eta\frac{\Delta_t}{\sqrt{\hat{v}_{t}}+\epsilon}\right\rangle\right]\\
= \eta\mathbb{E}\left[\left\langle \frac{\nabla f(x_t)}{\sqrt{\hat{v}_{t}}+\epsilon}, \Delta_t + \eta_l \nabla f(x_t) - \eta_l \nabla f(x_t) \right\rangle\right] = -\eta\eta_l\mathbb{E}\left[ \left\|\frac{\nabla f(x_t)}{\sqrt{\sqrt{\hat{v}_{t}}+\epsilon}}\right\|^2\right] \\
+ \underbrace{\eta\eta_l\mathbb{E}\left[\left\langle \frac{\nabla f(x_t)}{\sqrt{\sqrt{\hat{v}_{t}}+\epsilon}}, \frac{1}{\sqrt{\sqrt{\hat{v}_{t}}+\epsilon}} \left(\nabla f(x_t)-\frac{1}{m}\sum_{i\in\mathcal{S}_t}\frac{1}{K_{t,i}}\sum_{k=0}^{K_{t,i}-1}g_{t-\tau_{t,i},k}^i\right) \right\rangle\right]}_{A_7}
\end{gathered}\nonumber
\end{equation}

We could bound $A_7$ as follows,

\begin{equation}
\begin{gathered}
A_7 = \eta\eta_l\mathbb{E}\left[\left\langle \frac{\nabla f(x_t)}{\sqrt{\sqrt{\hat{v}_{t}}+\epsilon}}, \frac{1}{\sqrt{\sqrt{\hat{v}_{t}}+\epsilon}} \left(\nabla f(x_t)-\frac{1}{m}\sum_{i\in\mathcal{S}_t}\frac{1}{K_{t,i}}\sum_{k=0}^{K_{t,i}-1}g_{t-\tau_{t,i},k}^i\right) \right\rangle\right]\\
\underset{(i)}{=} \eta \eta_l \mathbb{E}\left[\left\langle \frac{\nabla f(x_t)}{\sqrt{\sqrt{\hat{v}_{t}}+\epsilon}}, \frac{1}{\sqrt{\sqrt{\hat{v}_{t}}+\epsilon}} \left(\nabla f(x_t)-\frac{1}{n}\sum_{i=1}^n\frac{1}{K_{t,i}}\sum_{k=0}^{K_{t,i}-1}\nabla f_i(x_{t-\tau_{t,i},k}^i) \right)\right\rangle\right]
\end{gathered}\nonumber
\end{equation}
where $(i)$ holds as we take conditional expectation with respect to all randomness prior to step $t$.

We further have,

\begin{equation}
\begin{gathered}
A_7 = \frac{\eta\eta_l}{2}\mathbb{E}\left[ \left\|\frac{\nabla f(x_t)}{\sqrt{\sqrt{\hat{v}_{t}}+\epsilon}}\right\|^2\right] - \frac{\eta\eta_l}{2} \mathbb{E}\left[\left\| \frac{1}{\sqrt{\sqrt{\hat{v}_{t}}+\epsilon}} \frac{1}{n}\sum_{i=1}^n\frac{1}{K_{t,i}}\sum_{k=0}^{K_{t,i}-1}\nabla f_i(x_{t-\tau_{t,i},k}^i) \right\|^2\right] \\
+ \frac{\eta\eta_l}{2} \mathbb{E}\left[\left\| \frac{1}{\sqrt{\sqrt{\hat{v}_{t}}+\epsilon}} \left(\nabla f(x_t)-\frac{1}{n}\sum_{i=1}^n\frac{1}{K_{t,i}}\sum_{k=0}^{K_{t,i}-1}\nabla f_i(x_{t-\tau_{t,i},k}^i)\right) \right\|^2\right]\\
\underset{(i)}{\leq} \frac{\eta\eta_l}{2}\mathbb{E}\left[ \left\|\frac{\nabla f(x_t)}{\sqrt{\sqrt{\hat{v}_{t}}+\epsilon}}\right\|^2\right] - \frac{\eta\eta_l }{2(\sqrt{T}\eta_l  G+\epsilon)}  \mathbb{E}\left[\left\| \frac{1}{n}\sum_{i=1}^n\frac{1}{K_{t,i}}\sum_{k=0}^{K_{t,i}-1}\nabla f_i(x_{t-\tau_{t,i},k}^i) \right\|^2\right]\\
+ \underbrace{\frac{\eta\eta_l}{2} \mathbb{E}\left[\left\| \frac{1}{\sqrt{\sqrt{\hat{v}_{t}}+\epsilon}} \left(\nabla f(x_t)-\frac{1}{n}\sum_{i=1}^n\frac{1}{K_{t,i}}\sum_{k=0}^{K_{t,i}-1}\nabla f_i(x_{t-\tau_{t,i},k}^i)\right) \right\|^2\right]}_{A_8}
\end{gathered}\nonumber
\end{equation}
where $(i)$ holds as $\hat{v}_{t}\leq T \eta_l^2 G^2$.

We further bound $A_8$ as follows,

\begin{equation}
\begin{gathered}
A_8 = \frac{\eta\eta_l}{2} \mathbb{E}\left[\left\| \frac{1}{\sqrt{\sqrt{\hat{v}_{t}}+\epsilon}} \left(\nabla f(x_t)-\frac{1}{n}\sum_{i=1}^n\frac{1}{K_{t,i}}\sum_{k=0}^{K_{t,i}-1}\nabla f_i(x_{t-\tau_{t,i},k}^i)\right) \right\|^2\right] 
\leq \frac{\eta\eta_l}{2\epsilon} \cdot \\ 
\mathbb{E}\left[\left\|   \left( \nabla f(x_t) - \frac{1}{n}\sum_{i=1}^n\nabla f_i(x_{t-\tau_{t,i}})\right) 
+ \left(\frac{1}{n}\sum_{i=1}^n\nabla f_i(x_{t-\tau_{t,i}}) - \frac{1}{n}\sum_{i=1}^n\frac{1}{K_{t,i}}\sum_{k=0}^{K_{t,i}-1}\nabla f_i(x_{t-\tau_{t,i},k}^i)\right)  \right\|^2\right]\\
\underset{(i)}{\leq} \frac{\eta\eta_l}{\epsilon}\mathbb{E}\left[\left\| \frac{1}{n}\sum_{i=1}^n\nabla f_i(x_t) - \frac{1}{n}\sum_{i=1}^n\nabla f_i(x_{t-\tau_{t,i}}) \right\|^2\right] \\
+ \frac{\eta\eta_l}{\epsilon}\mathbb{E}\left[\left\| \frac{1}{n}\sum_{i=1}^n\nabla f_i(x_{t-\tau_{t,i}}) - \frac{1}{n}\sum_{i=1}^n\frac{1}{K_{t,i}}\sum_{k=0}^{K_{t,i}-1}\nabla f_i(x_{t-\tau_{t,i},k}^i) \right\|^2\right] \\
\underset{(ii)}{\leq} \frac{\eta \eta_l L^2}{\epsilon} \frac{1}{n}\sum_{i=1}^n \mathbb{E}\left[\left\|  x_t  -  x_{t-\tau_{t,i}}  \right\|^2\right] 
+ \frac{\eta \eta_l L^2}{\epsilon} \frac{1}{n}\sum_{i=1}^n \frac{1}{K_{t,i}}\sum_{k=0}^{K_{t,i}-1} \mathbb{E}\left[\left\| x_{t-\tau_{t,i}} - x_{t-\tau_{t,i},k}^i \right\|^2\right]  
\end{gathered}\nonumber
\end{equation}

where $(i)$ holds due to $\left\|\sum_{i=1}^n x_i\right\|^2 \leq n \sum_{i=1}^n\left\| x_i \right\|^2$, $(ii)$ holds due to $L$-smoothness of $f_i$.

When $\eta_l \leq \frac{1}{8 K L}$, we have,
\begin{equation}
\begin{gathered}
\mathbb{E}\left[\left\| x_{t-\tau_{t,i}}-x_{t-\tau_{t,i},k}^i  \right\|^2\right] \leq 5 K_{t,i}\eta_l^2\left(\sigma_l^2+6K_{t,i}\sigma_g^2\right)+30K_{t,i}^2\eta_l^2 \mathbb{E}\left[\left\| \nabla f(x_{t-\tau_{t,i}})\right\|^2\right]
\end{gathered}\nonumber
\end{equation}

We can further bound $\frac{1}{n}\sum_{i=1}^n \mathbb{E}\left[\left\| x_t - x_{t-\tau_{t,i}} \right\|^2\right]$
\begin{equation}
\begin{gathered}
\frac{1}{n}\sum_{i=1}^n \mathbb{E}\left[\left\| x_t - x_{t-\tau_{t,i}} \right\|^2\right]
\leq  \mathbb{E}\left[\left\| x_t - x_{t-\tau_{t,u}} \right\|^2\right] = \mathbb{E}\left[\left\| \sum_{k=t-\tau_{t,u}}^{t-1} \left(x_{k+1}-x_k\right) \right\|^2\right] \\
\leq \mathbb{E}\left[\left\| \sum_{k=t-\tau_{t,u}}^{t-1} \eta y_k \right\|^2\right] \leq \tau \eta^2 \sum_{k=t-\tau_{t,u}}^{t-1}\mathbb{E}\left[\left\|  y_k \right\|^2\right]
\end{gathered}\nonumber
\end{equation}
where $y_k\triangleq x_{k+1}-x_k=\eta\frac{d_{k+1}}{\sqrt{\hat{v}_{k+1}}+\epsilon}$.

\begin{equation}
\begin{gathered}
A_8 
\leq \frac{\eta^3 \eta_l L^2 \tau}{\epsilon}  \sum_{k=t-\tau_{t,u}}^{t-1}\mathbb{E}\left[\left\|  y_k \right\|^2\right]
+ \frac{5 \eta \eta_l^3 L^2 \bar{K}_t}{\epsilon}\sigma_l^2 +  \frac{30 \eta \eta_l^3 L^2 \hat{K}_t^2}{\epsilon}\sigma_g^2 \\
+ \frac{30 \eta \eta_l^3 L^2}{\epsilon} \frac{1}{n}\sum_{i=1}^n K_{t,i}^2 \mathbb{E}\left[\left\| \nabla f(x_{t-\tau_{t,i}})\right\|^2\right] 
\end{gathered}\nonumber
\end{equation}

Merging all pieces together,
\begin{equation}
\begin{gathered}
A_2 =\mathbb{E}\left[\left\langle \nabla f(x_t), \eta\frac{\Delta_t}{\sqrt{\hat{v}_{t+1}}+\epsilon}\right\rangle\right]\\
\leq  \frac{\eta G}{4\epsilon^2} \mathbb{E}\left[ \left\|\Delta_t\right\|^2 \right]  - \frac{\eta\eta_l}{2}\mathbb{E}\left[ \left\|\frac{\nabla f(x_t)}{\sqrt{\sqrt{\hat{v}_{t}}+\epsilon}}\right\|^2\right] - \frac{\eta\eta_l }{2(\sqrt{T}\eta_l  G+\epsilon)}  \mathbb{E}\left[\left\| \frac{1}{n}\sum_{i=1}^n\frac{1}{K_{t,i}}\sum_{k=0}^{K_{t,i}-1}\nabla f_i(x_{t-\tau_{t,i},k}^i) \right\|^2\right]\\
+ \frac{\eta^3 \eta_l L^2 \tau}{\epsilon}  \sum_{k=t-\tau_{t,u}}^{t-1}\mathbb{E}\left[\left\|  y_k \right\|^2\right]
+ \frac{5 \eta \eta_l^3 L^2 \bar{K}_t}{\epsilon}\sigma_l^2 +  \frac{30 \eta \eta_l^3 L^2 \hat{K}_t^2}{\epsilon}\sigma_g^2 
+ \frac{30 \eta \eta_l^3 L^2}{\epsilon} \frac{1}{n}\sum_{i=1}^n K_{t,i}^2 \mathbb{E}\left[\left\| \nabla f(x_{t-\tau_{t,i}})\right\|^2\right] 
\end{gathered}\nonumber
\end{equation}

\textbf{Bounding} $A_3$:
\begin{equation}
\begin{gathered}
A_3 =\mathbb{E}\left[\left\langle \nabla f(z_t),-\eta\frac{\beta}{1-\beta}\left(\frac{1}{\sqrt{\hat{v}_{t}}+\epsilon}-\frac{1}{\sqrt{\hat{v}_{t+1}}+\epsilon}\right)d_t \right\rangle\right]\\
\leq \mathbb{E}\left[\left\langle \nabla f(z_t) - \nabla f(x_t) + \nabla f(x_t) ,-\eta\frac{\beta}{1-\beta}\left(\frac{1}{\sqrt{\hat{v}_{t}}+\epsilon}-\frac{1}{\sqrt{\hat{v}_{t+1}}+\epsilon}\right)d_t \right\rangle\right]\\
\underset{(i)}{\leq} \eta \mathbb{E}\left[\left\| \nabla f(z_t) - \nabla f(x_t) \right\| \cdot \left\| \frac{\beta}{1-\beta}\left(\frac{1}{\sqrt{\hat{v}_{t}}+\epsilon}-\frac{1}{\sqrt{\hat{v}_{t+1}}+\epsilon}\right)d_t \right\|\right] \\
+ \eta \mathbb{E}\left[\left\|  \nabla f(x_t) \right\| \cdot \left\| \frac{\beta}{1-\beta}\left(\frac{1}{\sqrt{\hat{v}_{t}}+\epsilon}-\frac{1}{\sqrt{\hat{v}_{t+1}}+\epsilon}\right)d_t \right\|\right]
\end{gathered}\nonumber
\end{equation}
where $(i)$ holds due to Cauchy-Schwarz inequality.

And we would further have,

\begin{equation}
\begin{gathered}
A_3 \underset{(i)}{\leq} \eta L \mathbb{E}\left[\left\| \frac{\beta}{1-\beta}\left( \eta\frac{d_{t}}{\sqrt{\hat{v}_{t}}+\epsilon}\right) \right\| \cdot \left\| \frac{\beta}{1-\beta}\left(\frac{1}{\sqrt{\hat{v}_{t}}+\epsilon}-\frac{1}{\sqrt{\hat{v}_{t+1}}+\epsilon}\right)d_t \right\|\right] \\
+ \eta G \mathbb{E}\left[  \left\| \frac{\beta}{1-\beta}\left(\frac{1}{\sqrt{\hat{v}_{t}}+\epsilon}-\frac{1}{\sqrt{\hat{v}_{t+1}}+\epsilon}\right)d_t \right\|\right]\\
\underset{(ii)}{\leq} \eta^2\eta_l^2G^2L\left(\frac{\beta}{1-\beta}\right)^2 \mathbb{E}\left[\left\| \frac{1}{\sqrt{\hat{v}_{t}}+\epsilon} \right\| \cdot \left\| \frac{1}{\sqrt{\hat{v}_{t}}+\epsilon}-\frac{1}{\sqrt{\hat{v}_{t+1}}+\epsilon} \right\|\right] \\
+ \eta \eta_l G^2 \frac{\beta}{1-\beta} \mathbb{E}\left[\left\| \frac{1}{\sqrt{\hat{v}_{t}}+\epsilon}-\frac{1}{\sqrt{\hat{v}_{t+1}}+\epsilon} \right\|\right]\\
\underset{(iii)}{\leq} \eta^2\eta_l^2G^2L\left(\frac{\beta}{1-\beta}\right)^2 \frac{1}{2\epsilon} \mathbb{E}\left[\left\| \frac{1}{\sqrt{\hat{v}_{t}}+\epsilon}-\frac{1}{\sqrt{\hat{v}_{t+1}}+\epsilon} \right\|\right] + \eta \eta_l G^2 \frac{\beta}{1-\beta} \mathbb{E}\left[\left\| \frac{1}{\sqrt{\hat{v}_{t}}+\epsilon}-\frac{1}{\sqrt{\hat{v}_{t+1}}+\epsilon} \right\|\right] \\
\le \left( \frac{\eta^2\eta_l^2G^2L}{2\epsilon} 
\left(\frac{\beta}{1-\beta}\right)^2 + \eta \eta_l G^2 \frac{\beta}{1-\beta} \right) \mathbb{E}\left[\left\| \frac{1}{\sqrt{\hat{v}_{t}}+\epsilon}-\frac{1}{\sqrt{\hat{v}_{t+1}}+\epsilon} \right\|\right]
\end{gathered}\nonumber
\end{equation}
where $(i)$, $(ii)$ and $(iii)$ hold as we have $\left\| d_t \right\| \leq \eta_l G$, $\left\| \Delta_t \right\| \leq \eta_l G$, $\left\| \frac{1}{\sqrt{\hat{v}_{t}}+\epsilon} \right\| \leq \frac{1}{2\epsilon}$, $\mathbb{E}\left[\left\| \frac{1}{\sqrt{\hat{v}_{t}}+\epsilon}-\frac{1}{\sqrt{\hat{v}_{t+1}}+\epsilon} \right\|\right]\leq \frac{1}{4\epsilon^2}\mathbb{E}\left[\left\| \Delta_t\right\| \right]$.

\textbf{Bounding} $A_4$:
\begin{equation}
\begin{gathered}
A_4 = \frac{L\eta^2}{2}\mathbb{E}\left[\left\| \frac{\Delta_t}{\sqrt{\hat{v}_{t+1}}+\epsilon}-\frac{\beta}{1-\beta}\left(\frac{1}{\sqrt{\hat{v}_{t}}+\epsilon}-\frac{1}{\sqrt{\hat{v}_{t+1}}+\epsilon}\right)d_t\right\|^2\right]\\
\underset{(i)}{\leq}  L\eta^2 \mathbb{E}\left[\left\| \frac{\Delta_t}{\sqrt{\hat{v}_{t+1}}+\epsilon} \right\|^2\right] + L\eta^2 \mathbb{E}\left[\left\|  \frac{\beta}{1-\beta}\left(\frac{1}{\sqrt{\hat{v}_{t}}+\epsilon}-\frac{1}{\sqrt{\hat{v}_{t+1}}+\epsilon}\right)d_t\right\|^2\right]\\
\underset{(ii)}{\leq} \frac{L\eta^2}{4\epsilon^2}\mathbb{E}\left[\left\| \Delta_t\right\|^2\right]+L\eta^2\eta_l^2 G^2 \left(\frac{\beta}{1-\beta}\right)^2\mathbb{E}\left[\left\| \frac{1}{\sqrt{\hat{v}_{t}}+\epsilon}-\frac{1}{\sqrt{\hat{v}_{t+1}}+\epsilon}  \right\|^2\right]
\end{gathered}\nonumber
\end{equation}

where $(i)$ holds due to Cauchy-Schwarz inequality, $(ii)$ holds as $\left\| d_t \right\| \leq \eta_l G$.

\textbf{Bounding} $\sum_{t=0}^{T-1}\mathbb{E}\left[\left\| \frac{1}{\sqrt{\hat{v}_{t}}+\epsilon}-\frac{1}{\sqrt{\hat{v}_{t+1}}+\epsilon}  \right\|^2\right]$:
\begin{equation}
\begin{gathered}
\sum_{t=0}^{T-1}\mathbb{E}\left[\left\| \frac{1}{\sqrt{\hat{v}_{t}}+\epsilon}-\frac{1}{\sqrt{\hat{v}_{t+1}}+\epsilon}  \right\|^2\right] 
\underset{(i)}{=}  \sum_{t=0}^{T-1} \sum_{j=1}^d \mathbb{E}\left[ \left( \left[\frac{1}{\sqrt{\hat{v}_{t}}+\epsilon}\right]_j - \left[\frac{1}{\sqrt{\hat{v}_{t+1}}+\epsilon}\right]_j \right)  ^2\right]\\
\leq \sum_{t=0}^{T-1} \sum_{j=1}^d \mathbb{E}\left[  \left[\frac{1}{\sqrt{\hat{v}_{t}}+\epsilon}\right]_j^2 - \left[\frac{1}{\sqrt{\hat{v}_{t+1}}+\epsilon}\right]_j^2\right] \\
\underset{(ii)}{\leq} \mathbb{E}\left[\left\| \frac{1}{\sqrt{\hat{v}_{0}}+\epsilon} \right\|^2\right] - \mathbb{E}\left[\left\| \frac{1}{\sqrt{\hat{v}_{T}}+\epsilon} \right\|^2\right] \underset{(iii)}{\leq} \frac{d}{4\epsilon^2}
\end{gathered}\nonumber
\end{equation}

where $\left[\frac{1}{\sqrt{\hat{v}_{t}}+\epsilon}\right]_j$ denotes the $j$-th element of vector $\frac{1}{\sqrt{\hat{v}_{t}}+\epsilon}$ and $(i)$ follows from the definition of $\left\|\cdot\right\|^2$. $(ii)$ holds as $\hat{v}_{t}$ is non-decreasing, $(iii)$ holds as every element of $\frac{1}{\sqrt{\hat{v}_{0}}+\epsilon}$ is smaller than $\frac{1}{2\epsilon}$.

\textbf{Bounding} $\sum_{t=0}^{T-1}\mathbb{E}\left[\left\| \frac{1}{\sqrt{\hat{v}_{t}}+\epsilon}-\frac{1}{\sqrt{\hat{v}_{t+1}}+\epsilon}  \right\|\right]$:
\begin{equation}
\begin{gathered}
\sum_{t=0}^{T-1}\mathbb{E}\left[\left\| \frac{1}{\sqrt{\hat{v}_{t}}+\epsilon}-\frac{1}{\sqrt{\hat{v}_{t+1}}+\epsilon}  \right\| \right] 
=  \sum_{t=0}^{T-1} \sum_{j=1}^d \mathbb{E}\left[ \left| \left[\frac{1}{\sqrt{\hat{v}_{t}}+\epsilon}\right]_j - \left[\frac{1}{\sqrt{\hat{v}_{t+1}}+\epsilon}\right]_j \right|  \right]\\
\leq \sum_{t=0}^{T-1} \sum_{j=1}^d \mathbb{E}\left[  \left| \left[\frac{1}{\sqrt{\hat{v}_{t}}+\epsilon}\right]_j \right| - \left| \left[\frac{1}{\sqrt{\hat{v}_{t+1}}+\epsilon}\right]_j \right| \right] \\
\underset{(i)}{\leq} \mathbb{E}\left[\left\| \frac{1}{\sqrt{\hat{v}_{0}}+\epsilon} \right\| \right] - \mathbb{E}\left[\left\| \frac{1}{\sqrt{\hat{v}_{T}}+\epsilon} \right\| \right] \leq \frac{d}{2\epsilon}
\end{gathered}\nonumber
\end{equation}
Similarly, $(i)$ holds due to $\hat{v}_{t}$ is non-decreasing.

\textbf{Bounding} $\mathbb{E}\left[\left\| \Delta_t\right\|^2\right]$:

\begin{equation}
\begin{gathered}
\mathbb{E}\left[\left\| \Delta_t\right\|^2\right] = \mathbb{E}\left[\left\| \frac{1}{m}\sum_{i\in\mathcal{S}_t}\Delta^i_{t-\tau_{t,i}} \right\|^2\right]\\
=\mathbb{E}\left[\left\| \frac{1}{m}\sum_{i\in\mathcal{S}_t}  \frac{\eta_l}{K_{t,i}} \sum_{k=0}^{K_{t,i}-1} g_{t-\tau_{t,i},k}^i \right\|^2\right] \\
\leq \mathbb{E}\left[\left\| \frac{1}{m}\sum_{i\in\mathcal{S}_t} \frac{\eta_l}{K_{t,i}} \sum_{k=0}^{K_{t,i}-1} \left\{ g_{t-\tau_{t,i},k}^i - \nabla f_i(x_{t-\tau_{t,i},k}^i) + \nabla f_i(x_{t-\tau_{t,i},k}^i)\right\} \right\|^2\right] \\
\leq \mathbb{E}\left[\left\| \frac{1}{m}\sum_{i\in\mathcal{S}_t} \frac{\eta_l}{K_{t,i}} \sum_{k=0}^{K_{t,i}-1} \left\{ g_{t-\tau_{t,i},k}^i - \nabla f_i(x_{t-\tau_{t,i},k}^i) \right\}  \right\|^2\right] + \mathbb{E}\left[\left\| \frac{1}{m}\sum_{i\in\mathcal{S}_t} \frac{\eta_l}{K_{t,i}} \sum_{k=0}^{K_{t,i}-1}  \nabla f_i(x_{t-\tau_{t,i},k}^i) \right\|^2\right]\\
\leq \frac{1}{m^2}\sum_{i\in\mathcal{S}_t}\frac{\eta_l^2}{K^2_{t,i}}\sum_{k=0}^{K_{t,i}-1}\sigma_l^2 + \mathbb{E}\left[\left\| \frac{1}{m}\sum_{i\in\mathcal{S}_t} \frac{\eta_l}{K_{t,i}} \sum_{k=0}^{K_{t,i}-1}  \nabla f_i(x_{t-\tau_{t,i},k}^i) \right\|^2\right]\\
\leq \frac{\eta_l^2}{m}\frac{1}{K_t}\sigma^2_l + \frac{\eta_l^2}{m^2}  \mathbb{E}\left[\left\| \sum_{i=1}^n \mathbbm{1}\left\{i\in\mathcal{S}_t\right\} \frac{1}{K_{t,i}} \sum_{k=0}^{K_{t,i}-1}  \nabla f_i(x_{t-\tau_{t,i},k}^i) \right\|^2\right]
\end{gathered}\nonumber
\end{equation}
where $\frac{1}{K_t}=\sum_{i\in\mathcal{S}_t}\frac{1}{K_{t,i}}$.

\textbf{Bounding} $\sum_{t=0}^{T-1}\mathbb{E}\left[\left\| d_t\right\|^2\right]$:

We could verify:
\begin{equation}
\begin{gathered}
 d_t = \sum_{p=0}^t a_{t,p}\Delta_p,         \quad  \text{where} \quad  a_{t,p}=\left(1-\beta_p\right)\prod_{q=p+1}^t\beta_q
\end{gathered}\nonumber
\end{equation}
We further get,
\begin{equation}
\begin{gathered}
\mathbb{E}\left[\left\| d_t\right\|^2\right]=\mathbb{E}\left[\left\| \sum_{p=0}^t a_{t,p}\Delta_p\right\|^2\right]\\
\leq \sum_{e=1}^d \mathbb{E}\left[\sum_{p=0}^t a_{t,p}\Delta_{p,e}\right]^2 
\leq \sum_{e=1}^d \mathbb{E}\left[ \left(\sum_{p=0}^t a_{t,p}\right) \left(\sum_{p=0}^t a_{t,p}\Delta_{p,e}^2\right) \right] \leq \left(1- \prod_{q=0}^t \beta_q\right)\sum_{p=0}^t a_{t,p}\mathbb{E}\left[\left\| \Delta_p \right\|^2\right]\\
\le \left(1- \prod_{q=0}^t \beta_q\right)\sum_{p=0}^t a_{t,p}\left\{ \frac{\eta_l^2}{m}\frac{1}{K_t}\sigma^2_l + \frac{\eta_l^2}{m^2}  \mathbb{E}\left[\left\| \sum_{i=1}^n\mathbbm{1}\left\{i\in\mathcal{S}_p\right\} \frac{1}{K_{p,i}} \sum_{k=0}^{K_{p,i}-1}  \nabla f_i(x_{p-\tau_{p,i},k}^i) \right\|^2\right] \right\}\\
\leq \frac{\eta_l^2}{m}\frac{1}{K_t}\sigma^2_l + \frac{\eta_l^2}{m^2} \sum_{p=0}^t a_{t,p} \cdot \mathbb{E}\left[\left\| \sum_{i=1}^n\mathbbm{1}\left\{i\in\mathcal{S}_p\right\} \frac{1}{K_{p,i}} \sum_{k=0}^{K_{p,i}-1}  \nabla f_i(x_{p-\tau_{p,i},k}^i) \right\|^2\right] 
\end{gathered}\nonumber
\end{equation}

Summing over $t\in\{0,1,\dots,T-1\}$,
\begin{equation}
\begin{gathered}
\sum_{t=0}^{T-1}\mathbb{E}\left[\left\| d_t\right\|^2\right] \leq
\frac{\eta_l^2}{m}\sum_{t=0}^{T-1}\frac{1}{K_t}\sigma^2_l + \frac{\eta_l^2}{m^2}\sum_{t=0}^{T-1} \sum_{p=0}^t a_{t,p} \cdot \mathbb{E}\left[\left\| \sum_{i=1}^n\mathbbm{1}\left\{i\in\mathcal{S}_p\right\} \frac{1}{K_{p,i}} \sum_{k=0}^{K_{p,i}-1}  \nabla f_i(x_{p-\tau_{p,i},k}^i) \right\|^2\right] \\
\leq \frac{\eta_l^2}{m}\sum_{t=0}^{T-1}\frac{1}{K_t}\sigma^2_l + \frac{\eta_l^2}{m^2} \sum_{p=0}^{T-1}\left(\sum_{t=p}^{T-1}a_{t,p}\right) \mathbb{E}\left[\left\| \sum_{i=1}^n\mathbbm{1}\left\{i\in\mathcal{S}_p\right\} \frac{1}{K_{p,i}} \sum_{k=0}^{K_{p,i}-1}  \nabla f_i(x_{p-\tau_{p,i},k}^i) \right\|^2\right]\\
\leq \frac{\eta_l^2}{m}\sum_{t=0}^{T-1}\frac{1}{K_t}\sigma^2_l + \frac{\eta_l^2}{m^2} \sum_{t=0}^{T-1} \mathbb{E}\left[\left\| \sum_{i=1}^n\mathbbm{1}\left\{i\in\mathcal{S}_t\right\} \frac{1}{K_{t,i}} \sum_{k=0}^{K_{t,i}-1}  \nabla f_i(x_{t-\tau_{t,i},k}^i) \right\|^2\right]\\
\end{gathered}\nonumber
\end{equation}

\textbf{Bounding} $\sum_{t=0}^{T-1}\mathbb{E}[\lVert y_t\rVert^2]$:

\begin{equation}
\begin{gathered}
\sum_{t=0}^{T-1}\mathbb{E}\left[\left\| y_t\right\|^2\right] = \sum_{t=0}^{T-1}\mathbb{E}\left[\left\|  \frac{d_{t+1}}{\sqrt{\hat{v}_{k+1}}+\epsilon} \right\|^2\right] \leq \frac{1}{4\epsilon^2}\sum_{t=0}^{T-1}\mathbb{E}\left[\left\| d_{t} \right\|^2\right] \\ 
\leq \frac{\eta_l^2 }{4\epsilon^2 m} \sum_{t=0}^{T-1}\frac{1}{K_t} \sigma^2_l + \frac{ \eta_l^2}{4 \epsilon^2 m^2}   \sum_{t=0}^{T-1} \mathbb{E}\left[\left\| \sum_{i=1}^n\mathbbm{1}\left\{i\in\mathcal{S}_t\right\} \frac{1}{K_{t,i}} \sum_{k=0}^{K_{t,i}-1}  \nabla f_i(x_{t-\tau_{t,i},k}^i) \right\|^2\right]\\
\end{gathered}\nonumber
\end{equation}

Merging $A_1$, $A_2$, $A_3$, and $A_4$ together, we get,

\begin{equation}
\begin{gathered}
\mathbb{E}\left[f(z_{t+1})\right]\leq f(z_t) + A_1 + A_2 + A_3 + A_4\\
\leq f(z_t) + \frac{\eta^2 L}{2\epsilon^2}\left(\frac{\beta}{1-\beta}\right)^2 \mathbb{E}\left[\left\| d_t \right\|^2\right] + \frac{\eta^2 L}{2\epsilon^2} \mathbb{E}\left[\left\| \Delta_t \right\|^2\right] \\
+ \frac{\eta G}{4\epsilon^2} \mathbb{E}\left[ \left\|\Delta_t\right\|^2 \right] - \frac{\eta\eta_l}{2}\mathbb{E}\left[ \left\|\frac{\nabla f(x_t)}{\sqrt{\sqrt{\hat{v}_{t}}+\epsilon}}\right\|^2\right] - \frac{\eta\eta_l }{2(\sqrt{T}\eta_l  G+\epsilon)}  \mathbb{E}\left[\left\| \frac{1}{n}\sum_{i=1}^n\frac{1}{K_{t,i}}\sum_{k=0}^{K_{t,i}-1} \nabla f_i(x_{t-\tau_{t,i},k}^i)  \right\|^2\right]\\
+ \frac{\eta^3 \eta_l L^2 \tau}{\epsilon}  \sum_{k=t-\tau_{t,u}}^{t-1}\mathbb{E}\left[\left\|  y_k \right\|^2\right]
+ \frac{5 \eta \eta_l^3 L^2 \bar{K}_t}{\epsilon}\sigma_l^2 +  \frac{30 \eta \eta_l^3 L^2 \hat{K}_t^2}{\epsilon}\sigma_g^2 
+ \frac{30 \eta \eta_l^3 L^2}{\epsilon} \frac{1}{n}\sum_{i=1}^n K_{t,i}^2 \mathbb{E}\left[\left\| \nabla f(x_{t-\tau_{t,i}})\right\|^2\right] \\
+ \left( \frac{\eta^2\eta_l^2G^2L}{2\epsilon} \left(\frac{\beta}{1-\beta}\right)^2 + \eta \eta_l G^2 \frac{\beta}{1-\beta} \right) \mathbb{E}\left[\left\| \frac{1}{\sqrt{\hat{v}_{t}}+\epsilon}-\frac{1}{\sqrt{\hat{v}_{t+1}}+\epsilon} \right\|\right]\\
+ \frac{L\eta^2}{4\epsilon^2}\mathbb{E}\left[\left\| \Delta_t\right\|^2\right]+L\eta^2\eta_l^2G^2\left(\frac{\beta}{1-\beta}\right)^2\mathbb{E}\left[\left\| \frac{1}{\sqrt{\hat{v}_{t}}+\epsilon}-\frac{1}{\sqrt{\hat{v}_{t+1}}+\epsilon}  \right\|^2\right]
\end{gathered}\nonumber
\end{equation}

Sum over $t\in\{0,1,\dots,T-1\}$,

\begin{equation}
\begin{gathered}
\mathbb{E}\left[f(z_{T})\right] \leq f(z_0) + \\
 \frac{\eta^2 L}{2\epsilon^2}\left(\frac{\beta}{1-\beta}\right)^2 \sum_{t=0}^{T-1}\mathbb{E}\left[\left\| d_t \right\|^2\right] + \frac{\eta^2 L}{2\epsilon^2} \sum_{t=0}^{T-1}\mathbb{E}\left[\left\| \Delta_t \right\|^2\right] 
+ \frac{\eta G}{4\epsilon^2} \sum_{t=0}^{T-1}\mathbb{E}\left[ \left\|\Delta_t\right\|^2 \right] - \frac{\eta\eta_l}{2}\sum_{t=0}^{T-1}\mathbb{E}\left[ \left\|\frac{\nabla f(x_t)}{\sqrt{\sqrt{\hat{v}_{t}}+\epsilon}}\right\|^2\right]\\
- \frac{\eta\eta_l }{2(\sqrt{T}\eta_l  G+\epsilon)}  \sum_{t=0}^{T-1}\mathbb{E}\left[\left\| \frac{1}{n}\sum_{i=1}^n\frac{1}{K_{t,i}}\sum_{k=0}^{K_{t,i}-1} \nabla f_i(x_{t-\tau_{t,i},k}^i)  \right\|^2\right]+ \frac{\eta^3 \eta_l L^2 \tau}{\epsilon}  \sum_{t=0}^{T-1}\sum_{k=t-\tau_{t,u}}^{t-1}\mathbb{E}\left[\left\|  y_k \right\|^2\right]\\
+ \frac{5 \eta \eta_l^3 L^2 \sum_{t=0}^{T-1}\bar{K}_t}{\epsilon}\sigma_l^2 +  \frac{30 \eta \eta_l^3 L^2 \sum_{t=0}^{T-1}\hat{K}_t^2}{\epsilon}\sigma_g^2 
+ \frac{30 \eta \eta_l^3 L^2}{\epsilon} \sum_{t=0}^{T-1}\frac{1}{n}\sum_{i=1}^n K_{t,i}^2 \mathbb{E}\left[\left\| \nabla f(x_{t-\tau_{t,i}})\right\|^2\right] \\
+ \left( \frac{\eta^2\eta_l^2G^2L}{2\epsilon} \left(\frac{\beta}{1-\beta}\right)^2 + \eta \eta_l G^2 \frac{\beta}{1-\beta} \right) \sum_{t=0}^{T-1}\mathbb{E}\left[\left\| \frac{1}{\sqrt{\hat{v}_{t}}+\epsilon}-\frac{1}{\sqrt{\hat{v}_{t+1}}+\epsilon} \right\|\right]\\
+ \frac{L\eta^2}{4\epsilon^2} \sum_{t=0}^{T-1}\mathbb{E}\left[\left\| \Delta_t\right\|^2\right]+L\eta^2\eta_l^2G^2\left(\frac{\beta}{1-\beta}\right)^2\sum_{t=0}^{T-1}\mathbb{E}\left[\left\| \frac{1}{\sqrt{\hat{v}_{t}}+\epsilon}-\frac{1}{\sqrt{\hat{v}_{t+1}}+\epsilon}  \right\|^2\right]
\end{gathered}\nonumber
\end{equation}

Plug in the bounds for $\sum_{t=0}^{T-1} \mathbb{E}\left[\left\| \frac{1}{\sqrt{\hat{v}_{t}}+\epsilon}-\frac{1}{\sqrt{\hat{v}_{t+1}}+\epsilon} \right\|\right]$ and $\sum_{t=0}^{T-1}\mathbb{E}\left[\left\| \frac{1}{\sqrt{\hat{v}_{t}}+\epsilon}-\frac{1}{\sqrt{\hat{v}_{t+1}}+\epsilon}  \right\|^2\right]$, we have,

\begin{equation}
\begin{gathered}
\mathbb{E}\left[f(z_{T})\right] \leq f(z_0) + \\
 \frac{\eta^2 L}{2\epsilon^2}\left(\frac{\beta}{1-\beta}\right)^2 \sum_{t=0}^{T-1}\mathbb{E}\left[\left\| d_t \right\|^2\right] + 
 \left(\frac{ 3 L \eta^2 + \eta G}{4\epsilon^2} \right)\sum_{t=0}^{T-1}\mathbb{E}\left[\left\| \Delta_t \right\|^2\right] 
- \frac{\eta\eta_l}{2}\sum_{t=0}^{T-1}\mathbb{E}\left[ \left\|\frac{\nabla f(x_t)}{\sqrt{\sqrt{\hat{v}_{t}}+\epsilon}}\right\|^2\right]\\
- \frac{\eta\eta_l }{2(\sqrt{T}\eta_l  G+\epsilon)}  \sum_{t=0}^{T-1}\mathbb{E}\left[\left\| \frac{1}{n}\sum_{i=1}^n\frac{1}{K_{t,i}}\sum_{k=0}^{K_{t,i}-1} \nabla f_i(x_{t-\tau_{t,i},k}^i)  \right\|^2\right]+ \frac{\eta^3 \eta_l L^2 \tau}{\epsilon}  \sum_{t=0}^{T-1}\sum_{k=t-\tau_{t,u}}^{t-1}\mathbb{E}\left[\left\|  y_k \right\|^2\right]\\
+ \frac{5 \eta \eta_l^3 L^2 \sum_{t=0}^{T-1}\bar{K}_t}{\epsilon}\sigma_l^2 +  \frac{30 \eta \eta_l^3 L^2 \sum_{t=0}^{T-1}\hat{K}_t^2}{\epsilon}\sigma_g^2 
+ \frac{30 \eta \eta_l^3 L^2}{\epsilon} \sum_{t=0}^{T-1}\frac{1}{n}\sum_{i=1}^n K_{t,i}^2 \mathbb{E}\left[\left\| \nabla f(x_{t-\tau_{t,i}})\right\|^2\right] \\
+ \left( \frac{\eta^2\eta_l^2G^2L}{2\epsilon} \left(\frac{\beta}{1-\beta}\right)^2 + \eta \eta_l G^2 \frac{\beta}{1-\beta} \right) \frac{d}{2\epsilon}
+L\eta^2\eta_l^2G^2\left(\frac{\beta}{1-\beta}\right)^2 \frac{d}{4\epsilon^2}
\end{gathered}\nonumber
\end{equation}

We could further get,

\begin{equation}
\begin{gathered}
\mathbb{E}\left[f(z_{T})\right] \leq f(z_0) - \frac{\eta\eta_l}{2}\sum_{t=0}^{T-1}\mathbb{E}\left[ \left\|\frac{\nabla f(x_t)}{\sqrt{\sqrt{\hat{v}_{t}}+\epsilon}}\right\|^2\right] + \frac{30 \eta \eta_l^3 L^2}{\epsilon} \sum_{t=0}^{T-1}\frac{1}{n}\sum_{i=1}^n K_{t,i}^2 \mathbb{E}\left[\left\| \nabla f(x_{t-\tau_{t,i}})\right\|^2\right] \\
+ \left( \left( \frac{\eta^2 L}{2\epsilon^2}\left(\frac{\beta}{1-\beta}\right)^2 + \frac{ 3 L \eta^2 + \eta G}{4\epsilon^2}  \right) \frac{\eta_l^2}{m^2} + \frac{\eta^3 \eta_l L^2 \tau^2}{\epsilon} \frac{\eta_l^2}{4\epsilon^2m^2} \right)\cdot \\
\sum_{t=0}^{T-1} \mathbb{E}\left[\left\| \sum_{i=1}^n\mathbbm{1}\left\{i\in\mathcal{S}_t\right\} \frac{1}{K_{t,i}} \sum_{k=0}^{K_{t,i}-1}  \nabla f_i(x_{t-\tau_{t,i},k}^i) \right\|^2\right]\\
- \frac{\eta\eta_l }{2(\sqrt{T}\eta_l  G+\epsilon)n^2}  \sum_{t=0}^{T-1}\mathbb{E}\left[\left\| \sum_{i=1}^n\frac{1}{K_{t,i}}\sum_{k=0}^{K_{t,i}-1} \nabla f_i(x_{t-\tau_{t,i},k}^i)  \right\|^2\right] +  \frac{30 \eta \eta_l^3 L^2 \sum_{t=0}^{T-1}\hat{K}_t^2}{\epsilon}\sigma_g^2 \\
+ \left( \frac{\eta^2\eta_l^2G^2L}{2\epsilon} \left(\frac{\beta}{1-\beta}\right)^2 + \eta \eta_l G^2 \frac{\beta}{1-\beta} \right) \frac{d}{2\epsilon} + L\eta^2\eta_l^2G^2\left(\frac{\beta}{1-\beta}\right)^2 \frac{d}{4\epsilon^2}\\
+ \left( \frac{5 \eta \eta_l^3 L^2 \sum_{t=0}^{T-1}\bar{K}_t}{\epsilon} +   \frac{2 \eta^2 \eta_l^2 L C_\beta^2 + 3 L \eta^2 \eta_l^2 + \eta \eta_l^2 G}{4 m \epsilon^2}    \sum_{t=0}^{T-1}\frac{1}{K_t} + \frac{\eta^3 \eta_l^3 L^2 \tau^2}{4 \epsilon^3 m}  \sum_{t=0}^{T-1}\frac{1}{K_t} \right)\sigma_l^2 
\end{gathered}\nonumber
\end{equation}

We have the following inequality,

\begin{equation}
\begin{gathered}
\mathbb{E}\left[\left\| \sum_{i=1}^n \frac{1}{K_{t,i}} \sum_{k=0}^{K_{t,i}-1} \nabla f_i(x_{t-\tau_{t,i},k}^i) \right\|^2\right]=n\sum_{i=1}^n\mathbb{E}\left[\left\| \frac{1}{K_{t,i}} \sum_{k=0}^{K_{t,i}-1} \nabla f_i(x_{t-\tau_{t,i},k}^i) \right\|^2\right] \\- \frac{1}{2}\sum_{i\neq j} \left\| \frac{1}{K_{t,i}} \sum_{k=0}^{K_{t,i}-1} \nabla f_i(x_{t-\tau_{t,i},k}^i) - \frac{1}{K_{t,j}} \sum_{k=0}^{K_{t,j}-1} \nabla f_j(x_{t-\tau_{t,j},k}^j) \right\|^2
\end{gathered}\nonumber
\end{equation}

We also know the following,

\begin{equation}
\begin{gathered}
\mathbb{E}\left[\left\| \sum_{i=1}^n\mathbbm{1}\left\{i\in\mathcal{S}_t\right\} \frac{1}{K_{t,i}} \sum_{k=0}^{K_{t,i}-1} \nabla f_i(x_{t-\tau_{t,i},k}^i) \right\|^2\right]
= \sum_{i=1}^n \mathbb{P}\left\{i\in\mathcal{S}_t\right\} \left\| \frac{1}{K_{t,i}} \sum_{k=0}^{K_{t,i}-1} \nabla f_i(x_{t-\tau_{t,i},k}^i)\right\|^2 \\+ \sum_{i\neq j}\mathbb{P}\left\{i,j\in\mathcal{S}_t\right\}\left\langle \frac{1}{K_{t,i}} \sum_{k=0}^{K_{t,i}-1} \nabla f_i(x_{t-\tau_{t,i},k}^i),\frac{1}{K_{t,j}} \sum_{k=0}^{K_{t,j}-1} \nabla f_j(x_{t-\tau_{t,j},k}^j)\right\rangle\\
= \frac{m}{n}\sum_{i=1}^n  \left\| \frac{1}{K_{t,i}} \sum_{k=0}^{K_{t,i}-1} \nabla f_i(x_{t-\tau_{t,i},k}^i)\right\|^2 + \frac{m\left(m-1\right)}{n\left(n-1\right)} \sum_{i\neq j} \left\langle \frac{1}{K_{t,i}} \sum_{k=0}^{K_{t,i}-1} \nabla f_i(x_{t-\tau_{t,i},k}^i),\frac{1}{K_{t,j}} \sum_{k=0}^{K_{t,j}-1} \nabla f_j(x_{t-\tau_{t,j},k}^j)\right\rangle\\
= \frac{m^2}{n}\sum_{i=1}^n  \left\| \frac{1}{K_{t,i}} \sum_{k=0}^{K_{t,i}-1} \nabla f_i(x_{t-\tau_{t,i},k}^i)\right\|^2 \\
- \frac{m\left(m-1\right)}{2 n\left(n-1\right)} \sum_{i\neq j} \left\| \frac{1}{K_{t,i}} \sum_{k=0}^{K_{t,i}-1} \nabla f_i(x_{t-\tau_{t,i},k}^i) - \frac{1}{K_{t,j}} \sum_{k=0}^{K_{t,j}-1} \nabla f_j(x_{t-\tau_{t,j},k}^j) \right\|^2
\end{gathered}\nonumber
\end{equation}

Merging these two pieces together, and assume $\eta_l\le \frac{\epsilon}{\sqrt{T}G}$, we have $- \frac{\eta\eta_l }{2(\sqrt{T}\eta_l  G+\epsilon)} \le - \frac{\eta\eta_l }{4\epsilon }$. And if $H_1 \eta_l^2+H_2\eta_l\le\epsilon^2$, where $H_1\triangleq 2 \eta^2 L^2 \tau^2$, $H_2\triangleq 4 \eta L C_\beta^2 + 6 \eta L \epsilon + 2 G \epsilon$, $C_\beta=\frac{\beta}{1-\beta}$, we have the following inequality,

\begin{equation}
\begin{gathered}
 \left( \frac{\eta^2 L}{2\epsilon^2}\left(\frac{\beta}{1-\beta}\right)^2 + \frac{ 3 L \eta^2 + \eta G}{4\epsilon^2}  \right) \frac{\eta_l^2}{m^2} + \frac{\eta^3 \eta_l L^2 \tau^2}{\epsilon} \frac{\eta_l^2}{4\epsilon^2m^2} \leq \frac{\eta\eta_l}{8 \epsilon m^2}
\end{gathered}\nonumber
\end{equation}

thus, we have,

\begin{equation}
\begin{gathered}
 \left( \left( \frac{\eta^2 L}{2\epsilon^2}\left(\frac{\beta}{1-\beta}\right)^2 + \frac{ 3 L \eta^2 + \eta G}{4\epsilon^2}  \right) \frac{\eta_l^2}{m^2} + \frac{\eta^3 \eta_l L^2 \tau^2}{\epsilon} \frac{ \eta_l^2}{4\epsilon^2m^2} \right)\cdot \\
\sum_{t=0}^{T-1} \mathbb{E}\left[\left\| \sum_{i=1}^n\mathbbm{1}\left\{i\in\mathcal{S}_t\right\} \frac{1}{K_{t,i}} \sum_{k=0}^{K_{t,i}-1}  \nabla f_i(x_{t-\tau_{t,i},k}^i) \right\|^2\right]\\
- \frac{\eta\eta_l }{2(\sqrt{T}\eta_l  G+\epsilon)n^2}  \sum_{t=0}^{T-1}\mathbb{E}\left[\left\| \sum_{i=1}^n\frac{1}{K_{t,i}}\sum_{k=0}^{K_{t,i}-1} \nabla f_i(x_{t-\tau_{t,i},k}^i)  \right\|^2\right]   \\
\leq \frac{\eta\eta_l}{ 8 \epsilon m^2} \cdot  
\sum_{t=0}^{T-1} \mathbb{E}\left[\left\| \sum_{i=1}^n\mathbbm{1}\left\{i\in\mathcal{S}_t\right\} \frac{1}{K_{t,i}} \sum_{k=0}^{K_{t,i}-1}  \nabla f_i(x_{t-\tau_{t,i},k}^i) \right\|^2\right]\\
- \frac{\eta\eta_l }{4 \epsilon n^2}  \sum_{t=0}^{T-1}\mathbb{E}\left[\left\| \sum_{i=1}^n\frac{1}{K_{t,i}}\sum_{k=0}^{K_{t,i}-1} \nabla f_i(x_{t-\tau_{t,i},k}^i)  \right\|^2\right]\\
= \left( -\frac{\eta\eta_l}{4\epsilon n} + \frac{\eta\eta_l}{8 \epsilon n} \right) \cdot \mathcal{G}_1 
+ \left( \frac{\eta\eta_l}{8 \epsilon n^2} - \frac{\eta\eta_l(m-1)}{16 \epsilon m (n-1) n} \right) \cdot \mathcal{G}_2
\end{gathered}\nonumber
\end{equation}

where,

\begin{equation}
\begin{gathered}
\mathcal{G}_1 \triangleq \sum_{t=0}^{T-1} \sum_{i=1}^n  \left\| \frac{1}{K_{t,i}} \sum_{k=0}^{K_{t,i}-1} \nabla f_i(x_{t-\tau_{t,i},k}^i)\right\|^2\\
\mathcal{G}_2 \triangleq \sum_{t=0}^{T-1} \sum_{i\neq j} \left\| \frac{1}{K_{t,i}} \sum_{k=0}^{K_{t,i}-1} \nabla f_i(x_{t-\tau_{t,i},k}^i) - \frac{1}{K_{t,j}} \sum_{k=0}^{K_{t,j}-1} \nabla f_j(x_{t-\tau_{t,j},k}^j) \right\|^2
\end{gathered}\nonumber 
\end{equation}
 
We can verify $\mathcal{G}_2 \leq 2 \left(n-1\right) \mathcal{G}_1$, thus, we could verify the above term is non-positive.

Plugging back into the original terms,

\begin{equation}
\begin{gathered}
\mathbb{E}\left[f(z_{T})\right] \leq f(z_0) - \frac{\eta\eta_l}{2}\sum_{t=0}^{T-1}\mathbb{E}\left[ \left\|\frac{\nabla f(x_t)}{\sqrt{\sqrt{\hat{v}_{t}}+\epsilon}}\right\|^2\right] + \frac{30 \eta \eta_l^3 L^2}{\epsilon} \sum_{t=0}^{T-1}\frac{1}{n}\sum_{i=1}^n K_{t,i}^2 \mathbb{E}\left[\left\| \nabla f(x_{t-\tau_{t,i}})\right\|^2\right] \\
 +  \frac{30 \eta \eta_l^3 L^2 \sum_{t=0}^{T-1}\hat{K}_t^2}{\epsilon}\sigma_g^2 \\
 + \left( \frac{5 \eta \eta_l^3 L^2 \sum_{t=0}^{T-1}\bar{K}_t}{\epsilon} +   \frac{2 \eta^2 \eta_l^2 L C_\beta^2 + 3 L \eta^2 \eta_l^2 + \eta \eta_l^2 G}{4 m \epsilon^2}    \sum_{t=0}^{T-1}\frac{1}{K_t} + \frac{\eta^3 \eta_l^3 L^2 \tau^2}{4 \epsilon^3 m}  \sum_{t=0}^{T-1}\frac{1}{K_t} \right)\sigma_l^2  \\
+ \left( \frac{\eta^2\eta_l^2G^2L}{2\epsilon} \left(\frac{\beta}{1-\beta}\right)^2 + \eta \eta_l G^2 \frac{\beta}{1-\beta} \right) \frac{d}{2\epsilon} + L\eta^2\eta_l^2G^2\left(\frac{\beta}{1-\beta}\right)^2 \frac{d}{4\epsilon^2}
\end{gathered}\nonumber
\end{equation}

Since $\left\|\hat{v}_{t}\right\| \leq \eta_l^2 G^2 T$ and $\eta_l\leq \frac{\epsilon}{\sqrt{T}G}$, we have,

\begin{equation}
\begin{gathered}
- \frac{\eta\eta_l}{2} \sum_{t=0}^{T-1} \mathbb{E}\left[ \left\|\frac{\nabla f(x_t)}{\sqrt{\sqrt{\hat{v}_{t}}+\epsilon}}\right\|^2\right] \leq - \frac{\eta\eta_l}{4\epsilon} \sum_{t=0}^{T-1} \mathbb{E}\left[ \left\| \nabla f(x_t) \right\|^2\right]
\end{gathered}\nonumber
\end{equation}

Assume $\eta_l \leq \frac{1}{\sqrt{120\epsilon\tau}K_\text{max}L}$, we have

\begin{equation}
\begin{gathered}
- \frac{\eta\eta_l}{2}\sum_{t=0}^{T-1}\mathbb{E}\left[ \left\|\frac{\nabla f(x_t)}{\sqrt{\sqrt{\hat{v}_{t}}+\epsilon}}\right\|^2\right] + \frac{30 \eta \eta_l^3 L^2}{\epsilon} \sum_{t=0}^{T-1}\frac{1}{n}\sum_{i=1}^n K_{t,i}^2 \mathbb{E}\left[\left\| \nabla f(x_{t-\tau_{t,i}})\right\|^2\right]\\
\leq - \frac{\eta\eta_l}{2} \sum_{t=0}^{T-1} \mathbb{E}\left[ \left\|\frac{\nabla f(x_t)}{\sqrt{\sqrt{\hat{v}_{t}}+\epsilon}}\right\|^2\right] + \frac{30 \eta \eta_l^3 L^2}{\epsilon} K_{\text{max}}^2\tau \sum_{t=0}^{T-1} \mathbb{E}\left[\left\| \nabla f(x_{t })\right\|^2\right]\\
\leq -\frac{\eta\eta_l}{8\epsilon}\sum_{t=0}^{T-1} \mathbb{E}\left[\left\| \nabla f(x_{t })\right\|^2\right]
\end{gathered}\nonumber
\end{equation}

We have,

\begin{equation}
\begin{gathered}
\frac{\eta\eta_l}{8\epsilon}\sum_{t=0}^{T-1} \mathbb{E}\left[\left\| \nabla f(x_{t })\right\|^2\right]  \leq f(z_0) -\mathbb{E}\left[f(z_{T})\right]  \\
 +  \frac{30 \eta \eta_l^3 L^2 \sum_{t=0}^{T-1}\hat{K}_t^2}{\epsilon}\sigma_g^2 \\+ \left( \frac{5 \eta \eta_l^3 L^2 \sum_{t=0}^{T-1}\bar{K}_t}{\epsilon} +   \frac{2 \eta^2 \eta_l^2 L C_\beta^2 + 3 L \eta^2 \eta_l^2 + \eta \eta_l^2 G}{4 m \epsilon^2}    \sum_{t=0}^{T-1}\frac{1}{K_t} + \frac{\eta^3 \eta_l^3 L^2 \tau^2}{4 \epsilon^3 m}  \sum_{t=0}^{T-1}\frac{1}{K_t} \right)\sigma_l^2  \\
+ \left( \frac{\eta^2\eta_l^2G^2L}{2\epsilon} \left(\frac{\beta}{1-\beta}\right)^2 + \eta \eta_l G^2 \frac{\beta}{1-\beta} \right) \frac{d}{2\epsilon} + L\eta^2\eta_l^2G^2\left(\frac{\beta}{1-\beta}\right)^2 \frac{d}{4\epsilon^2}
\end{gathered}\nonumber
\end{equation}

Average with respect to $T$, we have,

\begin{equation}
\begin{gathered}
 \frac{1}{T}\sum_{t=0}^{T-1} \mathbb{E}\left[\left\| \nabla f(x_{t })\right\|^2\right]  \leq \frac{8\epsilon}{\eta\eta_l T} \left( f(z_0) -f^\ast \right)  \\
 +  240   \eta_l^2 L^2 \frac{1}{T}\sum_{t=0}^{T-1}\hat{K}_t^2 \sigma_g^2 + \frac{\Phi}{T} \\
 + \left( 40 \eta_l^2 L^2 \frac{1}{T} \sum_{t=0}^{T-1}\bar{K}_t +   \frac{4 \eta  \eta_l  L C_\beta^2 + 6 L \eta  \eta_l + 2  \eta_l G}{ m \epsilon}  \frac{1}{T}  \sum_{t=0}^{T-1}\frac{1}{K_t} + \frac{2 \eta^2 \eta_l^2 L^2 \tau^2}{ \epsilon^2 m}  \frac{1}{T} \sum_{t=0}^{T-1}\frac{1}{K_t} \right)\sigma_l^2  
\end{gathered}\nonumber
\end{equation}

We have the following,

\begin{equation}
\begin{gathered}
\Phi \triangleq    \left( \frac{\eta \eta_l G^2L}{2\epsilon} \left(\frac{\beta}{1-\beta}\right)^2 +   G^2 \frac{\beta}{1-\beta} \right) 4d + L\eta \eta_l G^2\left(\frac{\beta}{1-\beta}\right)^2 \frac{2 d}{ \epsilon }     
\end{gathered}\nonumber
\end{equation}

\end{proof}

\subsection{Proof of Corollary \ref{fedadaptive_free_uniform_arrival_convergence_rate}}

\begin{proof}[Proof of Corollary \ref{fedadaptive_free_uniform_arrival_convergence_rate}]
By setting $\eta_l=\Theta\left(\frac{1}{\sqrt{T}}\right)$, $\eta=\Theta\left(\sqrt{mK}\right)$, we have

\begin{equation}
\begin{gathered}
\Phi = \Theta\left(1\right) + \Theta\left(\sqrt{\frac{mK}{T}}\right)\\
\Phi_l = \Theta\left(\frac{K}{T}\right) +  \Theta\left(\sqrt{\frac{1}{mKT}}\right) + \Theta\left(\frac{\tau^2}{T}\right), \quad
\Phi_g = \Theta\left(\frac{K^2}{T}\right)
\end{gathered}\nonumber
\end{equation}

Plug in the complexity of $\Phi$, $\Phi_l$, and $\Phi_g$ back in, we could get,

\begin{equation}
\begin{gathered}
 \frac{1}{T}\sum_{t=0}^{T-1} \mathbb{E}\left[\left\| \nabla f(x_{t })\right\|^2\right]  \leq \Theta\left(\sqrt{\frac{1}{mKT}}\right) \left( f(z_0) -f^\ast \right)  \\
 +  \Theta\left(\frac{K^2}{T}\right) \sigma_g^2 +  \Theta\left(\frac{1}{T}\right) + \Theta\left(\sqrt{\frac{mK}{T^3}}\right) 
 + \left(\Theta\left(\frac{K}{T}\right) +  \Theta\left(\sqrt{\frac{1}{mKT}}\right) + \Theta\left(\frac{\tau^2}{T}\right)\right) \sigma_l^2  
\end{gathered}\nonumber
\end{equation}

Only keep the dominant terms, we have the convergence rate as,

\begin{equation}
\begin{gathered}
 \frac{1}{T}\sum_{t=0}^{T-1} \mathbb{E}\left[\left\| \nabla f(x_{t })\right\|^2\right]  = \mathcal{O}\left(\sqrt{\frac{1}{mKT}}\right) +  \mathcal{O}\left(\frac{K^2}{T}\right) + \mathcal{O}\left(\frac{\tau^2}{T}\right)
\end{gathered}\nonumber
\end{equation}

\end{proof}

\section{More Experimental Results}
\label{sec:appendix_more_exp}

We leave extra experimental results here. Basically, these are experiments with different concentration parameter $\alpha$, randomness $R$, maximum delay $\tau$. All experiments demonstrate the consistent superiority of our proposed approaches.

Figure \ref{fig:cifar10_resnet_adaptive_delay_10_result_appen}: Training/Testing Curves in Figure \ref{subfig:cifar10_resnet_adaptive_delay_10_test}, i.e. experimental results with $\tau=10$.

\begin{figure*}[h]
\vspace*{-12pt}
\centering
\subfigure[Training Curves]{
\hspace{0pt}
\includegraphics[width=.35\textwidth]{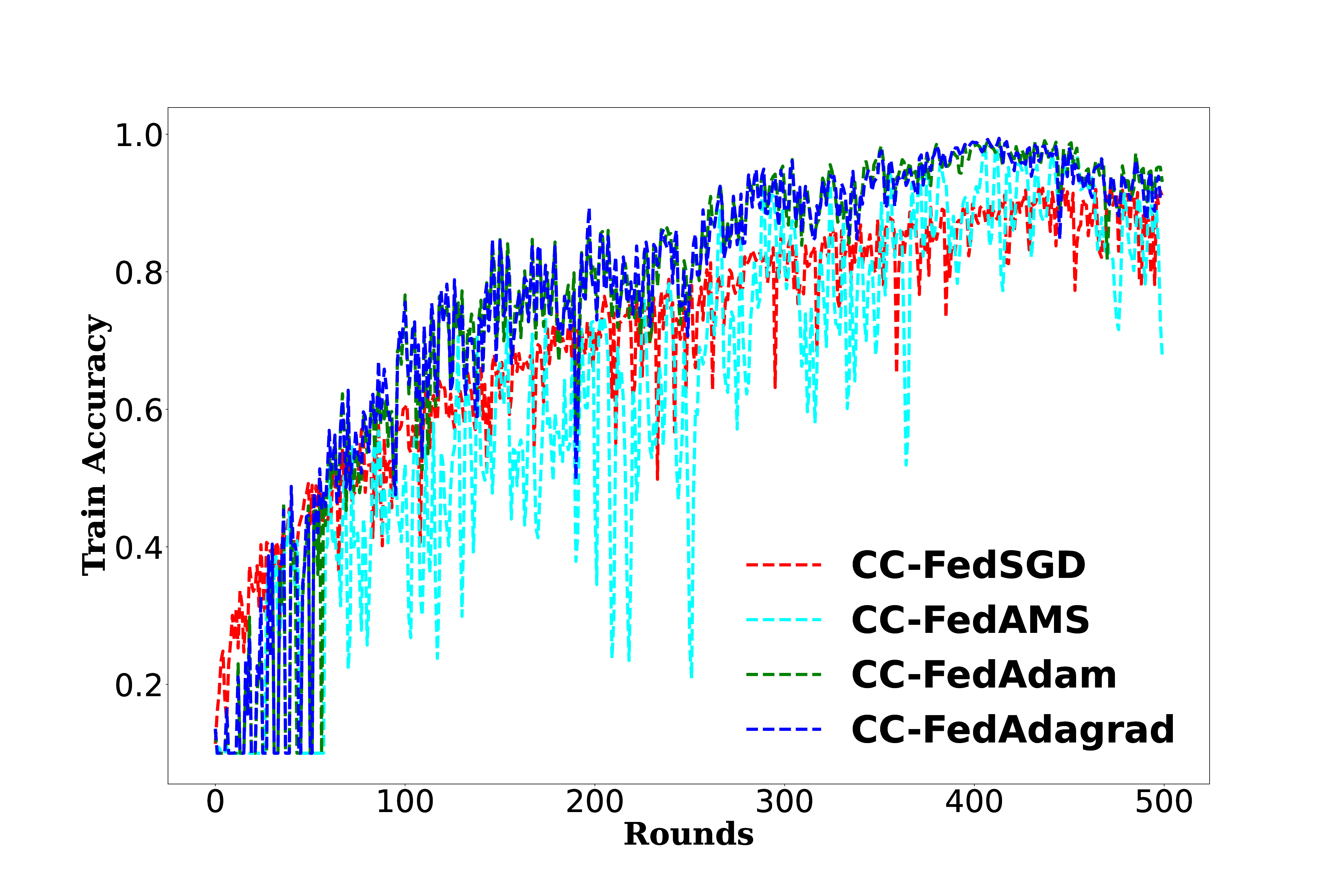}
\label{subfig:cifar10_resnet_adaptive_delay_10_train_appen}
}
\subfigure[Testing Curves]{
\hspace{0pt}
\includegraphics[width=.35\textwidth]{figs/cifar10_resnet_adaptive_delay_10_test.eps}
\label{subfig:cifar10_resnet_adaptive_delay_10_test_appen}
}
\vspace*{-6pt}
\caption{Training and testing curves for various CC-Federated Adaptive Optimizers (ResNet on CIFAR-10) with $\tau=10$.}
\label{fig:cifar10_resnet_adaptive_delay_10_result_appen}
\end{figure*}

Figure \ref{fig:cifar10_resnet_delay_5_randomness_3_niid_05_result_appen}: Training/Testing Curves in Figure \ref{subfig:cifar10_resnet_delay_5_randomness_3_niid_05_test}, i.e. experimental results with $R=3$.

\begin{figure*}[h]
\vspace*{-12pt}
\centering
\subfigure[Training Curves]{
\hspace{0pt}
\includegraphics[width=.35\textwidth]{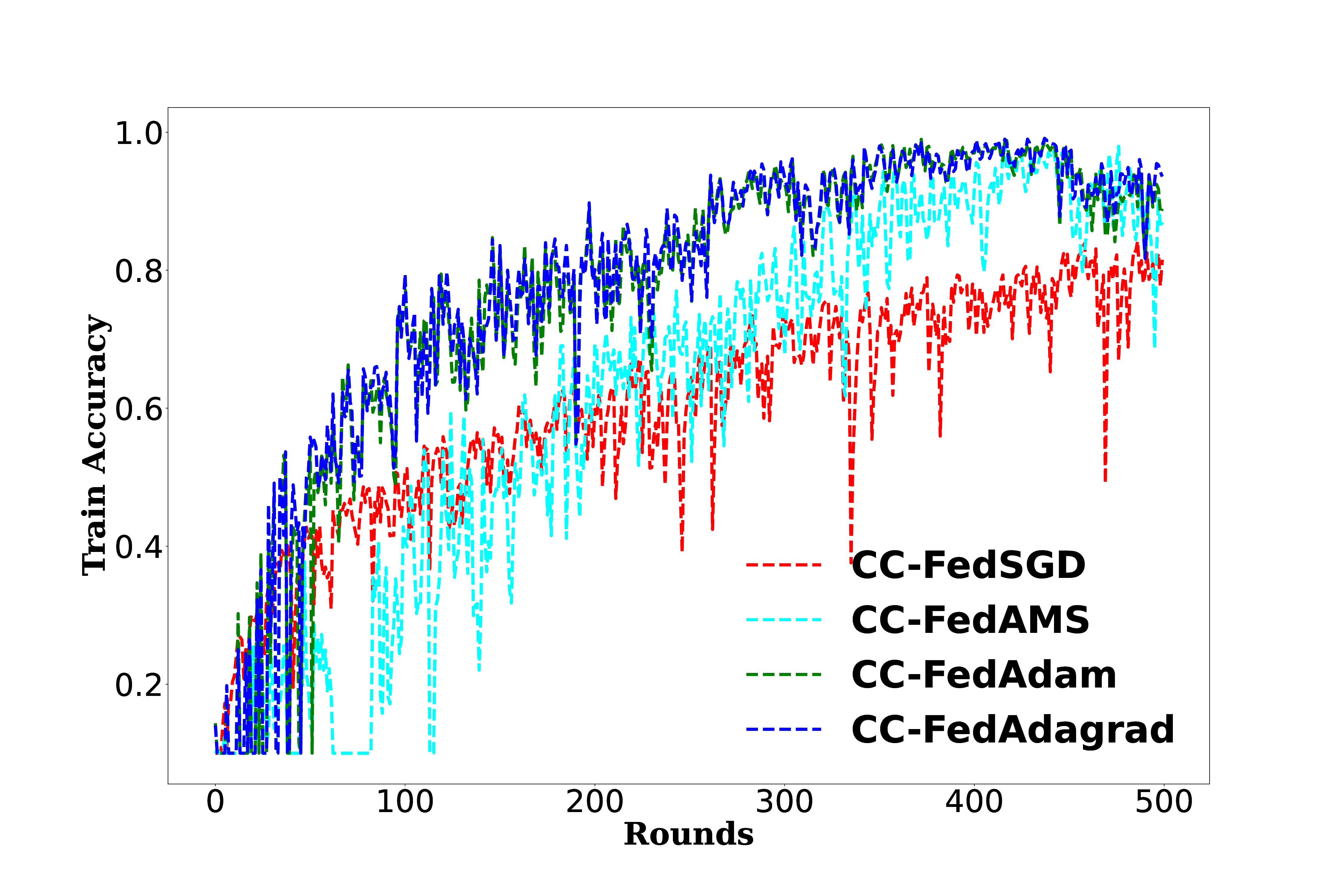}
\label{subfig:cifar10_resnet_delay_5_randomness_3_niid_05_train_appen}
}
\subfigure[Testing Curves]{
\hspace{0pt}
\includegraphics[width=.35\textwidth]{figs/cifar10_resnet_delay_5_randomness_3_niid_05_test.eps}
\label{subfig:cifar10_resnet_delay_5_randomness_3_niid_05_test_appen}
}
\vspace*{-6pt}
\caption{Training and testing curves for various CC-Federated Adaptive Optimizers (ResNet on CIFAR-10) with $R=3$.}
\label{fig:cifar10_resnet_delay_5_randomness_3_niid_05_result_appen}
\end{figure*}

Figure \ref{fig:fmnist_simple_cnn_adaptive_delay_10_niid_03_result_appen}: Training/Testing Curves in Figure \ref{subfig:fmnist_simple_cnn_adaptive_delay_10_niid_03_test}, i.e. experimental results with shallow CNN on FMNIST. Note that the CNN \cite{li2022NIIDBenchmark} we use has the following structure, two 5x5 convolution layers followed by 2x2 max pooling (the first with 6 channels and the second with 16 channels) and two fully connected layers with ReLU activation (the first with 120 units and the second with 84 units).

\begin{figure*}[h]
\vspace*{-12pt}
\centering
\subfigure[Training Curves]{
\hspace{0pt}
\includegraphics[width=.35\textwidth]{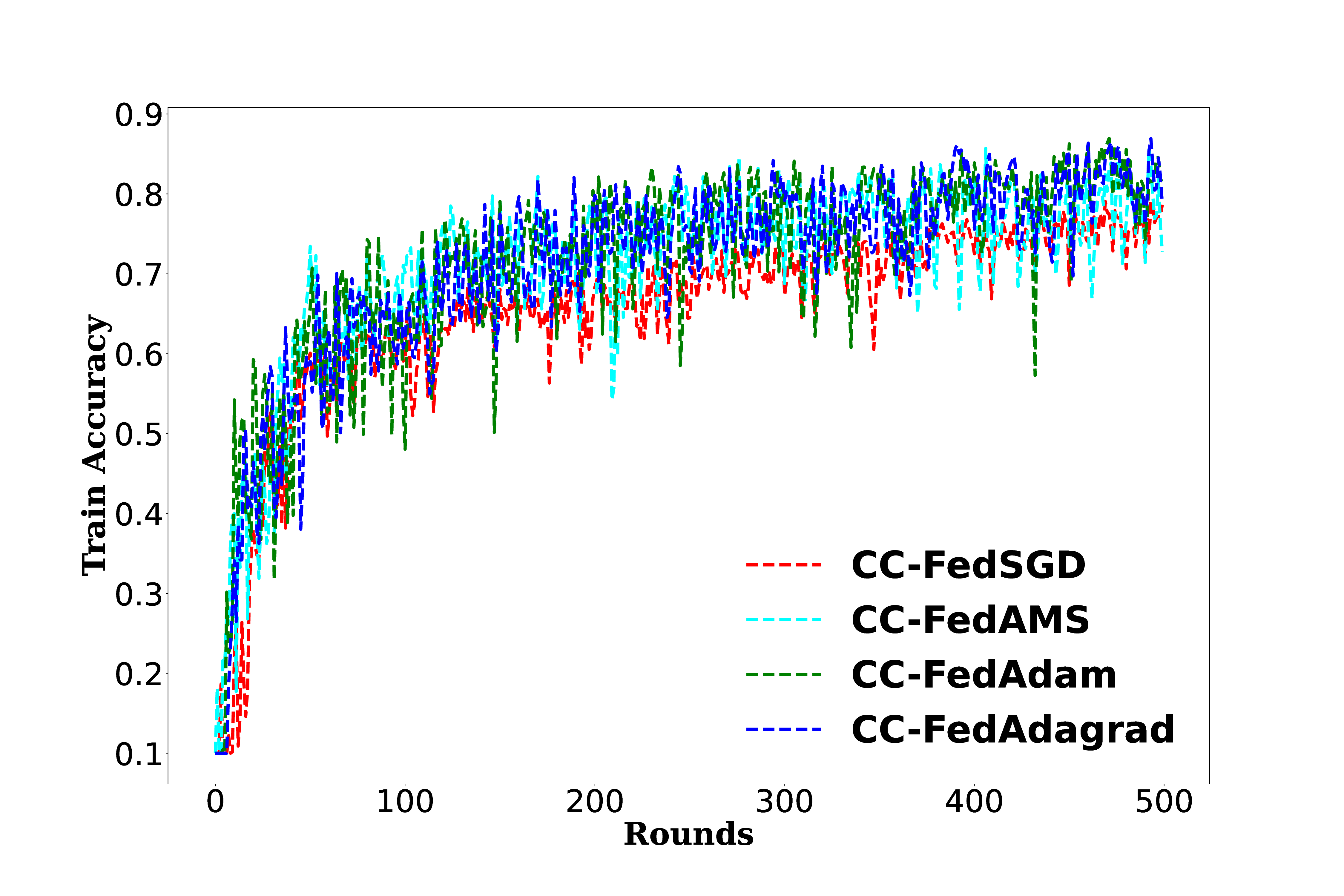}
\label{subfig:fmnist_simple_cnn_adaptive_delay_10_niid_03_train_appen}
}
\subfigure[Testing Curves]{
\hspace{0pt}
\includegraphics[width=.35\textwidth]{figs/fmnist_simple_cnn_adaptive_delay_10_niid_03_test.eps}
\label{subfig:fmnist_simple_cnn_adaptive_delay_10_niid_03_test_appen}
}
\vspace*{-6pt}
\caption{Training and testing curves for various CC-Federated Adaptive Optimizers (shallow CNN on FMNIST).}
\label{fig:fmnist_simple_cnn_adaptive_delay_10_niid_03_result_appen}
\end{figure*}

Figure \ref{fig:cifar10_resnet_delay_5_randomness_2_niid_05_result_appen}, extra experimental results with $R=2$.

\begin{figure*}[h]
\vspace*{-12pt}
\centering
\subfigure[Training Curves]{
\hspace{0pt}
\includegraphics[width=.35\textwidth]{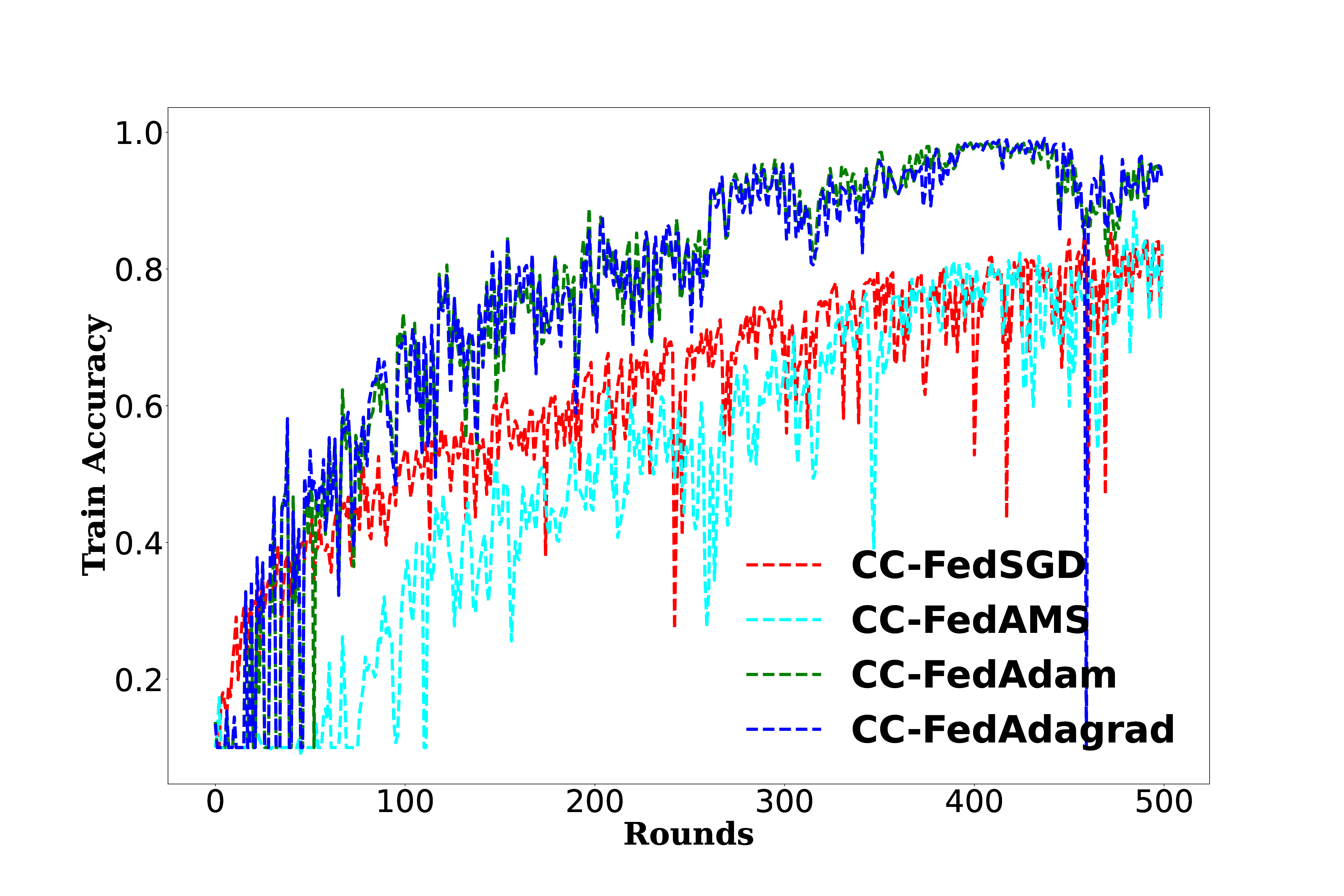}
\label{subfig:cifar10_resnet_delay_5_randomness_2_niid_05_train_appen}
}
\subfigure[Testing Curves]{
\hspace{0pt}
\includegraphics[width=.35\textwidth]{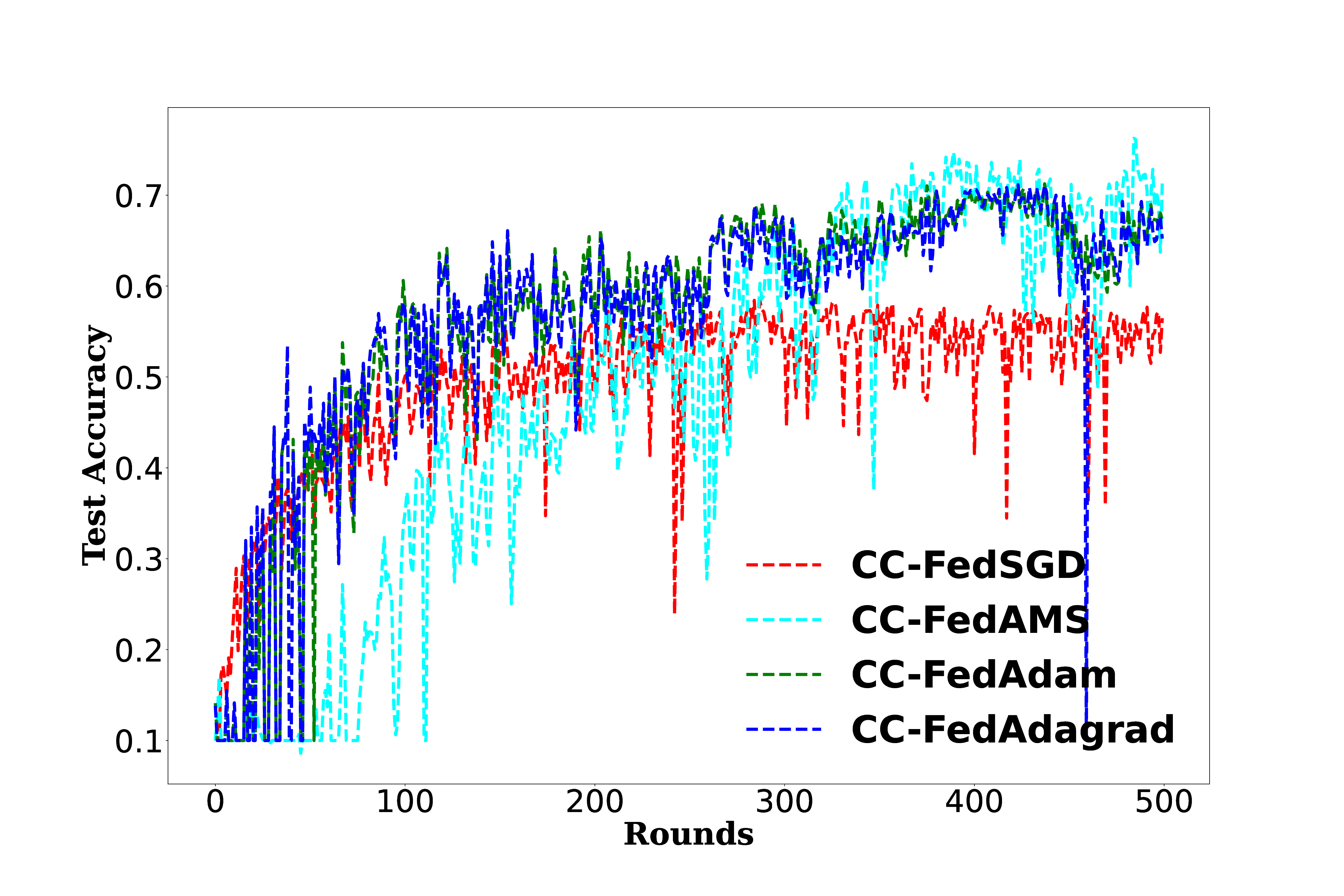}
\label{subfig:cifar10_resnet_delay_5_randomness_2_niid_05_test_appen}
}
\vspace*{-6pt}
\caption{Training and testing curves for various CC-Federated Adaptive Optimizers (ResNet on CIFAR-10) with $R=2$.}
\label{fig:cifar10_resnet_delay_5_randomness_2_niid_05_result_appen}
\end{figure*}

Figure \ref{fig:cifar10_resnet_delay_5_randomness_1_niid_05_result_appen}, extra experimental results with $R=1$.

\begin{figure*}[h]
\vspace*{-12pt}
\centering
\subfigure[Training Curves]{
\hspace{0pt}
\includegraphics[width=.35\textwidth]{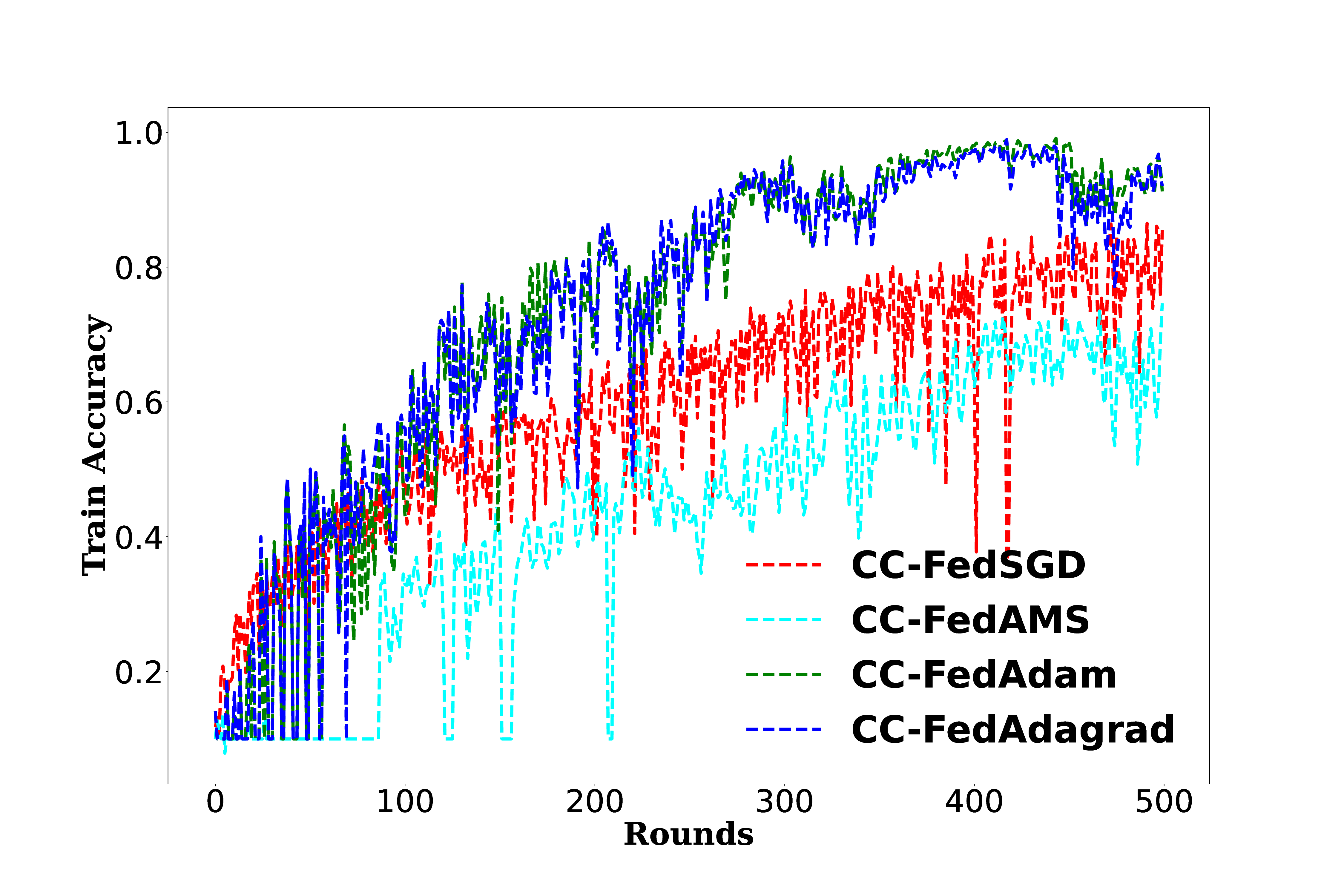}
\label{subfig:cifar10_resnet_delay_5_randomness_1_niid_05_train_appen}
}
\subfigure[Testing Curves]{
\hspace{0pt}
\includegraphics[width=.35\textwidth]{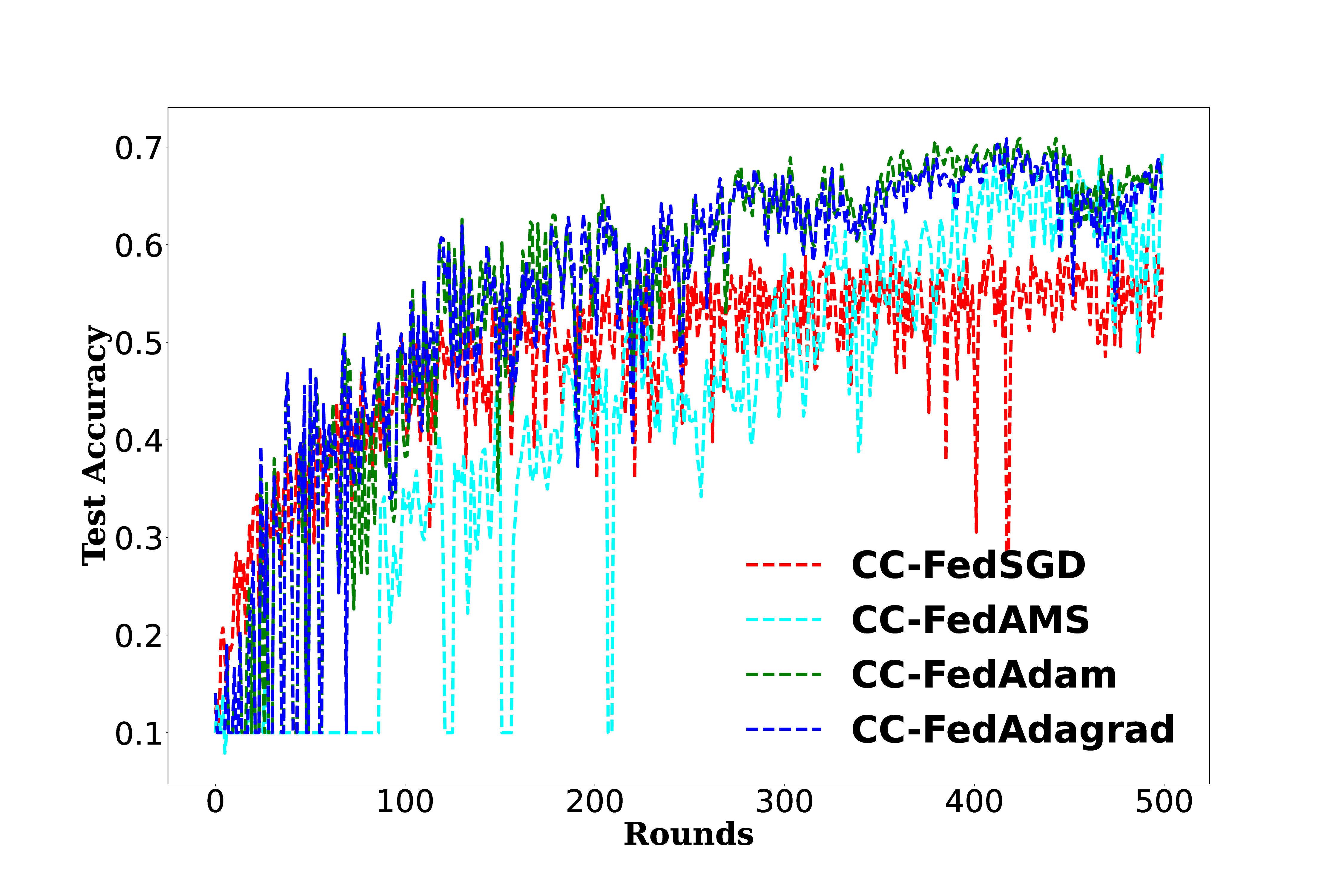}
\label{subfig:cifar10_resnet_delay_5_randomness_1_niid_05_test_appen}
}
\vspace*{-6pt}
\caption{Training and testing curves for various CC-Federated Adaptive Optimizers (ResNet on CIFAR-10) with $R=1$.}
\label{fig:cifar10_resnet_delay_5_randomness_1_niid_05_result_appen}
\end{figure*}

Figure \ref{fig:cifar10_resnet_adaptive_delay_5_niid_03_result_appen}, extra experimental results with $\alpha=0.3$.

\begin{figure*}[h]
\vspace*{-12pt}
\centering
\subfigure[Training Curves]{
\hspace{0pt}
\includegraphics[width=.35\textwidth]{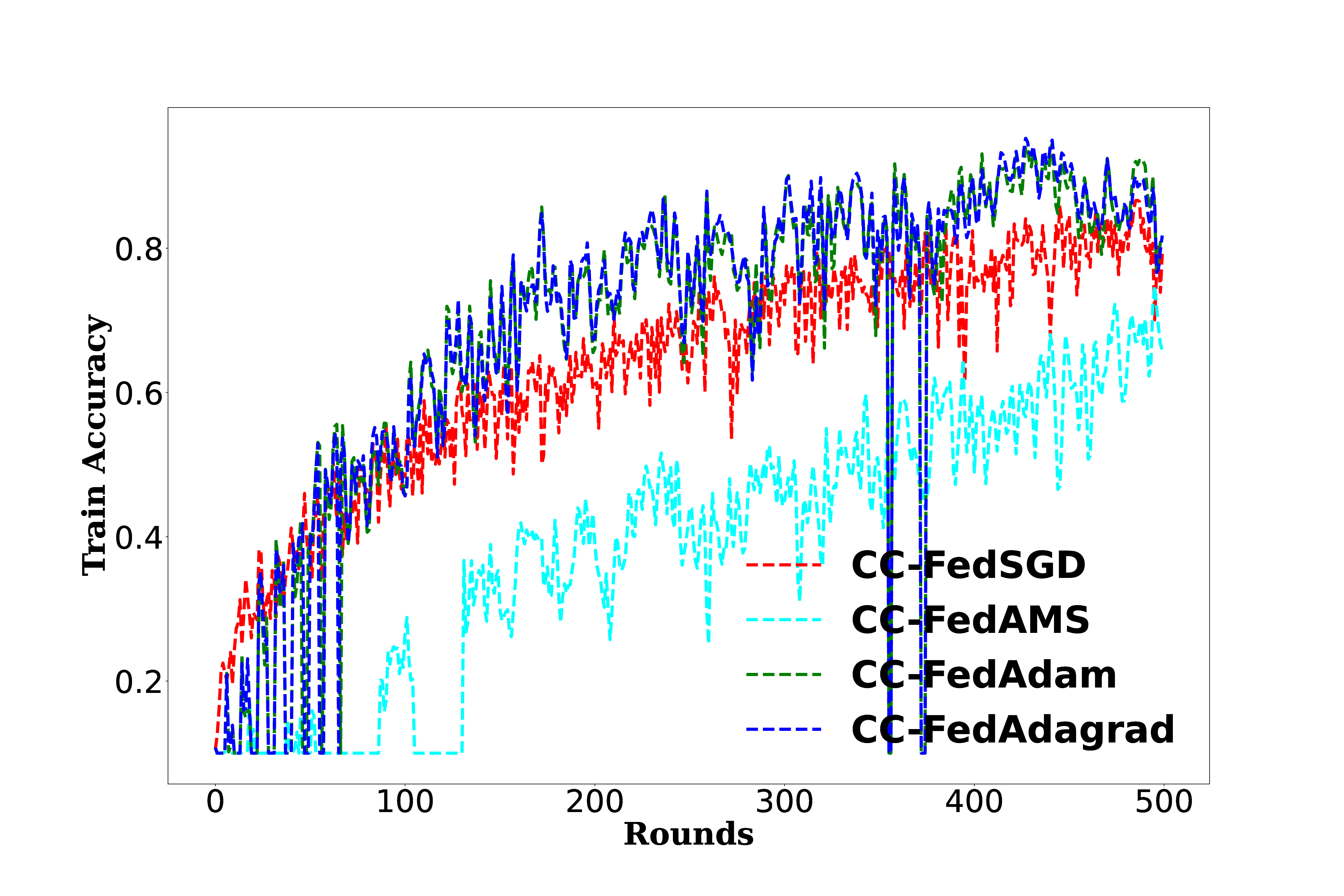}
\label{subfig:cifar10_resnet_adaptive_delay_5_niid_03_train_appen}
}
\subfigure[Testing Curves]{
\hspace{0pt}
\includegraphics[width=.35\textwidth]{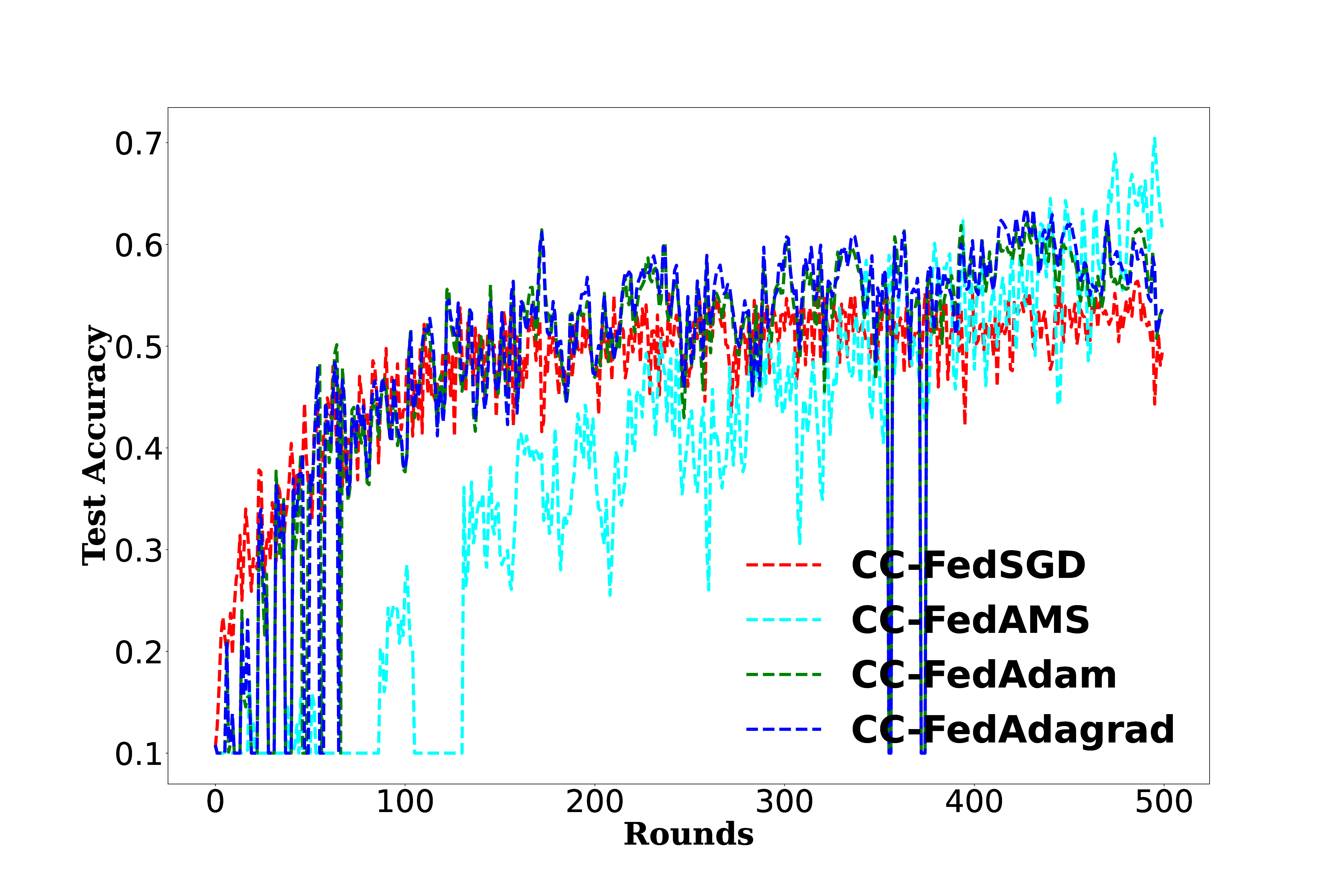}
\label{subfig:cifar10_resnet_adaptive_delay_5_niid_03_test_appen}
}
\vspace*{-6pt}
\caption{Training and testing curves for various CC-Federated Adaptive Optimizers (ResNet on CIFAR-10) with $\alpha=0.3$.}
\label{fig:cifar10_resnet_adaptive_delay_5_niid_03_result_appen}
\end{figure*}

\end{document}